\definecolor{cgreen}{rgb}{0.2,0.6,0.2}
\definecolor{darkred}{rgb}{0.4,0.0,0.0}
\definecolor{darkgreen}{rgb}{0.0,0.4,0.0}
\definecolor{darkblue}{rgb}{0.0,0.0,0.4}
\newcommand{\vx}{\boldsymbol{x}}
\newcommand{\vz}{\boldsymbol{z}}
\newcommand{\vK}{\boldsymbol{K}}
\newcommand{\vtheta}{\boldsymbol{\theta}}
\newcommand{\cA}{\mathcal{A}}
\newcommand{\cF}{\mathcal{F}}
\newcommand{\cG}{\mathcal{G}}
\newcommand{\cH}{\mathcal{H}}
\newcommand{\cL}{\mathcal{L}}
\newcommand{\cO}{\mathcal{O}}
\newcommand{\bE}{\mathbb{E}}
\DeclareMathOperator*{\argmax}{arg\,max}
\DeclareMathOperator*{\argmin}{arg\,min}
\newcommand{\RR}{\mathds{R}}
\newcommand{\Tb}{\mathbf{T}}
\newcommand{\Xsf}{\mathsf{X}}
\newcommand{\Zsf}{\mathsf{Z}}
\newcommand{\Tbpush}{\Tb_{\#}}
\newcommand{\zv}{\mathbf{z}}
\newcommand{\eg}{{e.g.}\xspace}
\newcommand{\ie}{{i.e.}\xspace}
\newcommand{\iid}{{i.i.d.}\xspace}
\newcommand{\cf}{{cf.}\xspace}
\newcommand*\rmd{\mathop{}\!\mathrm{d}}
\newtheorem{lemma}{Lemma}
\newtheorem{example}{Example}
\newtheorem{proposition}{Proposition}
\newcommand{\rmR}{\mathrm{R}}
\newcommand{\rmE}{\mathrm{E}}
\title{Learning Equivariant Energy Based Models \\ with Equivariant Stein Variational Gradient Descent}
\author{%
  Priyank Jaini$^{*}$ \\
  Bosch-Delta Lab\\
  University of Amsterdam \\
   \And
  Lars Holdijk$^{*}$ \\
  Bosch-Delta Lab \\
  University of Amsterdam \\
  \And
  Max Welling \\
  Bosch-Delta Lab \\
  University of Amsterdam
}
\begin{document}

\maketitle

\begin{abstract}
We focus on the problem of efficient sampling and learning of probability densities by incorporating symmetries in probabilistic models. We first introduce \emph{Equivariant Stein Variational Gradient Descent} algorithm -- an equivariant sampling method based on Stein's identity for sampling from densities with symmetries. Equivariant SVGD explicitly incorporates symmetry information in a density through equivariant kernels which makes the resultant sampler efficient both in terms of sample complexity and the quality of generated samples. Subsequently, we define \emph{equivariant} energy based models to model invariant densities that are learned using contrastive divergence. By utilizing our equivariant SVGD for training equivariant EBMs, we propose new ways of improving and scaling up training of energy based models. We apply these equivariant energy models for modelling joint densities in regression and classification tasks for image datasets, many-body particle systems and molecular structure generation.  
\end{abstract}

\section{Introduction}
\label{sec:intro}
Many real-world observations comprise symmetries and admit probabilistic models that are invariant to such symmetry transformations. Naturally, overlooking these inductive biases while encoding such domains will lead to models with inferior performance capabilities. In this paper, we focus on the problem of efficient sampling and learning of equivariant probability densities by incorporating symmetries in probabilistic models.

We accomplish this by first proposing \emph{equivariant Stein variational descent algorithm} in \S\ref{sec:esvgd} for sampling from invariant densities. Stein Variational Gradient Descent (SVGD) is a kernel-based inference method that constructs a set of particles iteratively along an optimal gradient path in an RKHS to approximate and sample from a target distribution. We extend SVGD for invariant densities by considering equivariant kernel functions that evolve the set of particles such that the density at each time-step is invariant to the same symmetry transformations as encoded in the kernel. We demonstrate that equivariant SVGD is more sample efficient, produces a more diverse set of samples, and is more robust compared to regular SVGD when sampling from invariant densities.

Subsequently, in \S\ref{sec:ebm}, we build \emph{equivariant} Energy Based Models EBMs for learning invariant densities given access to \iid data by leveraging the tremendous recent advances in geometric deep learning where the energy function is equivariant neural network. We train these equivariant EBMs through contrastive divergence by generating samples using equivariant SVGD. We show that incorporating the symmetries present in the data into the energy model as well as the sampler provides an efficient learning paradigm to train equivariant EBMs that generalize well beyond training data.

We empirically demonstrate the performance of equivariant EBMs using equivariant SVGD in 
\S\ref{sec:exp}. We consider real-world applications comprising of problems from many-body particle systems, molecular structure generation and, classification and generation for image datasets. 
\section{Preliminaries and Setup}
\label{sec:prelim}

In this section we set-up our main problem, introduce key definitions and notations and formulate an approach to incorporate symmetries in particle variational inference optimization methods through Stein variational gradient descent. Along the way, we also discuss directly related work and relegate a detailed discussion on previous work to \Cref{app:prev}. 

Let $\cG$ be a group acting on $\RR^d$ through a representation $\mathrm{R} : \cG \to \mathrm{GL}(d)$ where $\mathrm{GL}(d)$ is the general linear group on $\RR^d$, such that $\forall g \in \cG$, $g \to \rmR_g$. Given a target random variable $\Xsf \subseteq \RR^d$ with density $\pi$, we say that $\pi$ is $\cG$-invariant if $\forall g \in \cG$ and $\vx \in \RR^d$, $\pi(\rmR_g\vx) = \pi(\vx)$. Additionally, a function $f(\cdot)$ is $\cG$-equivariant if $\forall g \in \cG$ and $\vx \in \RR^d$, $f(\rmR_g\vx) = \rmR_g f(\vx)$. We denote with $\cO(\vx)$ the orbit of an element $\vx \in \Xsf$ defined as $\cO(\vx) := \{ \vx' : \vx' = \rmR_g\vx, \forall g \in \cG \}$. We call $\pi_{|\cG}$ the factorized density of a $\cG$-invariant density $\pi$ where $\pi_{|\cG}$ has support on the set 
$\Xsf_{|\cG}$
where the elements of $\Xsf_{\cG}$ are indexing the orbits \ie if $\vx, \tilde{\vx} \in \Xsf_{\cG}$ then $\vx \neq \rmR_g \tilde{\vx}, \forall g \in \cG$. In this paper, we are interested to incorporate \emph{inductive biases} given by symmetry groups to develop efficient sampling and learning paradigms for generative modelling. Precisely, we consider the following problems: 

\paragraph{(i) Equivariant Learning:} Given access to an \iid samples $\lbag \vx_1, \ldots, \vx_n \rbag\sim \pi$ from a $\cG$-invariant density $\pi$, we want to approximate $\pi$. \cite{rezende2019equivariant} and \cite{kohler2020equivariant} addressed this by learning an equivariant normalizing flow \citep{TabakVE10, TabakTurner13, RezendeMohamed15} that transforms a simple latent $\cG$-invariant density $q_0$ to the target density $\pi$ through a series of $\cG-$equivariant diffeomorphic transformations $\Tb = (\Tb_1, \Tb_2,\cdots, \Tb_k)$ \ie $\pi := \Tbpush q_0$. They achieved this by proving (\cf \cite[Theorem 1]{kohler2020equivariant}, \cite[Lemma1]{rezende2019equivariant}) that if $q_0$ is a $\cG$-invariant density in $\RR^d$, $\cF$ is a proper sub-group of $\cG$ \ie $\cF < \cG$, and $\Tb$ is an $\cF$-equivariant diffeomorphic transformation, then $\pi := \Tbpush q_0$ is $\cF$-invariant. However, a major drawback of this formalism is that it requires $\Tb$ to not only be a $\cF$-equivariant diffeomorphism, but computation of the inverse and Jacobian must be cheap as well. This is problematic in practice. 

\cite{kohler2020equivariant} overcame this issue by using continuous normalizing flows \citep{grathwohl2018ffjord} that define a dynamical system through a time-dependent Lipschitz velocity field $\Psi : \RR^d \times \RR_{+} \to \RR^d$ with the following system of ordinary differential equations(ODEs):
\begin{align}
    \frac{\rmd \vx(t)}{\rmd t} = \Psi(\vx(t), t), \qquad \vx(0) = \vz
\end{align}
This allows to define a bijective function $\Tb_{\Psi, t}(\vz) := \vx(0) + \int_{0}^t \Psi(\vx(t), t) \rmd t$ which leads to a push-forward density $q_t$ at each time-step $t$ satisfying $\frac{\rmd \log q_t}{\rmd t} = - \mathsf{div}\big( \Psi(\vx(t), t)\big)$, which implies to the following important result:
\begin{lemma}[{\cite[Theorem 2]{kohler2020equivariant}}]
\label{lemma:1}
Let $\Psi$ be an $\cF$-equivariant vector-field on $\RR^d$. Then, the transformation $\Tb_{\Psi, t}(\vz) := \vx(0) + \int_{0}^t \Psi(\vx(t), t) \rmd t$ is $\cF$-equivariant $\forall t \in \RR_+$. Furthermore, the push-forward $q_t := \Tb_{\Psi, t, \#}q_0$ is $\cF$-invariant $\forall t$, if $q_0$ is $\cG$-invariant and $\cF < \cG$.
\vspace{-0.1cm}
\end{lemma}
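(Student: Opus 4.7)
The plan is to handle the two assertions of \Cref{lemma:1} in sequence, using uniqueness of ODE solutions for the first and the standard change-of-variables formula for the second.

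\textbf{Equivariance of $\Tb_{\Psi,t}$.} I would fix $g \in \cF$ and $\vz \in \RR^d$, let $\vx(t) := \Tb_{\Psi,t}(\vz)$ be the unique solution of $\dot{\vx}(t) = \Psi(\vx(t), t)$ with $\vx(0) = \vz$, and define the candidate curve $\vy(t) := \rmR_g \vx(t)$. Differentiating and using linearity of $\rmR_g$ together with the $\cF$-equivariance of $\Psi$, I would compute
\begin{align*}
\dot{\vy}(t) \;=\; \rmR_g \dot{\vx}(t) \;=\; \rmR_g \Psi(\vx(t), t) \;=\; \Psi(\rmR_g \vx(t), t) \;=\; \Psi(\vy(t), t),
\end{align*}
with $\vy(0) = \rmR_g \vz$. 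Hence $\vy(t)$ solves the same ODE as $\Tb_{\Psi,t}(\rmR_g\vz)$ with identical initial datum. Because $\Psi$ is Lipschitz in its first argument, Picard--Lindel\"of gives a unique solution, so $\Tb_{\Psi,t}(\rmR_g \vz) = \rmR_g \Tb_{\Psi,t}(\vz)$, proving $\cF$-equivariance of $\Tb_{\Psi,t}$ for every $t \in \RR_+$. The same argument with $t$ replaced by $-t$ shows the inverse $\Tb_{\Psi,t}^{-1}$ is $\cF$-equivariant as well.

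\textbf{Invariance of $q_t$.} Since $\cF < \cG$ and $q_0$ is $\cG$-invariant, $q_0$ is in particular $\cF$-invariant. The pushforward density can be written via change of variables as
\begin{align*}
q_t(\vx) \;=\; q_0\!\left(\Tb_{\Psi,t}^{-1}(\vx)\right) \,\bigl|\det J_{\Tb_{\Psi,t}^{-1}}(\vx)\bigr|.
\end{align*}
For any $g \in \cF$, differentiating the equivariance relation $\Tb_{\Psi,t}^{-1}(\rmR_g \vx) = \rmR_g \Tb_{\Psi,t}^{-1}(\vx)$ yields $J_{\Tb_{\Psi,t}^{-1}}(\rmR_g \vx) \, \rmR_g = \rmR_g \, J_{\Tb_{\Psi,t}^{-1}}(\vx)$, so taking determinants and cancelling $|\det \rmR_g|$ (nonzero because $\rmR_g \in \mathrm{GL}(d)$) gives $|\det J_{\Tb_{\Psi,t}^{-1}}(\rmR_g \vx)| = |\det J_{\Tb_{\Psi,t}^{-1}}(\vx)|$. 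Combining with $\cF$-invariance of $q_0$ gives $q_t(\rmR_g \vx) = q_t(\vx)$.

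\textbf{Main obstacle.} The only subtle point I anticipate is justifying uniqueness of the ODE solution globally in $t$, which is why the lemma's hypothesis on $\Psi$ being Lipschitz (a \emph{time-dependent} Lipschitz velocity field, as stated in the preceding paragraph) is essential. The remainder of the argument is a mechanical differentiation and a change of variables; the only bookkeeping care needed is to carry the absolute-value signs through the Jacobian computation rather than assuming $\det \rmR_g = 1$, since the representation is only required to land in $\mathrm{GL}(d)$ and not the special linear group.
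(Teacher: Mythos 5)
Your proof is correct. Note that the paper does not prove \Cref{lemma:1} itself---it is imported verbatim from \cite[Theorem 2]{kohler2020equivariant}---but your argument (uniqueness of the Lipschitz ODE flow applied to the curve $\rmR_g\vx(t)$ for the equivariance of $\Tb_{\Psi,t}$, followed by the change-of-variables formula with cancellation of $|\det \rmR_g|$ and the fact that $\cG$-invariance of $q_0$ implies $\cF$-invariance since $\cF < \cG$) is exactly the standard proof given in that reference.
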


\Cref{lemma:1} conveniently provides a framework to transform any $\cG$-invariant density to an $\cF$-invariant density along a path in which each intermediate density is also $\cF$-invariant. However, equivariant normalizing flows cannot be used directly to generate samples when given access to an invariant density $\pi$ since they  require \iid samples from $\pi$ to train the flow\footnote{In \Cref{app:sampling}, we discuss a way to use equivariant normalizing flow for direct sampling given access to $\pi$.}.
\paragraph{(ii) Equivariant Sampling:}In this paper, we are also interested in solving the inference problem \ie we are interested in evaluating $\bE_{\pi}[f]$, the expectation of $f$ when given access to a $\cG$-invariant density $\pi$ which typically involves generating samples $\lbag  \vx_1, \vx_2, \cdots, \vx_n\rbag \sim \pi$. Intuitively, sampling from a $\cG$-invariant density can be reduced to sampling from its corresponding factorized distribution $\pi_{| \cG}$. This is because any set of samples $\{\tilde{\vx}_i\}_{i=1}^n \sim \pi_{|\cG}$ can be used to get samples representing $\pi$ by applying group actions from $\cG$ to $\{\tilde{\vx}_i\}_{i=1}^n$. Indeed, sampling methods like Markov Chain Monte Carlo (MCMC) \citep{brooks2011handbook} or Hybrid Monte Carlo (HMC) \cite{neal2011mcmc} and their variants, in principle, can use this paradigm to sample from an invariant density $\pi$. However, MCMC methods for approximate posterior sampling are often slow and it still remains challenging to scale them up to big data settings. An alternate to MCMC methods for approximate posterior sampling is Stein Variational Gradient Descent (SVGD) \citep{liu2016stein} which is a particle optimization variational inference method that combines the paradigms of sampling and variational inference for Bayesian inference problems.  

In SVGD, a set of $n$ particles $\{\vx_i\}_{i=1}^n \in \Xsf \subseteq \RR^d$ are evolved following a dynamical system to approximate the target (posterior) density $\pi(\vx) \propto \mathsf{exp}\big(-\rmE(\vx)\big)$ where $\rmE(\cdot)$ is the energy function. This is achieved in a series of $T$ discrete steps that transform the set of particles $\{\vx_i^0\}_{i=1}^n \sim q_0(\vx)$ sampled from a base distribution $q_0$ (\eg Gaussian) at $t=0$ using the map $\vx^{t} = \Tb(\vx):= \vx^{t-1} + \varepsilon\cdot\Psi(\vx^{t-1})$ where $\varepsilon$ is the step size and $\Psi(\cdot)$ is a vector field. $\Psi(\cdot)$ is chosen such that it maximally decreases the KL divergence between the push-forward density $q_t(\vx) = \Tbpush q_{t-1}(\vx)$ and the target $\pi(\vx)$.

If $\Psi$ is restricted to the unit ball of an RKHS $\cH^d_k$ with positive definite kernel $k: \RR^d \times \RR^d \to \RR$, the direction of steepest descent that maximizes the negative gradient of the KL divergence is given by:
\begin{align}
\label{eq:phi}
    \Psi^*_{q,\pi}(\vx):= \argmax_{\Psi \in \cH^d_k} -\nabla_{\varepsilon} \mathsf{KL}\big(q || \pi\big)|_{\varepsilon \to 0} = \bE_{\vx\sim q}[\mathsf{trace}(\cA_{\pi}\Psi(\vx))],
\end{align}
where $\cA_{\pi}\Psi(\vx) = \nabla_{\vx}\log \pi(\vx)\Psi(\vx)^\top + \nabla_{\vx}\Psi(\vx)$ is the Stein operator. Thus, an iterative paradigm can be easily implemented wherein a set of particles $\{\vx^0_1, \vx^0_2, \cdots, \vx^0_n\} \sim q_0$ are transformed to approximate the target density $\pi(\cdot)$ using the optimal update $\Psi^*_{q,\pi}(\vx)\propto \bE_{\vx'\sim q}[\cA_{\pi}k(\vx',\vx)]$. Since $\cA_{\pi}\Psi(\vx) = \nabla_{\vx}[\pi(\vx) \Psi(\vx)]/\pi(\vx)$ we have that $\bE_{\vx\sim\pi}[\cA_{\pi}\Psi(\vx)]=0$ for any $\Psi$ implying convergence when $q=\pi$. Replacing the expectation in the update with a Monte Carlo sum over the current set of particles that represent $q$ we get:
\begin{align}
    \label{eq:svgd}
    \vx^{t+1}_i \leftarrow \vx^t_i + \varepsilon \tilde{\Psi}^*(\vx^t_i), ~~ \text{where,} ~~ \tilde{\Psi}^*(\vx^t_i) := \frac{1}{n} \sum_{j=1}^n \big( \underbrace{\nabla_{\vx^t_j} k(\vx_j^t, \vx_i)}_{\text{repulsive force}} - \underbrace{k(\vx_j^t, \vx_i)\cdot \nabla_{\vx_j^t} \rmE(\vx^t_j)}_{\text{attractive force}} \big)
\end{align}

Stein variational gradient descent intuitively encourages diversity among particles by exploring different modes in the target distribution $\pi$ through a combination of the second term in \Cref{eq:svgd} which attracts the particles to high density regions using the score function and the repulsive force (first term) which ensures the particles do not collapse together. In the continuous time limit, as $\varepsilon \to 0$, \Cref{eq:svgd} results in a system of ordinary differential equations describing the evolution of particles $\{\vx^0_1, \vx^0_2, \cdots, \vx^0_n\}$ according to $\frac{\rmd \vx}{\rmd t} =\tilde{\Psi}^*(\vx)$. 

Furthermore, as shown in \cite{wang2019stein}, geometric information using pre-conditioning matrices can be incorporated in \Cref{eq:svgd} by using matrix valued kernels (\cf Definition 2.3 \citep{reisert2007learning}) leading to the following generalized form of SVGD \citep{wang2019stein}:
\begin{align}
    \label{eq:mvk}
    \vx^{t+1}_i \leftarrow \vx^t_i + \frac{\varepsilon}{n} \sum_{j=1}^n \big( \nabla_{\vx^t_j} \vK(\vx_j^t, \vx_i) - \vK(\vx_j^t, \vx_i)\cdot \nabla_{\vx_j^t} \rmE(\vx^t_j) \big),
\end{align}
where $\vK(\vx, \vx')$ is a matrix valued kernel. Matrix-valued SVGD allows to flexibly incorporate preconditioning matrices yielding acceleration in the exploration of the given probability landscape. 

SVGD has gained a lot of attention over the past few years as a flexible and scalable alternative to MCMC methods for approximate Bayesian posterior sampling. Further, it is more particle efficient since it generates diverse particles due to the deterministic repulsive force induced by kernels instead of Monte Carlo randomness. A natural question to ask is: \emph{Can we incorporate symmetry information into SVGD for more efficient sampling from invariant densities?} We answer this in the affirmative in the next section by proposing \emph{equivariant Stein variational gradient descent} algorithm for sampling from invariant densities.

\vspace{-0.4em}
\section{Equivariant Stein Variational Gradient Descent}
\label{sec:esvgd}
\vspace{-0.4em}

We begin this section by presenting the main result of this section by introducing \emph{equivariant} Stein variational gradient descent (E-SVGD) by utilizing \Cref{lemma:1} and Equations (\ref{eq:svgd}) and (\ref{eq:mvk}).

\begin{proposition}
\label{prop:esvgd}
Let $\pi$ be a $\cG$-invariant density and $\lbag \vx^0_1, \vx^0_2, \cdots, \vx^0_n \rbag \sim q_0$ be a set of particles at $t=0$ with $q_0$ being $\cF$-invariant where $\cF > \cG$. Then, the iterative update given by \Cref{eq:svgd} is $\cG$-equivariant and the density $q_{t+1}$ defined by it at time $t+1$ is $\cG$-invariant if the positive definite kernel $k(\cdot, \cdot)$ is $\cG$-invariant. The same holds for \Cref{eq:mvk} if $\vK(\cdot, \cdot)$ is $\cG$-equivariant.
\end{proposition}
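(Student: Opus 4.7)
The plan is to decouple the statement into two claims: (a) the discrete SVGD update map $\vx_i^t \mapsto \vx_i^{t+1}$ is $\cG$-equivariant, viewed as a function of the particle tuple; (b) an equivariant update map pushes a $\cG$-invariant density to a $\cG$-invariant density, which is exactly the content of \Cref{lemma:1} in its discrete pushforward form. Since $\cG < \cF$, any $\cF$-invariant $q_0$ is automatically $\cG$-invariant, so it suffices to verify (a) and then invoke (b).

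For (a), I would first record two algebraic identities that follow from symmetry of the ingredients. The $\cG$-invariance of $\pi$ gives $\rmE(\rmR_g\vx) = \rmE(\vx) + \mathrm{const}$, and differentiating via the chain rule yields $\rmR_g^\top \nabla \rmE(\rmR_g\vx) = \nabla \rmE(\vx)$. Under the standing assumption that $\rmR$ is an orthogonal representation (the typical case for rotations, reflections, permutations), $\rmR_g^{-\top} = \rmR_g$, so $\nabla \rmE$ is $\cG$-equivariant: $\nabla \rmE(\rmR_g\vx) = \rmR_g \nabla \rmE(\vx)$. Similarly, $\cG$-invariance of the scalar kernel (meaning $k(\rmR_g\vx, \rmR_g\vx') = k(\vx,\vx')$) and the chain rule give $\nabla_1 k(\rmR_g\vx, \rmR_g\vx') = \rmR_g \nabla_1 k(\vx,\vx')$.

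Armed with these two identities, I would substitute $\rmR_g\vx_j^t$ in place of each $\vx_j^t$ in \Cref{eq:svgd}. The scalar coefficients $k(\rmR_g\vx_j^t, \rmR_g\vx_i^t)$ are unchanged, while each gradient factor acquires an outer $\rmR_g$, which factors out of the Monte Carlo sum to give $\tilde{\Psi}^*(\rmR_g\vx_i^t) = \rmR_g\tilde{\Psi}^*(\vx_i^t)$. Combined with the linearity of the additive update, this yields $\rmR_g\vx_i^{t+1}$, establishing $\cG$-equivariance. The matrix-valued case in \Cref{eq:mvk} is structurally identical once we interpret $\cG$-equivariance of $\vK$ as $\vK(\rmR_g\vx, \rmR_g\vx') = \rmR_g \vK(\vx,\vx')\rmR_g^\top$ (the standard definition from \cite{reisert2007learning}): this combines with the score equivariance to make $\vK(\vx_j,\vx_i)\nabla \rmE(\vx_j)$ equivariant, while the divergence-of-kernel term needs the analogous chain-rule computation applied column by column, where the two $\rmR_g^\top$ and $\rmR_g$ on either side of $\vK$ collapse appropriately because $\rmR_g$ is orthogonal.

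The main obstacle I anticipate is the density claim. Strictly, SVGD evolves $n$ interacting particles, not a single particle under a fixed Markov kernel, so the cleanest formulation of ``$q_{t+1}$ is $\cG$-invariant'' is via the mean-field limit, in which each particle's marginal obeys the continuity equation driven by $\Psi^*_{q,\pi}$. In that regime the velocity field is $\cG$-equivariant by the same identities above, so \Cref{lemma:1} applies verbatim and propagates $\cG$-invariance of $q_t$ to all $t$. For finite $n$, the same conclusion is reached by noting that the deterministic map on $(\RR^d)^n$ defined by the simultaneous update is equivariant under the diagonal $\cG$-action; therefore, if the joint law of the particle ensemble is diagonally $\cG$-invariant at step $t$ (which holds at $t=0$ by $\cF$-invariance of $q_0$), it remains so at step $t+1$, and $\cG$-invariance of the marginal $q_{t+1}$ follows by symmetrizing out a single coordinate.
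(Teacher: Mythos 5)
Your proof is correct and follows essentially the same route as the paper's: you derive $\cG$-equivariance of $\nabla_1 k$ (resp.\ the matrix-kernel terms) and of $\nabla \rmE$ from invariance of $k$ and $\pi$, conclude equivariance of the update, and then push forward the invariance of $q_0$ (using $\cF > \cG$), which is exactly how the paper argues via \Cref{lemma:1}. Your additions --- the explicit orthogonality assumption on $\rmR_g$, the component-wise check of the matrix-valued divergence term, and the finite-$n$ diagonal-action/marginalization argument for the density claim --- simply make rigorous steps the paper leaves implicit.
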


\begin{proof}
Since the initial distribution $q_0$ is $\cF$-invariant, following \Cref{lemma:1} the update in \Cref{eq:svgd} is $\cG$-equivariant if $\Psi$ is $\cG$-equivariant. If $k(\cdot, \cdot)$ is $\cG$-invariant then $\nabla_{\vx} k(\cdot, \vx)$ is $\cG$-equivariant. Furthermore, since $\pi = \mathsf{exp}\big(-\rmE(\vx)\big)$ is $\cG$-invariant, $\nabla_{\vx}\rmE(\vx)$ is also $\cG$-equivariant. Thus, both the terms for $\Psi$ are $\cG$-equivariant if $k(\cdot, \cdot)$ is $\cG$-equivariant making the update in \Cref{eq:svgd} $\cG$-equivariant. The result follows similarly for \Cref{eq:mvk} when $\vK(\cdot, \cdot)$ is $\cG$-equivariant.
\end{proof}

Following \Cref{prop:esvgd}, we call the updates in Equations (\ref{eq:svgd}) \& (\ref{eq:mvk}) \emph{equivariant} Stein variational gradient descent when the kernel $k(\cdot, \cdot)$ (and $\vK(\cdot, \cdot)$ respectively) is invariant (equivariant) and the initial set of particles $\lbag \vx^0_1, \cdots, \vx^0_n\rbag$ are sampled from an invariant density $q_0$. Thus, all that is required to sample from a $\cG$-invariant density $\pi$ using equivariant SVGD is to construct a positive definite kernel that is $\cG$-equivariant. Let us next give a few examples for constructing in- and equivariant positive definite kernels.

\begin{example}[Invariant scalar kernel]
\label{exp:kernel}
Let $\cG$ be a finite group acting on $\RR^d$ with representation $\rmR$ such that $\forall g \in \cG, g\to \rmR_g$. Then, 
\begin{align*}
    k_{\cG}(\vx, \vx') = \sum_{\vx \in \cO(\vx)} \sum_{\vx' \in \cO(\vx')} k(\vx, \vx')
\end{align*}
is $\cG$-invariant where $k(\cdot, \cdot)$ is some positive-definite kernel. While this provides a general method to construct invariant kernels for finite groups, the double summation can be computationally expensive. In practice, usually simple kernels like RBF kernel (for rotation symmetries) or uniform kernel suffice as more practical alternatives.
\end{example}
\Cref{exp:kernel} is only restricted to finite groups and does not directly apply to continuous symmetry groups. We can construct kernels for continuous groups following \Cref{exp:kernel} by either using a Monte Carlo approximation or using a transformation that performs computations in the factorized space $\Xsf_{|\cG}$ as we show in the next example.  

\begin{example}[Continuous Symmetry Groups]
\label{exp:cont}
Let $\pi(\vx)$ be $\mathsf{SO}(2)$-invariant (\cf \Cref{fig:sinv-circ} for an example) where $\vx \in \RR^2$ \ie $\cO(\vx) := \{ \vx' : \|\vx\| = \|\vx'\|  \}$. We can either construct an invariant kernel for sampling from $\pi$ using a Monte Carlo approximation by sampling random rotations on a unit sphere \ie
\begin{align*}
    k_{\cG}(\vx, \vx') = \sum_{i, j=1}^n k(g_j\vx, g_i \vx'), \qquad g_i, g_j \in \cG, ~\forall (i, j) \in [n]\times [n]
\end{align*}
Or alternately, we can consider the function $\Phi_{\cG} : \RR^2 \to \RR$ such that $\Phi_{\cG}(\vx) = \|\vx\|$. Then, $\Phi_{\cG}(\vx)$ is $\mathsf{SO}$(2) invariant since $\Phi_{\cG}(g\vx) = \Phi_{\cG}(\vx), \forall g \in \cG$. Thus, we can now use the following kernel 
\begin{align*}
    k_{\cG}(\vx, \vx') = k\big(\Phi_{\cG}(\vx), \Phi_{\cG}(\vx')\big)
\end{align*}

\end{example}
Examples (\ref{exp:kernel}) and (\ref{exp:cont}) are both invariant scalar kernels. Let us next give an example of an equivariant matrix valued kernel for matrix valued SVGD (\cf \Cref{eq:mvk}).
\begin{wrapfigure}[12]{R}{0.5\textwidth}
  \begin{center}
    \includegraphics[width=0.48\textwidth]{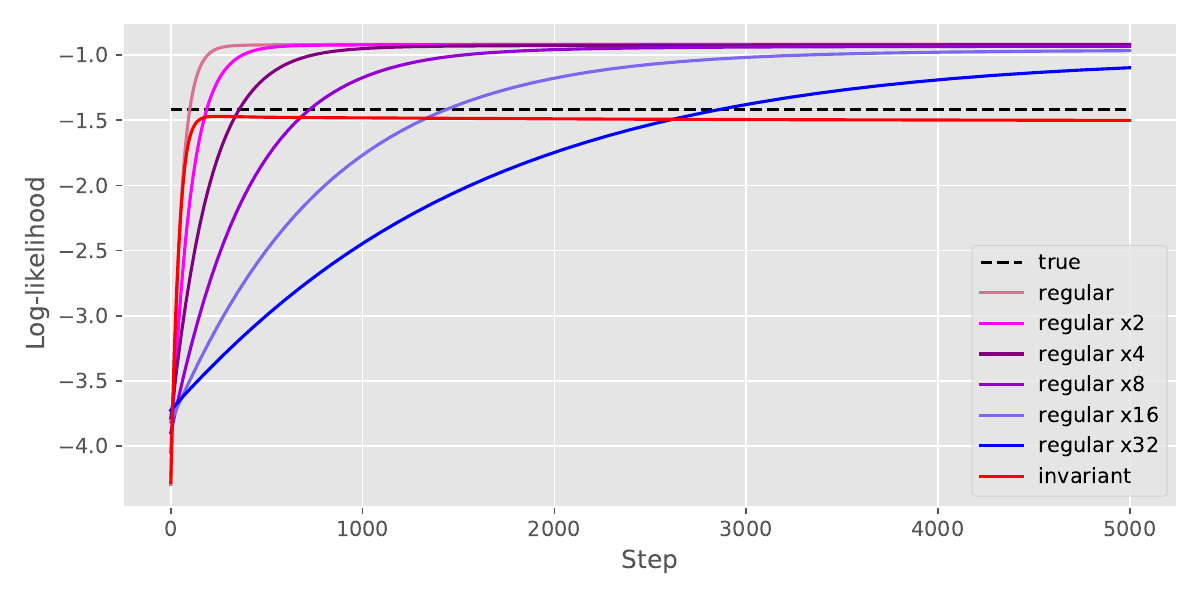}
  \end{center}
  \vspace{-0.5cm}
  \caption{Sample efficiency}
  \label{fig:sample}
\end{wrapfigure}

\begin{example}[Equivariant Matrix-Valued Kernels, \cite{reisert2007learning}]
Examples \ref{exp:kernel} and \ref{exp:cont} define an invariant scalar kernel. Following \cite{reisert2007learning}, we can also construct a $\cG$-equivariant matrix-valued kernel for the generalized update as in \Cref{eq:mvk} by defining:
\begin{align*}
    \vK(\vx, \vx') = \int_{\cG} k(\vx, g\vx') \rmR_g \rmd g
\end{align*}
where $\rmR_{g}$ is a group representation and $k(\cdot, \cdot)$ is a scalar symmetric, $\cG$-invariant function. It is easy to check that $\vK(\vx, \vx')$ is equivariant in the first argument and anti-equivariant in the second argument, leading to an equivariant $\vK(\vx, \vx')$ (\cf Proposition 2.2 \cite{reisert2007learning}).
\end{example}

\paragraph{Advantages of Equivariant Sampling:} As we discussed briefly in \Cref{sec:prelim}, SVGD works by evolving a set of particles using a dynamical system through a combination of attractive and repulsive forces among the particles that are governed by the inter-particle distance. Thus, a particle exerts these forces in a restricted neighbourhood around it. Equivariant SVGD, on the other hand, is able to model \emph{long-range interactions} among particles due to the use of equivariant kernel. Intuitively, any point $\vx$ is able to exert these forces on any other point $\vx'$ in equivariant SVGD if $\vx'$ is in the neighbourhood of any point in the orbit $\cO(\vx)$ of $\vx$. This is because for any point $\vx'$ the repulsive and attractive force terms are the same in Equations (\ref{eq:svgd}) and (\ref{eq:mvk}) for all points that are in the orbit $\cO(\vx)$. This ability to capture long-range interactions by equivariant Stein variational gradient descent subsequently makes it more efficient in sample complexity and running time with better sample quality, and makes it more robust to different initial configurations of the particles compared to vanilla SVGD. We illustrate these next with the help of the following examples: 

\textbf{(i) $C_4$-Gaussians} (\cf \Cref{fig:inv-gau} and \ref{fig:reg-gau}): This example consists of four Gaussians invariant to $C_4$ symmetry group. In this case, the group factorized distribution $\pi_{|C_4}$ is Gaussian with the original $C_4$-invariant density obtained by rotating $\pi_{|C_4}$ through the set $\{0^{\circ}, 90^{\circ}, 180^{\circ}, 270^{\circ} \}$. In \Cref{fig:inv-gau}, the first column shows the samples generated by equivariant SVGD, the second column is the projection of these samples on the group factorized space $\Xsf_{|C_4}$ and, the third column shows the samples obtained by rotating the original samples through the $C_4$-symmetry group. \Cref{fig:reg-gau} shows a similar setup for vanilla SVGD.  

\begin{wrapfigure}[19]{R}{0.3\textwidth}
  \begin{center}
    \includegraphics[width=0.3\textwidth]{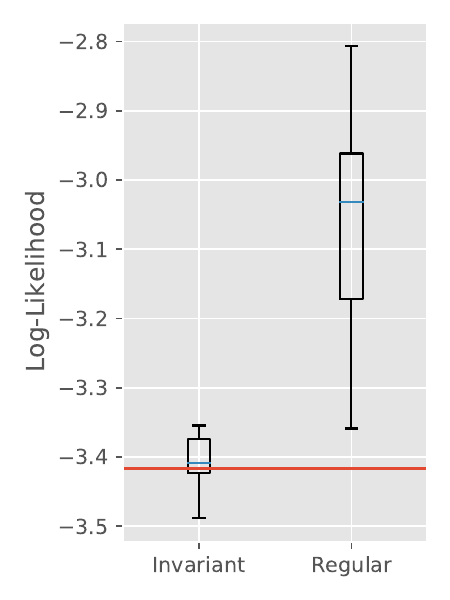}
  \end{center}
  \caption{Robustness}
  \label{fig:init}
\end{wrapfigure}

\textbf{(ii) Concentric Circles} (\cf \Cref{fig:sinv-circ} and \ref{fig:reg-cir}): This example comprises of two concentric circles invariant to the $\mathsf{SO}(2)$ symmetry group. In this case, the group factorized space is a union of two disconnected lines with length equal to the thickness of the circles. In \Cref{fig:sinv-circ}, the first column shows the samples generated by equivariant SVGD and, the second column is the projection of these samples on the group factorized space $\Xsf_{|\mathsf{SO}(2)}$. \Cref{fig:reg-cir} shows a similar setup for vanilla SVGD. 

We keep the experimental setup \ie number of particles and number iterations exactly the same for both vanilla SVGD and equivariant SVGD. For both the examples, it may seem that the original samples from the vanilla SVGD capture the target distribution better than the equivariant counterpart (first column for Figs. \ref{fig:inv-gau}-\ref{fig:reg-cir}). However, projecting the samples onto the factorized space (second column for the aforementioned figures) shows that equivariant SVGD more faithfully captures the target density compared to vanilla SVGD. Furthermore, due to its ability to model long-range interactions, we see for both examples that in the projected space of the invariant sampler, the samples are not close together whereas for vanilla SVGD most samples end up in a configuration where they reside in the same orbit. This phenomena is most evident for the concentric circles example where samples from vanilla SVGD reside on the high density region throughout the two circles resulting in all the samples being positioned on top of each other in the factorized space demonstrating its inability to capture the distribution. On the other hand, invariant SVGD prevents any sample from residing on the same orbit of another sample due to long-range repulsive force from the equivariant kernel allowing it to sample more faithfully from the invariant densities.

Secondly, we study the effect of increasing the number of particles used for vanilla SVGD for the two concentric circles example. In \Cref{fig:sample}, we plot the average log-likelihoods of the particles from vanilla SVGD and particles from invariant SVGD as a function of number of iterations and compare it to the ground-truth average log-likelihood. We run vanilla SVGD with up to 32 times more particles than invariant SVGD. As evident from the plot, invariant SVGD converges to the final configuration within the first 100 iterations with average log-likelihood closely matching the ground truth. Vanilla SVGD, on the other hand, is unable to converge to the ground truth with even 32 times more samples and 5000 iterations due to its inability to interact with particles at longer distances.

Finally, we study the effect of different configurations of the initial particles on the performance of vanilla and invariant SVGD in \Cref{fig:init} for the $C_4$-Gaussian example. As shown by \cite{zhuo2018message, zhang2020stochastic} and \cite{d2021annealed}, the particles in vanilla SVGD have a tendency to collapse to a few local modes that are closest to the initial distribution of the particles. We test the robustness of invariant SVGD to particles with initial distributions localized to different regions in the space. We plot the average log-likelihoods of the converged samples for both invariant and vanilla SVGD for all random initializations in \Cref{fig:init} and compare this to the ground truth average log-likelihood. The plot illustrates that equivariant SVGD is more robust to the initial distribution of particles than vanilla SVGD. Nevertheless, if the group-factorized space is multi-modal, equivariant SVGD might exhibit a tendency to favour one of modes. However, this can be easily alleviated by either adding some noise to the SVGD update as proposed by \cite{zhang2020stochastic} similar to SGLD \citep{welling2011bayesian} or using an annealing strategy \citep{d2021annealed}.

\begin{figure}[t]
\begin{subfigure}{.6\textwidth}
  \centering
  \includegraphics[width=.31\linewidth, keepaspectratio]{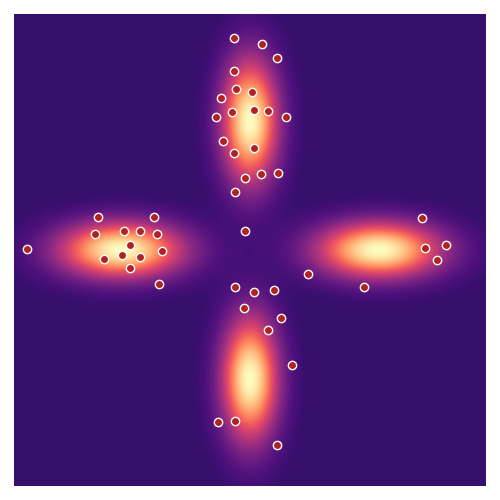}  
  \hspace{-0.3cm}
  \includegraphics[width=.31\linewidth]{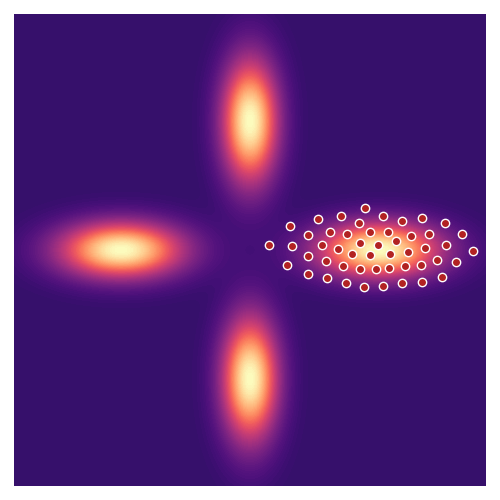} 
  \hspace{-0.25cm}
  \includegraphics[width=.31\linewidth]{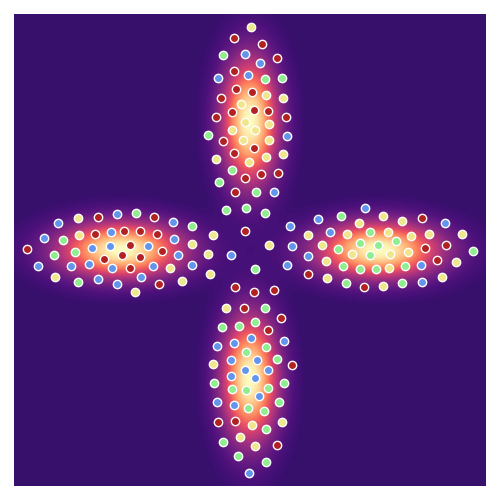} 
  \caption{$C_4$ Gaussians : Invariant SVGD sampling}
  \label{fig:inv-gau}
\end{subfigure}
\hspace{-0.8cm}
\begin{subfigure}{.4\textwidth}
  \centering
  \includegraphics[width=.47\linewidth]{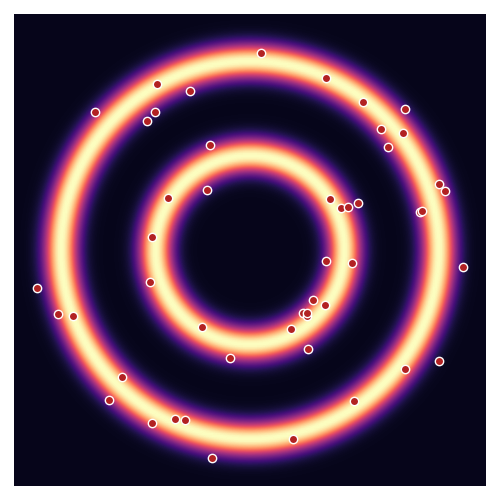}  
  \hspace{-0.3cm}
  \includegraphics[width=.47\linewidth]{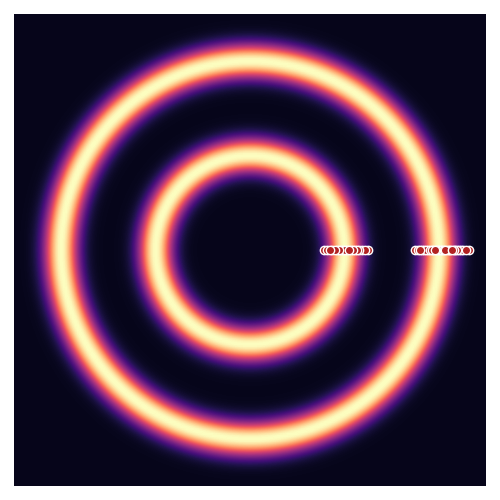}  
  \caption{Two Circles : Invariant SVGD sampling}
  \label{fig:sinv-circ}
\end{subfigure}
\newline
\begin{subfigure}{.6\textwidth}
  \centering
  \includegraphics[width=.31\linewidth]{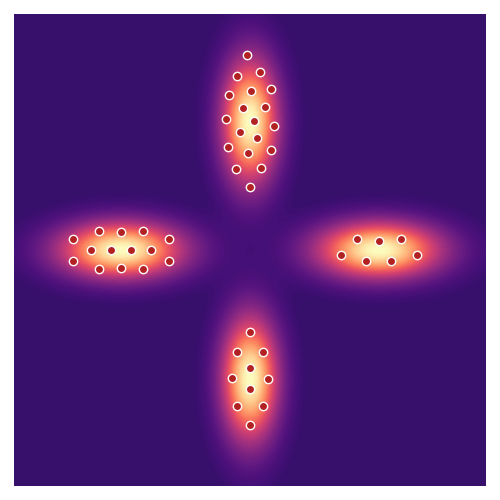} 
  \hspace{-0.3cm}
  \includegraphics[width=.31\linewidth]{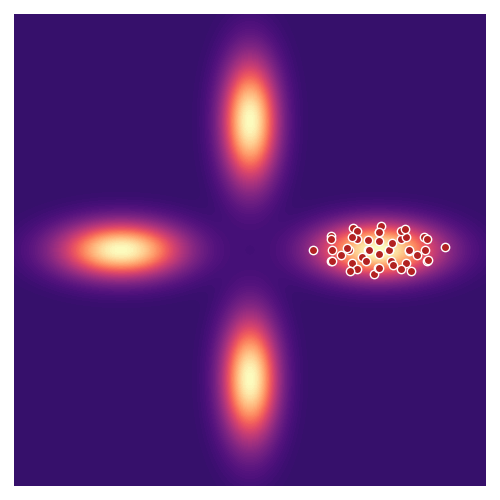} 
  \hspace{-0.25cm}
  \includegraphics[width=.31\linewidth]{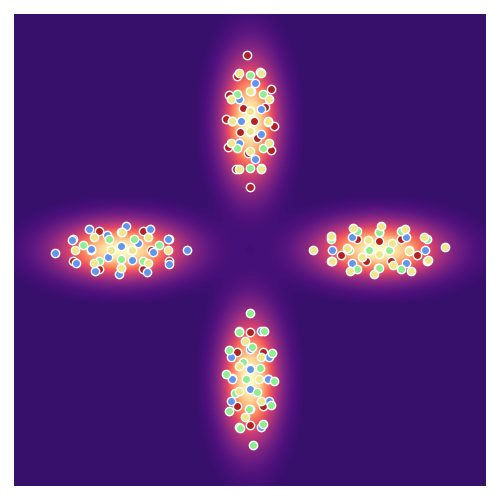} 
  \caption{$C_4$ Gaussians : Vanilla SVGD Sampling}
  \label{fig:reg-gau}
\end{subfigure}
\hspace{-0.8cm}
\begin{subfigure}{.4\textwidth}
  \centering
  \includegraphics[width=.47\linewidth]{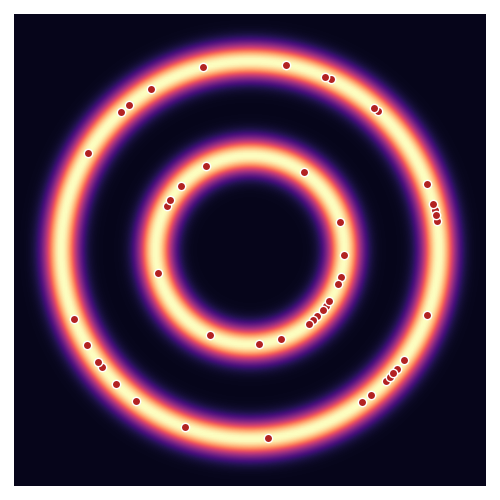}  
  \hspace{-0.3cm}
  \includegraphics[width=.47\linewidth]{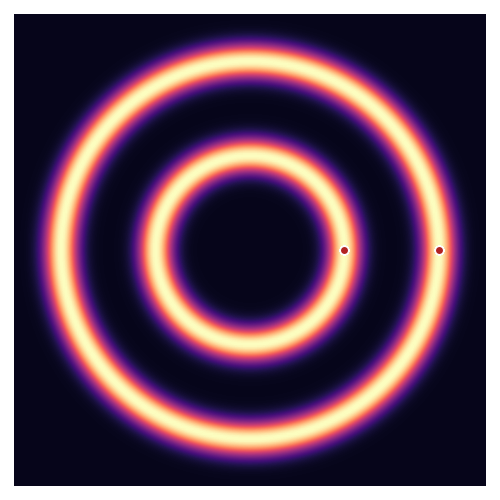} 
  \caption{Two Circles : Vanilla SVGD Sampling}
  \label{fig:reg-cir}
\end{subfigure}
\caption{\emph{Recommended to view in color}. \ref{fig:inv-gau} (Left to Right) Original Samples from E-SVGD, samples projected on to the group-factorized space and, samples obtained after applying group actions to the original samples. Yellow, Green and, Blue samples represent original samples rotated by $90^{\circ}$, $180^{\circ}$ and, $270^{\circ}$ respectively. \ref{fig:sinv-circ} (Left to Right) Original Samples from E-SVGD and, samples projected on to the group-factorized space. \ref{fig:reg-gau}-\ref{fig:reg-cir}: Same as \ref{fig:inv-gau}-\ref{fig:sinv-circ} but for vanilla SVGD. 
}
\label{fig:esvgd}
\end{figure}

\section{Equivariant Joint Energy Model}
\label{sec:ebm}
In \Cref{sec:esvgd}, we developed equivariant Stein variational gradient descent algorithm for sampling from invariant densities. In this section, we leverage the recent tremendous advances in deep geometric learning \citep{cohen2016group, dieleman2016exploiting, bronstein2021geometric} to propose \emph{equivariant energy based models} that are trained contrastively using our proposed equivariant Stein variational gradient descent algorithm to learn invariant (unnormalized) densities $\pi$ given access to \iid samples $\lbag \vx_1, \vx_2, \cdots, \vx_n \rbag \sim \pi$.


Given a set of samples $\lbag \vx_1, \vx_2, \cdots, \vx_n \rbag \subseteq \RR^d$, energy-based models \citep{YannEBM} learn an energy function $\rmE_{\vtheta}(\vx) : \RR^d \to \RR$ that defines a probability distribution $\tilde{\pi}_{\vtheta}(\vx) =  \nicefrac{\mathsf{exp}\big(-\rmE_{\vtheta}(\vx)\big)}{Z_{\vtheta}}$, where $Z_{\vtheta} = \int \mathsf{exp}\big(-\rmE_{\vtheta}(\vx)\big) \rmd \vx $ is the partition function.  Unlike other popular tractable density models like normalizing flows, EBMs are less restrictive in the parameterization of the functional form of $\tilde{\pi}_{\vtheta}(\cdot)$ since the energy function does not need to integrate to one, it can be parameterized using any nonlinear function. Conveniently, if $\pi$ is $\cG$-invariant, we can use the existing equivariant deep network architectures to parameterize $\rmE_{\theta}(\cdot)$ to encode the symmetries into the energy network. Such an equivariant energy network defines an \emph{equivariant} energy based model. EBMs are usually trained by maximizing the log-likelihood of the data under the given model:
\begin{align}
    \label{eq:mle_ebm}
    \vtheta^* := \argmin_{\vtheta} \mathcal{L}_{\mathsf{ML}}(\vtheta) = \bE_{\vx \sim \pi}\big[-\log \tilde{\pi}_{\vtheta}(\vx) \big]
\end{align}
However, evaluating $Z_{\vtheta}$ is intractable for most (useful) choices of $\rmE_{\vtheta}(\cdot)$ which makes learning EBMs via maximum likelihood estimation problematic. Contrastive divergence \citep{hinton2006unsupervised} provides a paradigm to learn EBMs using maximum likelihood estimation without needing to compute $Z_{\vtheta}$ by approximating the gradient of $\nabla_{\vtheta}\cL_{\mathsf{ML}}(\vtheta)$ in \Cref{eq:mle_ebm} as follows:
\begin{align}
    \label{eq:grad}
    \nabla_{\vtheta}\cL_{\mathsf{ML}}(\vtheta) \approx \bE_{\vx^+ \sim \pi}\big[\nabla_{\vtheta} \rmE_{\vtheta}(\vx^+) \big] - \bE_{\vx^- \sim \tilde{\pi}_{\vtheta}}\big[\nabla_{\vtheta} \rmE_{\vtheta}(\vx^-) \big]
\end{align}

Intuitively, the gradient in \Cref{eq:grad} drives the model such that it assigns higher energy to the negative samples $\vx^-$ sampled from the current model and decreases the energy of the positive samples $\vx^+$ which are the data-points from the target distribution. Since, training an EBM using MLE requires sampling from the current model $\tilde{\pi}(\vtheta)$, successful training of EBMs relies heavily on sampling strategies that lead to faster mixing. Fortuitously, since $\rmE_{\vtheta}(\cdot)$ in our present setting is $\cG$-equivariant, we propose to use our equivariant sampler for more efficient training\footnote{compared to using a regular sampler with no encoded symmetries.} of the equivariant energy based model.

\begin{wrapfigure}[11]{r}{0.59\textwidth}
\vspace{-6mm}
\begin{algorithm}[H]
\label{algo:ebm}
    \KwIn{$\lbag \vx^+_1, \vx^+_2, \cdots, \vx^+_m\rbag \sim \pi(\vx)$}
     \While{\text{not converged}}{
 $\triangleright$ \textit{Generate samples from current eqNN model} $\rmE_{\vtheta}$
 $\lbag \vx^-_1, \vx^-_2, \cdots, \vx^-_m\rbag = \mathsf{EquivariantSVGD}(\rmE_{\vtheta}$) \;
 $\triangleright$ \textit{Optimize objective} $\cL_{\mathsf{ML}}(\vtheta)$:
 $\Delta \vtheta \leftarrow  \sum_{i=1}^m \nabla_{\vtheta}\rmE_{\vtheta}(\vx_i^+) -  \nabla_{\vtheta}\rmE_{\vtheta}(\vx_i^-)$ \;
   $\triangleright$ 
   \textit{Update} $\vtheta$ \textit{using} $\Delta \vtheta$ \textit{and Adam optimizer}
}
\caption{Equivariant EBM training}
\end{algorithm}
\end{wrapfigure}

Additionally, following \cite{grathwohl2019your}, we can extend equivariant energy based models to equivariant joint energy models. Let $\{(\vx_1, y_1), (\vx_2, y_2),\cdots, (\vx_n, y_n)\} \subseteq \RR^d\times[K]$ be a set of samples with observations $\vx_i$ and labels $y_i$. Given a parametric function $f_{\vtheta}: \RR^d \to \RR^k$, a classifier uses the conditional distribution $\tilde{\pi}_{\vtheta}(y|\vx) \propto \mathsf{exp}(f_{\vtheta}(\vx)[y])$ where $f_{\vtheta}(\vx)[y]$ is the logit corresponding to the $y^{\text{th}}$ class label. As shown by \cite{grathwohl2019your}, these logits can be used to define the joint density $\tilde{\pi}_{\vtheta}(\vx, y)$ and marginal density $\tilde{\pi}_{\vtheta}(\vx)$ as follows:
\begin{align}
    \label{eq:JEM}
    \tilde{\pi}_{\vtheta}(\vx, y) = \frac{\mathsf{exp}\big(f_{\vtheta}(\vx)[y]\big)}{Z_{\vtheta}}, \quad \text{and}, \quad \tilde{\pi}_{\vtheta}(\vx) = \frac{\sum_{y}\mathsf{exp}\big(f_{\vtheta}(\vx)[y]\big)}{Z_{\vtheta}}
\end{align}

Hence, the energy function at a point $\vx$ is given by $\rmE_{\vtheta} = -\log \sum_{y} \mathsf{exp}(f_{\vtheta}(\vx)[y])$ with joint energy $\rmE_{\vtheta}(\vx, y) = -f_{\vtheta}(\vx)[y]$. In our setting,  the joint distribution $\pi(\vx, y)$ is $\cG$-invariant in the first argument \ie $\pi(\rmR_g \vx, y) = \pi(\vx, y), \forall g \in \cG$. An example of such a setting is any image data-set where the class label does not change if the image is rotated by an angle. Using \Cref{eq:JEM}, it suffices for the function $f_{\vtheta}$ to be $\cG$-equivariant to model a $\cG$-invariant density  $\tilde{\pi}_{\vtheta}(\vx, y)$. Furthermore, a $\cG$-equivariant $f_{\vtheta}$ also makes the marginal density $\tilde{\pi}_{\vtheta}(\vx)$ and conditional density $\tilde{\pi}_{\vtheta}(y|\vx)$ $\cG$-invariant in the input $\vx$. We call such an energy model where $f_{\vtheta}$ is equivariant to a symmetry transformation to be an \emph{equivariant} joint energy model. 

We can train this model by maximizing the log-likelihood of the joint distribution as follows:
\begin{align}
    \label{eq:loss-JEM}
    \mathcal{L}(\vtheta) :&= \cL_{\mathsf{ML}}(\vtheta) + \cL_{\mathsf{SL}}(\vtheta) = \log \tilde{\pi}_{\vtheta}(\vx) + \log \tilde{\pi}_{\vtheta}(y|\vx)
\end{align}
where $\cL_{\mathsf{SL}}(\vtheta)$ is the supervised loss which is the cross-entropy loss in the case of classification. Thus, an equivariant joint energy model can now be trained by applying the gradient estimator in \Cref{eq:grad} for $\log \tilde{\pi}_{\vtheta}(\vx)$ and evaluating the gradient of $\log \tilde{\pi}_{\vtheta}(y|\vx)$ through back-propagation. Conveniently, \Cref{eq:loss-JEM} can also be used for semi-supervised learning with $\cL_{\mathsf{SL}}((\vtheta))$ substituted with the appropriate supervised loss \eg MSE for regression. 

Let us end this section with an empirical example for learning a mixture of $C_4$-Gaussians (\Cref{fig:cond-ebm}) as shown in row two of the leftmost column of \Cref{fig:cond-ebm}. The innermost $C_4$-Gaussian defines the class conditional probability $\pi(\vx|y=0)$ (row 3) and the outer $C_4$-Gaussian defines $\pi(\vx|y=1)$ (row 4). We learn a non-equivariant joint EBM using vanilla SVGD (\cf \Cref{fig:cond-ebm} center column) and an equivariant joint EBM using equivariant SVGD (\cf \Cref{fig:cond-ebm} right column) keeping the number of iterations and particles the same for training. In \Cref{fig:cond-ebm}, we plot the decision boundaries learned by the model in the top row. The star marked samples in the figure are the samples generated by the underlying model. We plot the joint distribution and the class conditional distributions in row two-four respectively. The figure abundantly demonstrates the superior performance of an equivariant joint energy model trained using equivariant SVGD over its non-equivariant counterpart. A more detailed figure with comparisons to an equivariant joint energy model trained using vanilla SVGD is presented in \Cref{app:jem_gaussian}.

\newcommand{\fw}{0.31\linewidth}
\begin{wrapfigure}[32]{r}{0.49\linewidth}
    \centering
\begin{subfigure}[b]{\fw}
    \vspace{-2mm}
    \includegraphics[width=\textwidth]{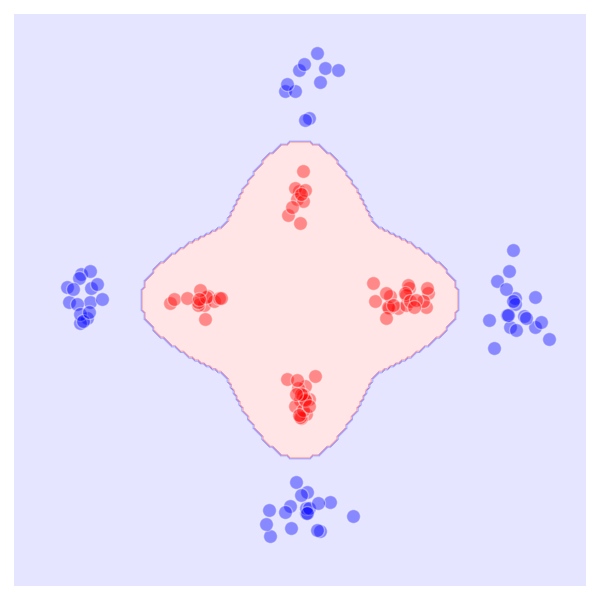}
\end{subfigure}
\begin{subfigure}[b]{\fw}
    \vspace{-2mm}
    \includegraphics[width=\textwidth]{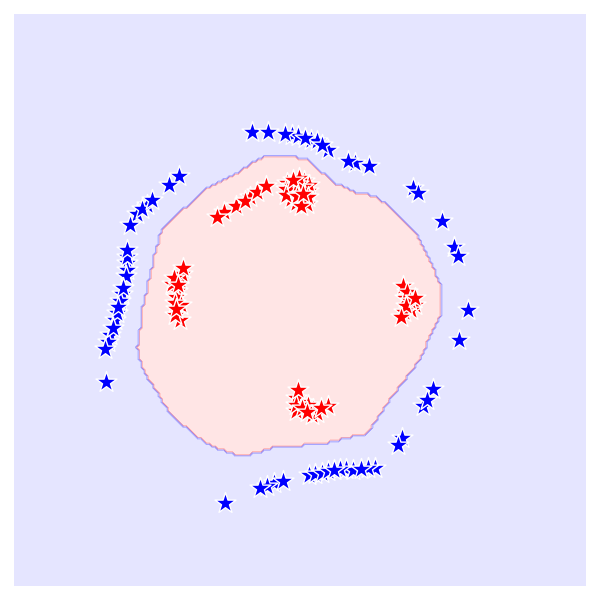}
\end{subfigure}
\begin{subfigure}[b]{\fw}
    \vspace{-2mm}
    \includegraphics[width=\textwidth]{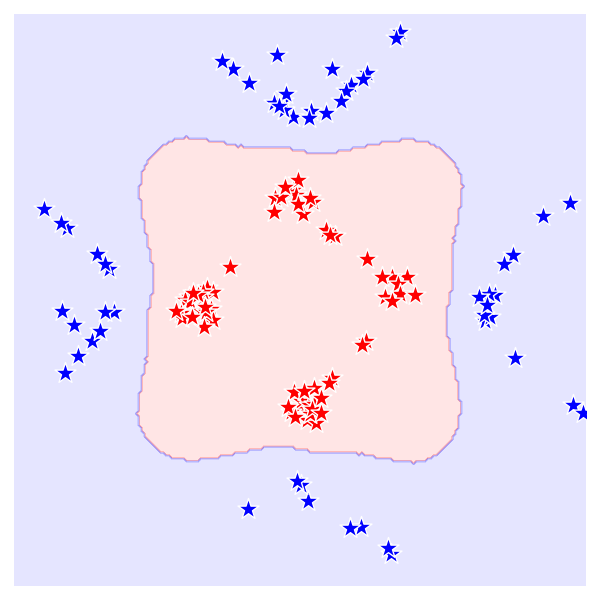}
\end{subfigure}

\begin{subfigure}[b]{\fw}
    \includegraphics[width=\textwidth]{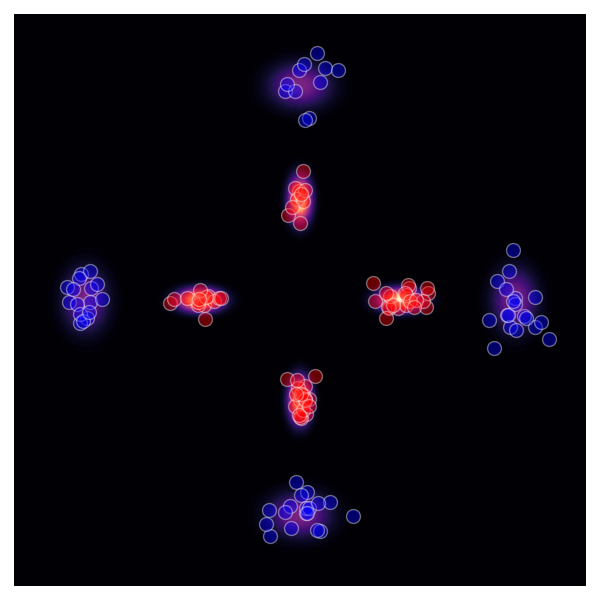}
\end{subfigure}
\begin{subfigure}[b]{\fw}
    \includegraphics[width=\textwidth]{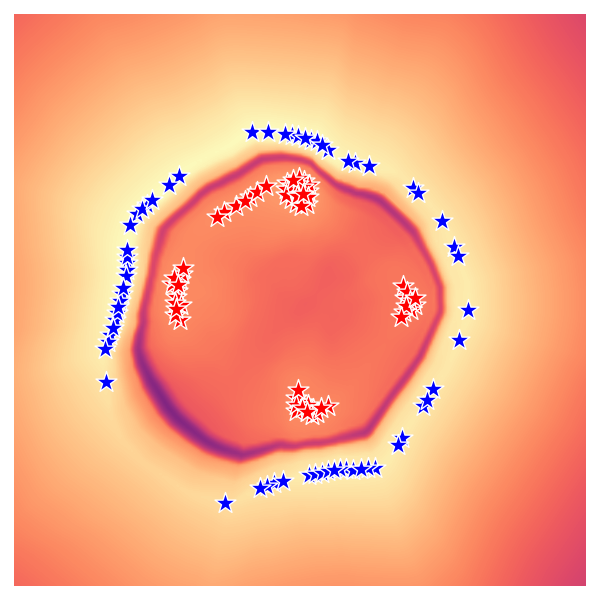}
\end{subfigure}
\begin{subfigure}[b]{\fw}
    \includegraphics[width=\textwidth]{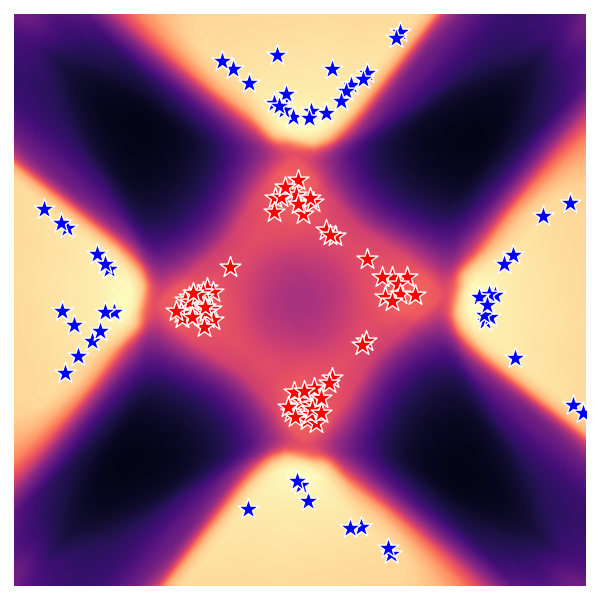}
\end{subfigure}

\begin{subfigure}[b]{\fw}
    \includegraphics[width=\textwidth]{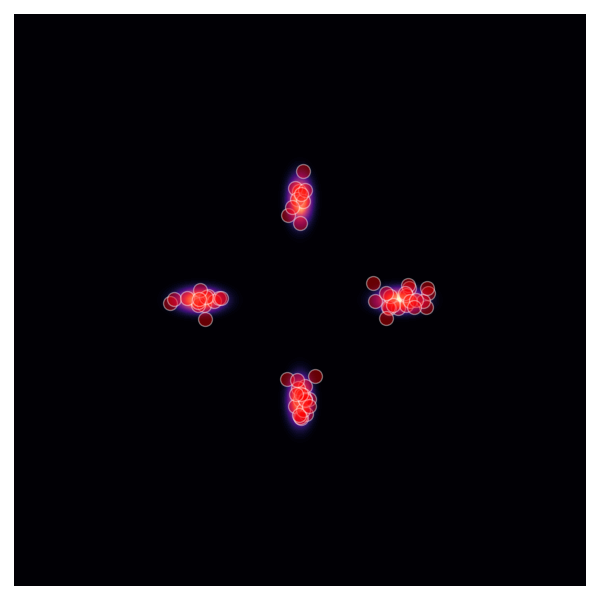}
\end{subfigure}
\begin{subfigure}[b]{\fw}
    \includegraphics[width=\textwidth]{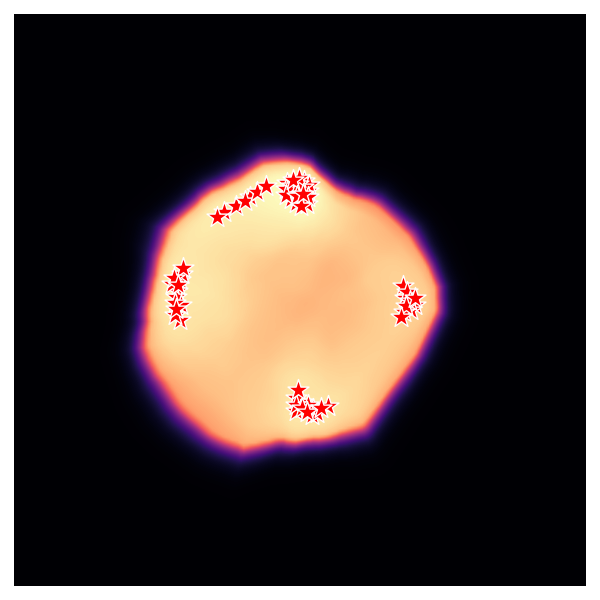}
\end{subfigure}
\begin{subfigure}[b]{\fw}
    \includegraphics[width=\textwidth]{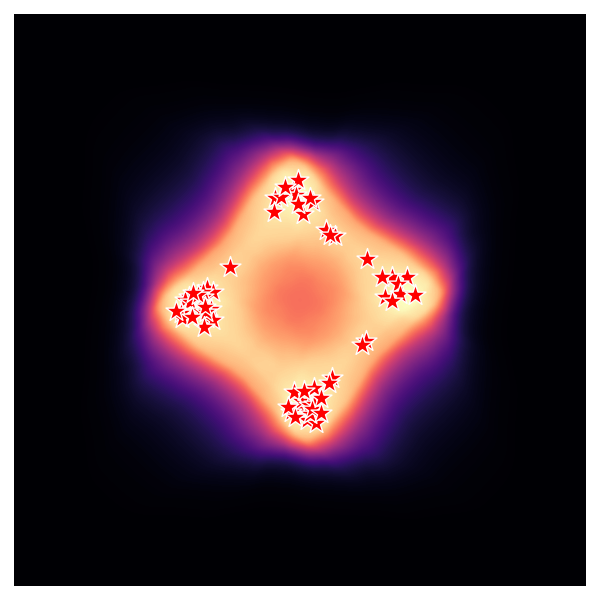}
\end{subfigure}

\begin{subfigure}[b]{\fw}
    \includegraphics[width=\textwidth]{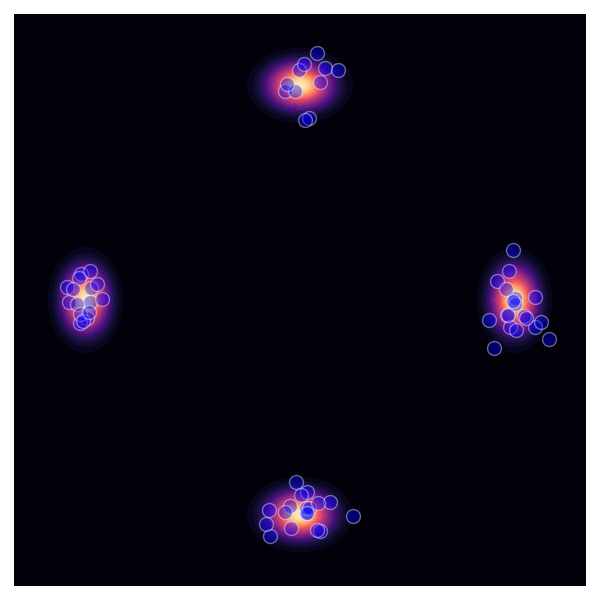}
\end{subfigure}
\begin{subfigure}[b]{\fw}
    \includegraphics[width=\textwidth]{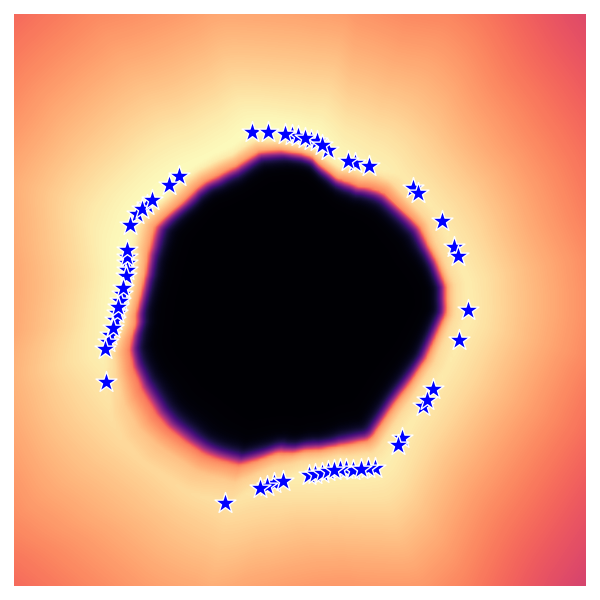}
\end{subfigure}
\begin{subfigure}[b]{\fw}
    \includegraphics[width=\textwidth]{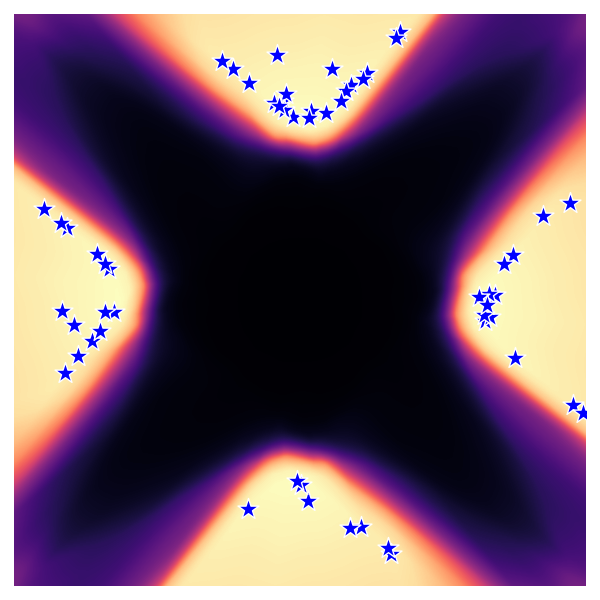}
\end{subfigure}

    \caption{$C_4$-Gaussian mixture model. \textit{Row 1:} Decision Boundary. \textit{Row 2:} Samples and energy of joint distribution $\pi(\vx, y)$. \textit{Row 3:} Samples and energy of conditional distribution $\pi(\vx|y=0)$. \textit{Row 4:} Samples and energy of conditional distribution $\pi(\vx|y=1)$. \textit{Left:} Target distribution. \textit{Middle:} Non-equivariant EBM trained with vanilla SVGD. \textit{Right:} E-EBM trained with E-SVGD.}
    \label{fig:cond-ebm}
\end{wrapfigure}
\section{Experiments}
\label{sec:exp}

In this section, we present empirical analysis of equivariant EBMs and E-SVGD through experiments to (i) reconstruct potential function describing a many-body particle system (DW-4) trained using limited number of meta-stable states, (ii) model a generative distribution of molecules (QM9) and generate novel samples and (iii) hybrid (generative \& discriminative) model invariant to rotations for FashionMNIST trained using dataset with no rotations. Due to space constraints, details about all the experiments as well as detailed figures are relegated to \Cref{app:exp}.

\textbf{DW-4:} In this many-body particles system, a double-well potential describes the configuration of four particles that is invariant to rotations, translations and, permutation of the particles. This system comprises five distinct metastable states which are characterized as the mimina in the potential function. In our experiment, we show that given access to only a single example of each metastable state configuration, an equivariant EBM trained with E-SVGD can recover other states with similar energy as those of in the training set. In \Cref{fig:dw4}, the first column shows the metastable states present in the training set. The second column are the states recovered by an EBM trained with vanilla SVGD which results in configurations that exactly copy the training set. The third column shows configurations generated by the equivariant model which are distinct from the training set but mimic the energies of the corresponding metastable states in the training set. Our setup is different from that of \cite{kohler2020equivariant}; we discuss this in detail in \Cref{app:dw4} and also produce similar results as \cite{kohler2020equivariant} for our model.     
\begin{figure}[!b]
    \centering
    \includegraphics[width=\textwidth]{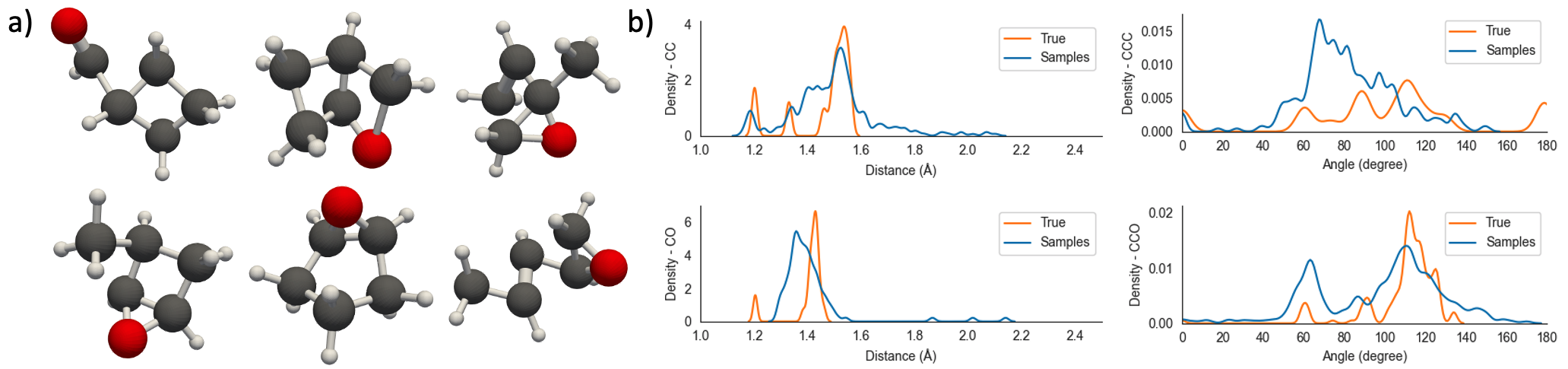}
    \caption{(a) Molecules sampled from a EBM parameterized by a E-GNN trained using E-SVGD. (b) Distribution of distance and angle between atom pairs and triplets.}
    \label{fig:qm9}
\end{figure}

\textbf{QM9:} QM9 is a molecular dataset containing over 145,000 molecules used for moleccular property prediction. However, we use this for molecular structure generation of constitutional isomers of \ch{C5H8O1}. Similar to DW-4, the molecules here are invariant to rotations, translations and, permutations of the same atoms. We encode these symmetries using E-GNN \citep{satorras2021n}, an equivariant graph neural network, to represent the energy. We trained our model via E-SVGD using \ch{C5H8O1} molecules present in the QM9 dataset and used the trained energy model to generate novel samples that are isomers of \ch{C5H8O1}. We show these novel generated molecules in Figure \ref{fig:qm9} wherein we used the relative distance between atoms as a proxy for determining the covalent bonds. Our generated molecules demonstrate the correct 3D arrangement of bonds while containing complex atom structures like aromatic rings. This is further supported by the plots comparing the radial distribution functions of the two most common heavy atom pairs to quantify our model fit to QM9 (\Cref{fig:qm9}). While, the generated molecules have a larger distributional spread, the range of values and modes -- for both angles and distances-- resemble the true distribution. We provide more details in \Cref{app:qm9}.
\begin{figure}
    \centering
\begin{subfigure}[b]{0.31\linewidth}
    \caption*{Joint distribution}
    \vspace{-2mm}
    \includegraphics[width=\textwidth]{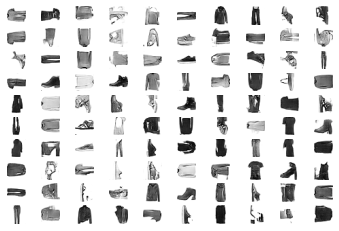}
\end{subfigure}
\hspace{0.5mm}
\begin{subfigure}[b]{0.31\linewidth}
    \caption*{Conditional distribution}
    \vspace{-2mm}
    \includegraphics[width=\textwidth]{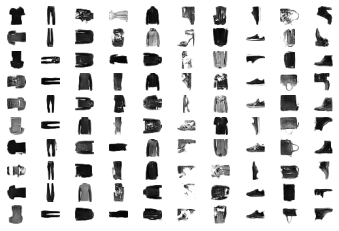}
\end{subfigure}
\begin{subfigure}[b]{0.31\linewidth}
    \caption*{Accuracy}
    \vspace{-2mm}
    \includegraphics[width=\textwidth]{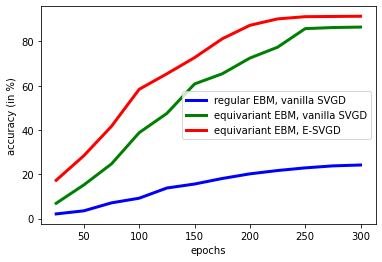}
\end{subfigure}

    \caption{\textit{Left \& Center:} Samples generated from joint and class-conditional distribution using equivariant EBM. \textit{Right:} Plot of classification accuracy vs. training iterations for equivariant and regular EBMs trained using vanilla SVGD and E-SVGD.}
    \label{fig:FMNIST}
\end{figure}

\begin{wrapfigure}[23]{r}{0.30\linewidth}
    \centering
\begin{subfigure}[b]{\fw}
    \vspace{-2mm}
    \includegraphics[width=\textwidth]{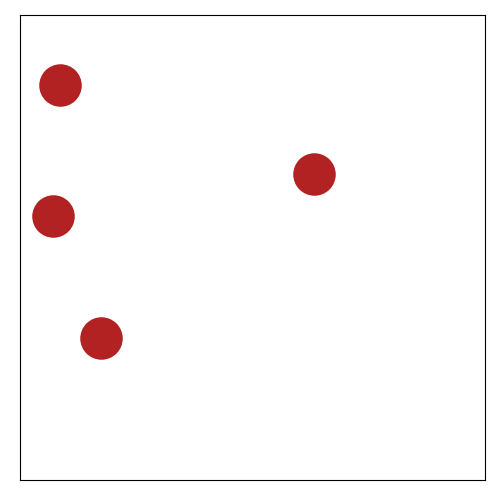}
\end{subfigure}
\begin{subfigure}[b]{\fw}
    \vspace{-2mm}
    \includegraphics[width=\textwidth]{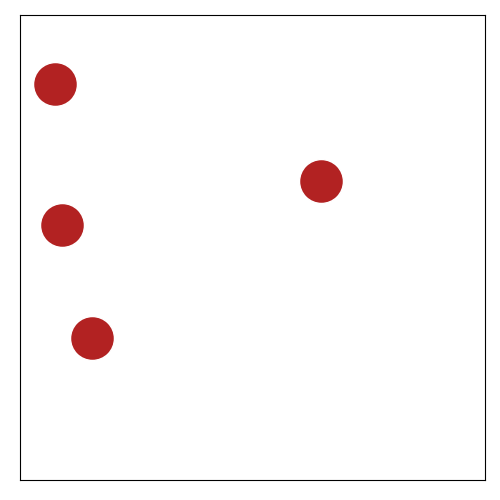}
\end{subfigure}
\begin{subfigure}[b]{\fw}
    \vspace{-2mm}
    \includegraphics[width=\textwidth]{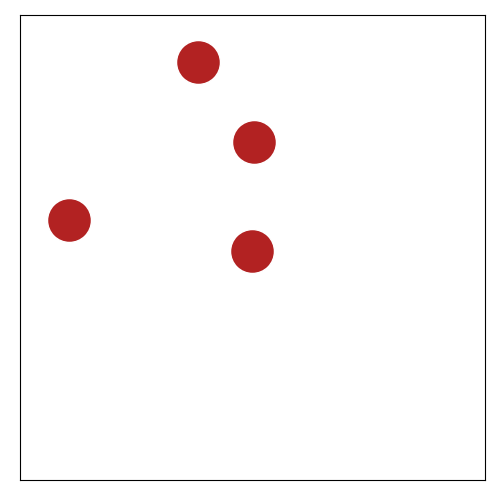}
\end{subfigure}

\begin{subfigure}[b]{\fw}
    \includegraphics[width=\textwidth]{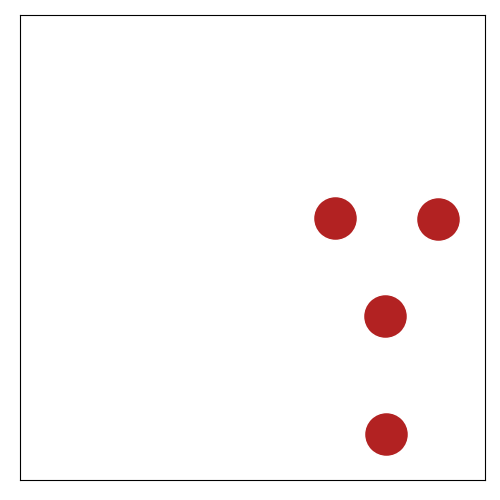}
\end{subfigure}
\begin{subfigure}[b]{\fw}
    \includegraphics[width=\textwidth]{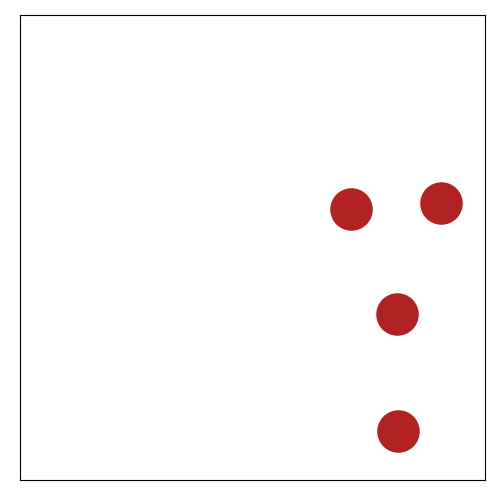}
\end{subfigure}
\begin{subfigure}[b]{\fw}
    \includegraphics[width=\textwidth]{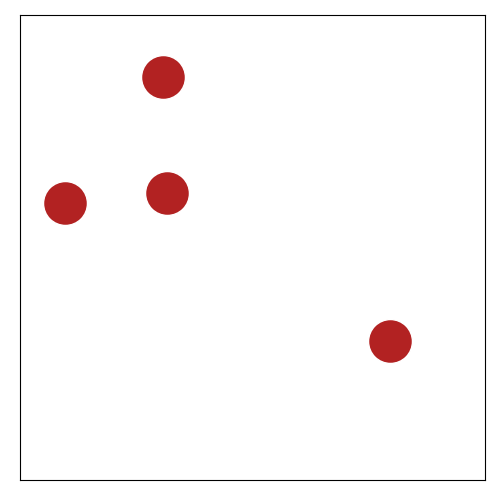}
\end{subfigure}

\begin{subfigure}[b]{\fw}
    \includegraphics[width=\textwidth]{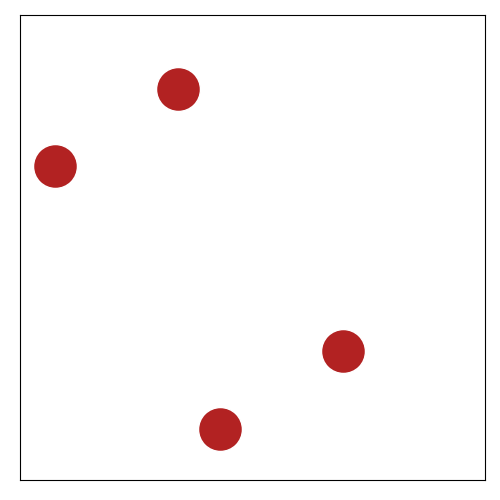}
\end{subfigure}
\begin{subfigure}[b]{\fw}
    \includegraphics[width=\textwidth]{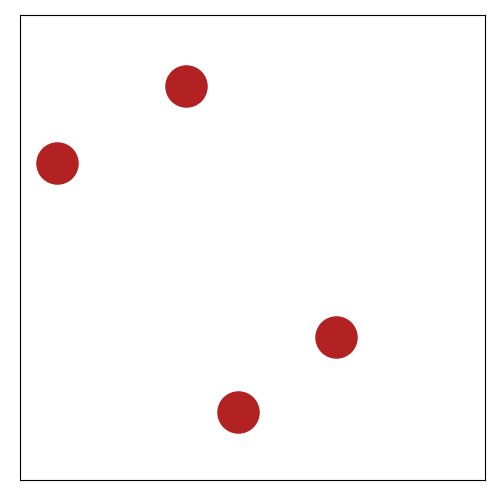}
\end{subfigure}
\begin{subfigure}[b]{\fw}
    \includegraphics[width=\textwidth]{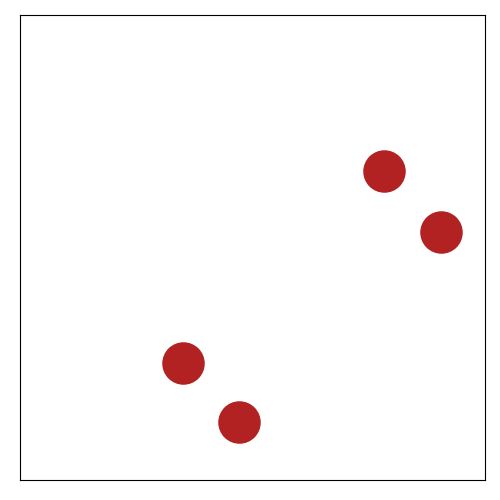}
\end{subfigure}

\begin{subfigure}[b]{\fw}
    \includegraphics[width=\textwidth]{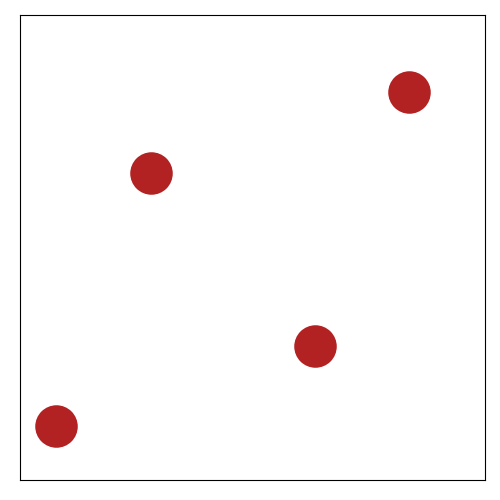}
\end{subfigure}
\begin{subfigure}[b]{\fw}
    \includegraphics[width=\textwidth]{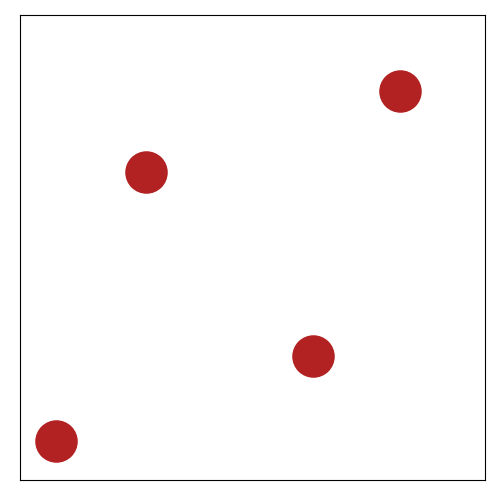}
\end{subfigure}
\begin{subfigure}[b]{\fw}
    \includegraphics[width=\textwidth]{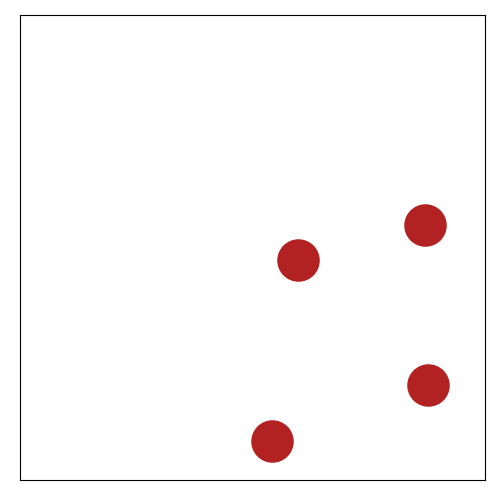}
\end{subfigure}

\begin{subfigure}[b]{\fw}
    \includegraphics[width=\textwidth]{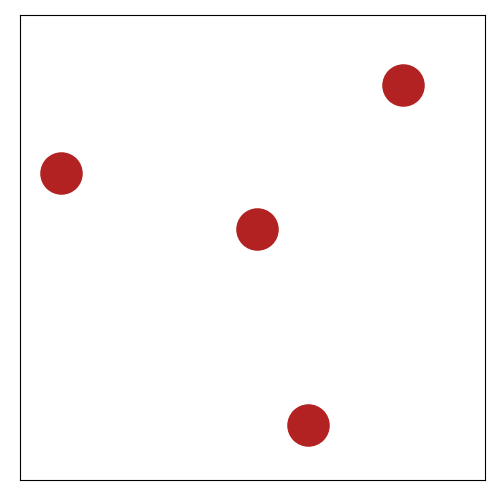}
\end{subfigure}
\begin{subfigure}[b]{\fw}
    \includegraphics[width=\textwidth]{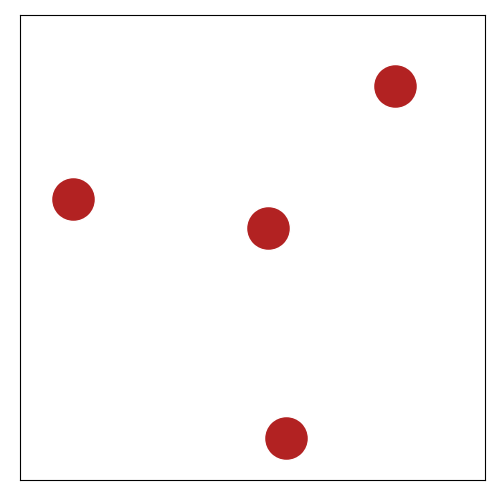}
\end{subfigure}
\begin{subfigure}[b]{\fw}
    \includegraphics[width=\textwidth]{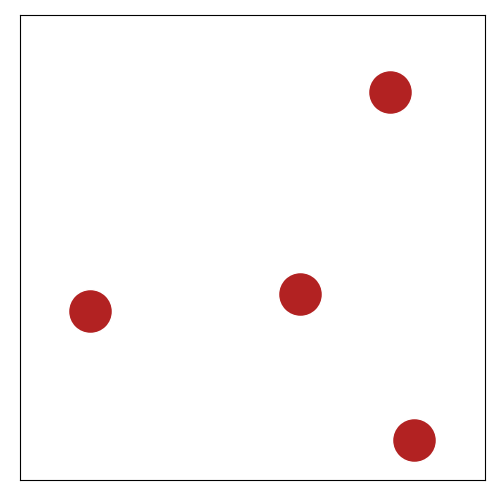}
\end{subfigure}

    \caption{\textit{Col. 1}: Samples from true potential energy. \textit{Col. 2}: Samples from EBM trained with SVGD. \textit{Col. 3}: Samples from equivariant EBM trained with E-SVGD.}
    \label{fig:dw4}
\end{wrapfigure}

\textbf{FashionMNIST:} (Details in \Cref{app:fashion}) In this experiment, we take the FashionMNIST dataset with training set consisting regular images whereas the test set is processed to contain images that are randomly rotated using the $C_4$-symmetry group. We train an equivariant energy model where the energy function is a $C_4$ steerable CNN \citep{e2cnn} with both E-SVGD and vanilla SVGD. Furthermore, we also compare to an energy model with no rotation symmetries and depict the performance in terms of classification accuracy on the held out images of these three models as a function of the number of training iterations. The plot in \Cref{fig:FMNIST} shows, albeit unsurprisingly, that an equivariant energy model performs better than a regular model. Furthermore, the results also illustrate that an equivariant model trained with E-SVGD converges faster than when trained with vanilla SVGD highlighting the benefit of using E-SVGD for training equivariant EBMs. Furthermore, in \Cref{fig:FMNIST}, we show samples generated by E-SVGD using the trained equivariant EBM from the joint and the class-conditional distribution.

\vspace{-0.4em}
\section{Discussion and Conclusion}
\label{sec:con}
\vspace{-0.4em}
In this paper, we focused on incorporating \emph{inductive bias} in the form of symmetry transformations using equivariant functions for sampling and learning invariant densities. We first proposed \emph{equivariant} Stein variational gradient descent algorithm for sampling from invariant densities by using equivariant kernels which affords many benefits in terms of efficiency due to its ability to model long-range interactions between particles. However, a major limitation of Stein variational gradient descent algorithm in general is its sensitivity to the kernel hyper-parameters. An interesting future work might be to develop strategies to either adapt or learn these hyper-parameters while running the SVGD dynamics.

\looseness=-1

Subsequently, we proposed \emph{equivariant} energy based models wherein the energy function is parameterized by an equivariant network. In our experiments, we leveraged the recent advances in geometric deep learning to model EBMs using steerable CNNs \citep{e2cnn} for images, equivariant graph networks \citep{satorras2021n} for representing molecules, and group equivariant networks \citep{cohen2016group} for many-body particle systems. We used equivariant SVGD to train these equivariant energy based models for modelling invariant densities and demonstrated that incorporating symmetries in the energy model as well as the sampler leads to efficient training. However, as discussed in previous works \citep{grathwohl2019your}, training EBMs using contrastive divergence and short sampling chains is often unstable and challenging. These issues remain with equivariant samplers and have to be addressed to be able to train large-scale energy based models.

\newpage

\bibliographystyle{apalike}
\bibliography{refs}

\begin{thebibliography}{}

\bibitem[Bronstein et~al., 2021]{bronstein2021geometric}
Bronstein, M.~M., Bruna, J., Cohen, T., and Veli{\v{c}}kovi{\'c}, P. (2021).
\newblock \href{https://arxiv.org/abs/2104.13478}{Geometric deep learning:
  Grids, groups, graphs, geodesics, and gauges}.
\newblock {\em arXiv preprint arXiv:2104.13478}.

\bibitem[Brooks et~al., 2011]{brooks2011handbook}
Brooks, S., Gelman, A., Jones, G., and Meng, X.-L. (2011).
\newblock {\em \href{}{Handbook of Markov Chain Monte Carlo}}.
\newblock CRC press.

\bibitem[Cohen and Welling, 2016]{cohen2016group}
Cohen, T. and Welling, M. (2016).
\newblock \href{http://proceedings.mlr.press/v48/cohenc16.html}{Group
  equivariant convolutional networks}.
\newblock In {\em International conference on machine learning}, pages
  2990--2999. PMLR.

\bibitem[D'Angelo and Fortuin, 2021]{d2021annealed}
D'Angelo, F. and Fortuin, V. (2021).
\newblock \href{https://arxiv.org/abs/2101.09815}{Annealed Stein Variational
  Gradient Descent}.
\newblock {\em arXiv preprint arXiv:2101.09815}.

\bibitem[Dieleman et~al., 2016]{dieleman2016exploiting}
Dieleman, S., De~Fauw, J., and Kavukcuoglu, K. (2016).
\newblock \href{http://proceedings.mlr.press/v48/dieleman16.html}{Exploiting
  cyclic symmetry in convolutional neural networks}.
\newblock In {\em International conference on machine learning}, pages
  1889--1898. PMLR.

\bibitem[Du and Mordatch, 2019]{du2019implicit}
Du, Y. and Mordatch, I. (2019).
\newblock \href{https://arxiv.org/abs/1903.08689}{Implicit generation and
  modeling with energy based models}.
\newblock {\em NeurIPS}.

\bibitem[Duncan et~al., 2019]{duncan2019geometry}
Duncan, A., N{\"u}sken, N., and Szpruch, L. (2019).
\newblock \href{https://arxiv.org/abs/1912.00894}{On the geometry of Stein
  variational gradient descent}.
\newblock {\em arXiv preprint arXiv:1912.00894}.

\bibitem[Finn et~al., 2016]{finn2016connection}
Finn, C., Christiano, P., Abbeel, P., and Levine, S. (2016).
\newblock \href{https://arxiv.org/abs/1611.03852}{A connection between
  generative adversarial networks, inverse reinforcement learning, and
  energy-based models}.
\newblock {\em arXiv preprint arXiv:1611.03852}.

\bibitem[Grathwohl et~al., 2018]{grathwohl2018ffjord}
Grathwohl, W., Chen, R.~T., Bettencourt, J., Sutskever, I., and Duvenaud, D.
  (2018).
\newblock \href{https://arxiv.org/abs/1810.01367}{Ffjord: Free-form continuous
  dynamics for scalable reversible generative models}.
\newblock {\em arXiv preprint arXiv:1810.01367}.

\bibitem[Grathwohl et~al., 2019]{grathwohl2019your}
Grathwohl, W., Wang, K.-C., Jacobsen, J.-H., Duvenaud, D., Norouzi, M., and
  Swersky, K. (2019).
\newblock \href{https://arxiv.org/abs/1912.03263}{Your classifier is secretly
  an energy based model and you should treat it like one}.
\newblock {\em arXiv preprint arXiv:1912.03263}.

\bibitem[Han et~al., 2020]{han2020stein}
Han, J., Ding, F., Liu, X., Torresani, L., Peng, J., and Liu, Q. (2020).
\newblock \href{http://proceedings.mlr.press/v108/han20c.html}{Stein
  Variational Inference for Discrete Distributions}.
\newblock In {\em International Conference on Artificial Intelligence and
  Statistics}, pages 4563--4572. PMLR.

\bibitem[Hinton et~al., 2006]{hinton2006unsupervised}
Hinton, G., Osindero, S., Welling, M., and Teh, Y.-W. (2006).
\newblock
  \href{http://www.cs.toronto.edu/~hinton/absps/cbpcogsci.pdf}{Unsupervised
  discovery of nonlinear structure using contrastive backpropagation}.
\newblock {\em Cognitive science}, 30(4):725--731.

\bibitem[Hoffman et~al., 2019]{hoffman2019neutra}
Hoffman, M., Sountsov, P., Dillon, J.~V., Langmore, I., Tran, D., and
  Vasudevan, S. (2019).
\newblock \href{https://arxiv.org/pdf/1903.03704.pdf}{Neutralizing Bad Geometry
  in Hamiltonian Monte Carlo using Neural Transport}.
\newblock {\em ICML, 2019}.

\bibitem[Jaini et~al., 2021]{jaini2021sampling}
Jaini, P., Nielsen, D., and Welling, M. (2021).
\newblock \href{http://proceedings.mlr.press/v130/jaini21a.html}{Sampling in
  Combinatorial Spaces with SurVAE Flow Augmented MCMC}.
\newblock In {\em International Conference on Artificial Intelligence and
  Statistics}, pages 3349--3357. PMLR.

\bibitem[Kingma and Ba, 2014]{kingma2014adam}
Kingma, D.~P. and Ba, J. (2014).
\newblock \href{https://arxiv.org/abs/1412.6980}{Adam: A method for stochastic
  optimization}.
\newblock {\em arXiv preprint arXiv:1412.6980}.

\bibitem[K{\"o}hler et~al., 2020]{kohler2020equivariant}
K{\"o}hler, J., Klein, L., and No{\'e}, F. (2020).
\newblock \href{https://arxiv.org/pdf/2006.02425.pdf}{Equivariant flows: exact
  likelihood generative learning for symmetric densities}.
\newblock In {\em International Conference on Machine Learning}, pages
  5361--5370. PMLR.

\bibitem[Kumar et~al., 2019]{kumar2019maximum}
Kumar, R., Ozair, S., Goyal, A., Courville, A., and Bengio, Y. (2019).
\newblock \href{https://arxiv.org/abs/1901.08508}{Maximum entropy generators
  for energy-based models}.
\newblock {\em arXiv preprint arXiv:1901.08508}.

\bibitem[LeCun et~al., 2006]{YannEBM}
LeCun, Y., Chopra, S., Hadsell, R., Ranzato, M., and Huang, F.~J. (2006).
\newblock \href{http://yann.lecun.com/exdb/publis/pdf/lecun-06.pdf}{A Tutorial
  on Energy Based Models}.
\newblock In {\em MIT Press}.

\bibitem[Liu and Wang, 2016]{liu2016stein}
Liu, Q. and Wang, D. (2016).
\newblock \href{https://arxiv.org/abs/1608.04471}{Stein variational gradient
  descent: A general purpose bayesian inference algorithm}.
\newblock {\em arXiv preprint arXiv:1608.04471}.

\bibitem[Liu and Wang, 2017]{liu2017learning}
Liu, Q. and Wang, D. (2017).
\newblock L\href{https://arxiv.org/pdf/1707.00797.pdf}{Learning deep energy
  models: Contrastive divergence vs. amortized MLE}.
\newblock {\em arXiv preprint arXiv:1707.00797}.

\bibitem[Neal et~al., 2011]{neal2011mcmc}
Neal, R.~M. et~al. (2011).
\newblock \href{https://arxiv.org/abs/1206.1901}{MCMC using Hamiltonian
  Dynamics}.
\newblock {\em Handbook of markov chain monte carlo}, 2(11):2.

\bibitem[Nielsen et~al., 2020]{nielsen2020survae}
Nielsen, D., Jaini, P., Hoogeboom, E., Winther, O., and Welling, M. (2020).
\newblock \href{https://arxiv.org/abs/2007.02731}{SurVAE Flows: Surjections to
  Bridge the Gap between VAEs and Flows}.
\newblock {\em arXiv preprint arXiv:2007.02731}.

\bibitem[Nijkamp et~al., 2019]{nijkamp2019learning}
Nijkamp, E., Hill, M., Zhu, S.-C., and Wu, Y.~N. (2019).
\newblock \href{https://arxiv.org/abs/1904.09770}{Learning non-convergent
  non-persistent short-run MCMC toward energy-based model}.
\newblock {\em arXiv preprint arXiv:1904.09770}.

\bibitem[Parno and Marzouk, 2018]{ParnoMarzouk18}
Parno, M. and Marzouk, Y. (2018).
\newblock \href{https://doi.org/10.1137/17M1134640}{Transport Map Accelerated
  Markov Chain Monte Carlo}.
\newblock {\em {SIAM/ASA} Journal on Uncertainty Quantification},
  6(2):645--682.

\bibitem[Reisert and Burkhardt, 2007]{reisert2007learning}
Reisert, M. and Burkhardt, H. (2007).
\newblock
  \href{https://www.jmlr.org/papers/volume8/reisert07a/reisert07a.pdf}{Learning
  Equivariant Functions with Matrix Valued Kernels.}
\newblock {\em Journal of Machine Learning Research}, 8(3).

\bibitem[Rezende and Mohamed, 2015]{RezendeMohamed15}
Rezende, D.~J. and Mohamed, S. (2015).
\newblock \href{http://proceedings.mlr.press/v37/rezende15.html}{Variational
  inference with normalizing flows}.
\newblock In {\em ICML}.

\bibitem[Rezende et~al., 2019]{rezende2019equivariant}
Rezende, D.~J., Racani{\`e}re, S., Higgins, I., and Toth, P. (2019).
\newblock \href{https://arxiv.org/abs/1909.13739}{Equivariant Hamiltonian
  Flows}.
\newblock {\em arXiv preprint arXiv:1909.13739}.

\bibitem[Satorras et~al., 2021]{satorras2021n}
Satorras, V.~G., Hoogeboom, E., and Welling, M. (2021).
\newblock \href{https://arxiv.org/abs/2102.09844}{E(n) equivariant graph neural
  networks}.
\newblock {\em arXiv preprint arXiv:2102.09844}.

\bibitem[Schulz et~al., 2010]{schulz2010investigating}
Schulz, H., M{\"u}ller, A., and Behnke, S. (2010).
\newblock
  \href{https://d1wqtxts1xzle7.cloudfront.net/44748079/Investigating_Convergence_of_Restricted_20160414-20594-1azhh99-with-cover-page.pdf?Expires=1622775365&Signature=AMlnT9-IEnsXrAA2bBpZU-rshAs-pyld~Qcibfouqj9Cv9VZ95dGUc6ybVWOoQ~qbBTYbmNeCIwm47klJDp1a6DhV6m6hjtcgPpb7FozVeNvp4~jY0hfhelDFyuMSHn9999fLArZR43TfWxtWzdd-kV2XacQPdaqPZouea1iEHMSGol73LhUpBvAi21bUqaomRLZDr-dmacMPFYthOo4BZGqPBhtLl5L60SKyDtczwcQmDIbvcx6GuN2jYRDg7tIRR6uqkqBTFb8nQloXd~aM5YlJRlclK1dXi1GpfbxCaL~gueySN2--mlPDgRUe7W-7swh3SirTEZA2ypugBR~5g__&Key-Pair-Id=APKAJLOHF5GGSLRBV4ZA}{Investigating
  convergence of restricted boltzmann machine learning}.
\newblock In {\em NIPS 2010 Workshop on Deep Learning and Unsupervised Feature
  Learning}, volume~1, pages 6--1.

\bibitem[Tabak and Turner, 2013]{TabakTurner13}
Tabak, E.~G. and Turner, C.~V. (2013).
\newblock \href{https://onlinelibrary.wiley.com/doi/abs/10.1002/cpa.21423}{A
  family of nonparametric density estimation algorithms}.
\newblock {\em Communications on Pure and Applied Mathematics}, 66(2):145--164.

\bibitem[Tabak and Vanden-Eijnden, 2010]{TabakVE10}
Tabak, E.~G. and Vanden-Eijnden, E. (2010).
\newblock \href{https://projecteuclid.org/euclid.cms/1266935020}{Density
  estimation by dual ascent of the log-likelihood}.
\newblock {\em Communications in Mathematical Sciences}, 8(1):217--233.

\bibitem[Tieleman, 2008]{tieleman_training_2008}
Tieleman, T. (2008).
\newblock \href{https://www.cs.toronto.edu/~tijmen/pcd/pcd.pdf}{Training
  restricted {Boltzmann} machines using approximations to the likelihood
  gradient}.
\newblock In {\em Proceedings of the 25th international conference on {Machine}
  learning - {ICML} '08}, pages 1064--1071, Helsinki, Finland. ACM Press.

\bibitem[Wang et~al., 2019]{wang2019stein}
Wang, D., Tang, Z., Bajaj, C., and Liu, Q. (2019).
\newblock \href{https://arxiv.org/pdf/1910.12794.pdf}{Stein variational
  gradient descent with matrix-valued kernels}.
\newblock {\em Advances in neural information processing systems}, 32:7834.

\bibitem[Weiler and Cesa, 2019]{e2cnn}
Weiler, M. and Cesa, G. (2019).
\newblock \href{https://arxiv.org/abs/1911.08251}{General E(2)-Equivariant
  Steerable CNNs}.
\newblock In {\em Conference on Neural Information Processing Systems
  (NeurIPS)}.

\bibitem[Welling and Teh, 2011]{welling2011bayesian}
Welling, M. and Teh, Y.~W. (2011).
\newblock
  \href{https://citeseerx.ist.psu.edu/viewdoc/download?doi=10.1.1.441.3813&rep=rep1&type=pdf}{Bayesian
  learning via stochastic gradient Langevin dynamics}.
\newblock In {\em Proceedings of the 28th international conference on machine
  learning (ICML-11)}, pages 681--688. Citeseer.

\bibitem[Xiao et~al., 2017]{xiao2017/online}
Xiao, H., Rasul, K., and Vollgraf, R. (2017).
\newblock
  \href{https://github.com/zalandoresearch/fashion-mnist}{Fashion-MNIST: a
  Novel Image Dataset for Benchmarking Machine Learning Algorithms}.
\newblock {\em arXiv:1708.07747}.

\bibitem[Xie et~al., 2018]{xie2018cooperative}
Xie, J., Lu, Y., Gao, R., Zhu, S.-C., and Wu, Y.~N. (2018).
\newblock
  \href{https://ieeexplore.ieee.org/stamp/stamp.jsp?tp=&arnumber=8519332}{Cooperative
  training of descriptor and generator networks}.
\newblock {\em IEEE transactions on pattern analysis and machine intelligence},
  42:27--45.

\bibitem[Zhang et~al., 2020]{zhang2020stochastic}
Zhang, J., Zhang, R., Carin, L., and Chen, C. (2020).
\newblock \href{http://proceedings.mlr.press/v108/zhang20d.html}{Stochastic
  particle-optimization sampling and the non-asymptotic convergence theory}.
\newblock In {\em International Conference on Artificial Intelligence and
  Statistics}, pages 1877--1887. PMLR.

\bibitem[Zhuo et~al., 2018]{zhuo2018message}
Zhuo, J., Liu, C., Shi, J., Zhu, J., Chen, N., and Zhang, B. (2018).
\newblock \href{http://proceedings.mlr.press/v80/zhuo18a.html}{Message passing
  Stein variational gradient descent}.
\newblock In {\em International Conference on Machine Learning}, pages
  6018--6027. PMLR.

\end{thebibliography}
\newpage

\newpage
\appendix

\section{Sampling using Equivariant Flows}
\label{app:sampling}
Neural transport augmented sampling, first introduced by \cite{ParnoMarzouk18}, is a general method for using normalizing flows to sample from a given density $\pi$.   Informally, the method proceeds by learning a diffeomorphic map $\Tb : \Zsf \to \Theta$ such that $\tilde{p}(\vz) = \pi(\vtheta)\cdot |\Tb'(\vz)|$ where $\vz = \Tb^{-1}(\vtheta)$ such that $p(\vz)$ has a simple geometry amenable to efficient MCMC sampling. Thus, samples can be generated from $\pi(\vtheta)$ by running MCMC chain in the $\Zsf$-space and pushing these samples onto the $\Theta$-space using $\Tb$. The transformation $\Tb$ can be learned by minimizing the KL-divergence between a fixed distribution with simple geometry in the z-space \eg a standard Gaussian and $\tilde{p}(\vz)$ above. The learning phase attempts to ensure that the distribution $\tilde{p}(\zv)$ is approximately close to the fixed distribution with easy geometry so that MCMC sampling is efficient. 

Neural transport augmented samplers have been subsequently extended by \cite{hoffman2019neutra} who use more powerful flow architectures and, \cite{jaini2021sampling} who extend the idea to sampling from discrete probability densities using flows with surjective transformations \citep{nielsen2020survae}. We believe these ideas and extensions for neural transport augmented samplers can also be used to sample from an invariant density $\pi$ by defining the flow transformations to be equivariant \'{a} la \cite{kohler2020equivariant}.

\section{Related Work}
\label{app:prev}

In this paper, we proposed equivariant Stein variational gradient descent algorithm for sampling from densities that are invariant to symmetry transformations. Another contribution of our work is subsequently using this equivariant sampling method to efficiently train equivariant energy based models for probabilistic modeling and inference. Perhaps the closest work to that presented in this manuscript is that of \cite{liu2017learning} who first\footnote{As far as the authors are aware.} used SVGD for training energy based models. However, their work does not consider incorporating symmetries in to either the sampler or the energy model itself. 

Separately, a major contribution of our paper is indeed extending SVGD to incorporate symmetries present in the underlying target density. Since it was introduced by \cite{liu2016stein}, Stein variational gradient descent has garnered a lot of attention as an alternative to Monte Carlo methods for sampling in Bayesian inference problems courtesy of its flexibility and accuracy obtained by combining variational inference and sampling paradigm. Stein variational gradient descent has been subsequently extended by \cite{wang2019stein} to incorporate geometry information using matrix valued kernels, and to discrete spaces in \cite{han2020stein}. While, \cite{duncan2019geometry} have studied the convergence properties of Stein variational gradient descent under mean-field convergence analysis, a thorough theoretical understanding is still lacking in finite particle limit. Several works have, however, empirically probed limitations of Stein variational gradient descent. Particularly, Stein variational gradient descent is susceptible to collapsing to a few modes depending on the initial configuration of the particles \citep{zhang2020stochastic, d2021annealed}. As we discussed towards the end of \Cref{sec:esvgd}, incorporating symmetries alleviates this problem partially when the group factorized distribution is unimodal. Furthermore, the problem of mode collapse in Stein variational gradient descent can be addressed by either adding noise to the SVGD update (\cf \Cref{eq:svgd}) or using an annealing strategy as proposed in \cite{d2021annealed}.

Another contribution of this paper is learning equivariant Energy-Based Models using equivariant Stein variational gradient descent. Energy Based Models have witnessed a revival recently. The primary difficulty in training energy based models is the need to evaluate the partition function which is often intractable,. Thus, training energy based models require methods that can approximate this partition function. One line of ideas thus is to train an auxiliary sampling network that generates samples to approximate the partition function \citep{kumar2019maximum, xie2018cooperative} making it eerily similar in essence to Generative Adversarial Networks (GANs) \citep{finn2016connection}. However, as discussed in \cite{du2019implicit}, such strategies are prone to mode collapse since the sampling network is often trained without an entropy term. 

An alternative to this is to use Markov Chain Monte Carlo Method to directly estimate the partition function providing several benefits afforded by MCMC sampling methods. This idea was first proposed by \cite{hinton2006unsupervised}, termed as Contrastive Divergence algorithm, which used gradient free MCMC chains initialized from training data to estimate the partition function. This was subsequently extended by \cite{tieleman_training_2008} who introduced \emph{persistence} in contrastive divergence where a single MCMC chain with a persistent state is employed to sample from the energy model. However, there are problems still with training EBMs using contrastive divergence. Specifically, EBMs training with contrastive divergence may not capture the target distribution faithfully since the MCMC chains used while training are truncated that lead to biased gradient updates hurting the learning dynamics \cite{nijkamp2019learning, schulz2010investigating}. Towards this end, a sampling procedure that converges quickly to sample from the energy function will potentially help to train energy based models. Thus, modelling invariant densities using equivariant energy functions that are trained using samplers that incorporate the symmetries in the energy function will potentially help with the training of such models.

Equivariant Stein variational gradient descent provides an efficient sampling procedure to train equivariant energy based models. However, modelling the equivariant energy function itself is mostly due to the tremendous advances in geometric deep learning \citep{bronstein2021geometric}. Particularly, we can leverage the various proposed architectures that that incorporate symmetries to model an equivariant energy function. In our experiments, we utilized these advances to model rotations using steerable CNNs \citep{e2cnn}, and molecules using E(n)-equivariant graph neural nets \citep{satorras2021n}.

\section{Equivariant Stein Variational Gradient Descent}
In this section, we provide the details for the toy experiments presented in \Cref{sec:esvgd} as well as additional experiments and plots.
\subsection{Additional SO(3) concentric spheres}
\label{app:spheres}
\begin{figure}[t]
\begin{subfigure}{.5\textwidth}
  \centering
  \includegraphics[width=.31\linewidth, keepaspectratio]{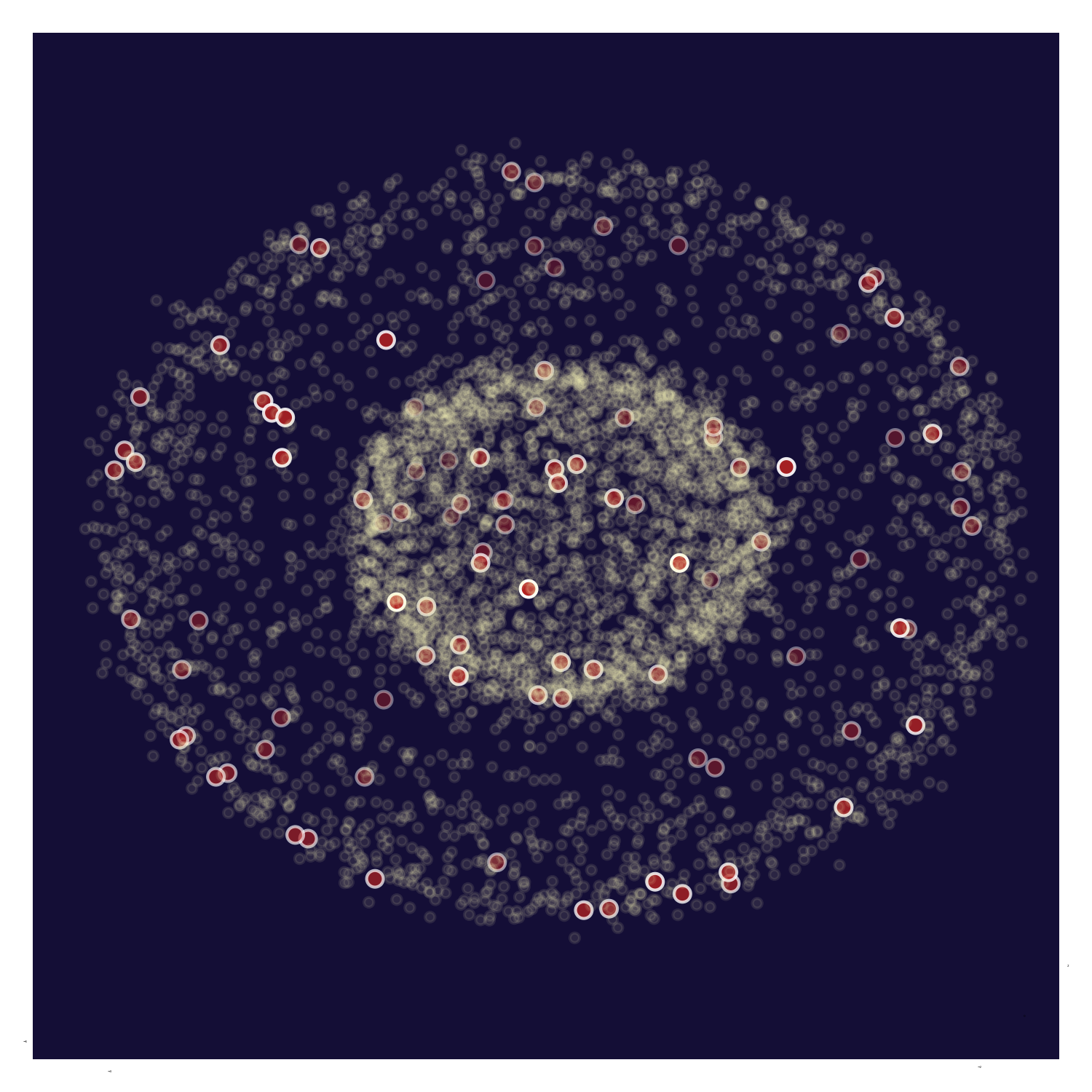}
  \includegraphics[width=.31\linewidth, keepaspectratio]{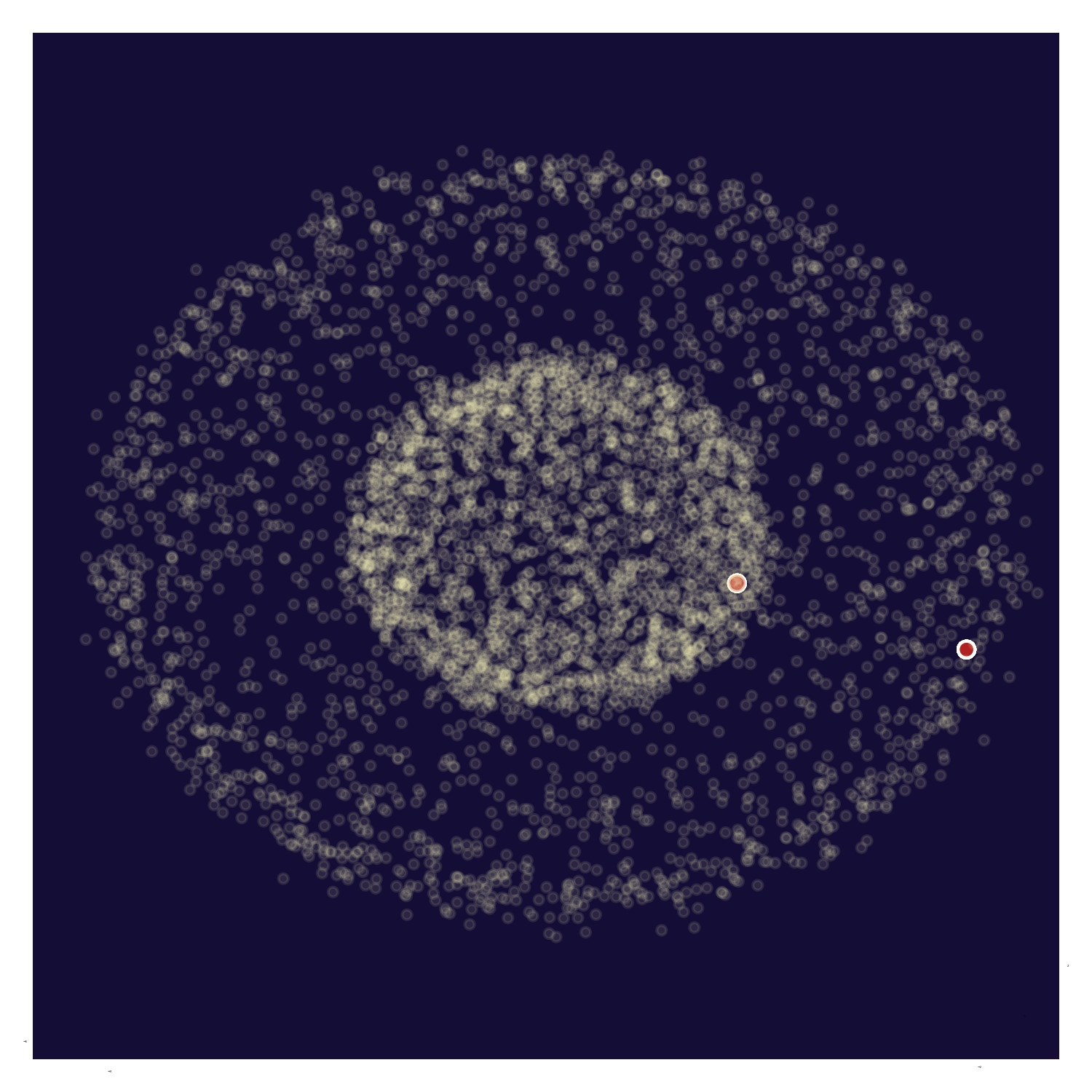}
  \caption{Regular SVGD sampling}
  \label{fig:reg-sph}
\end{subfigure}
\hspace{-0.8cm}
\begin{subfigure}{.5\textwidth}
  \centering
  \includegraphics[width=.31\linewidth, keepaspectratio]{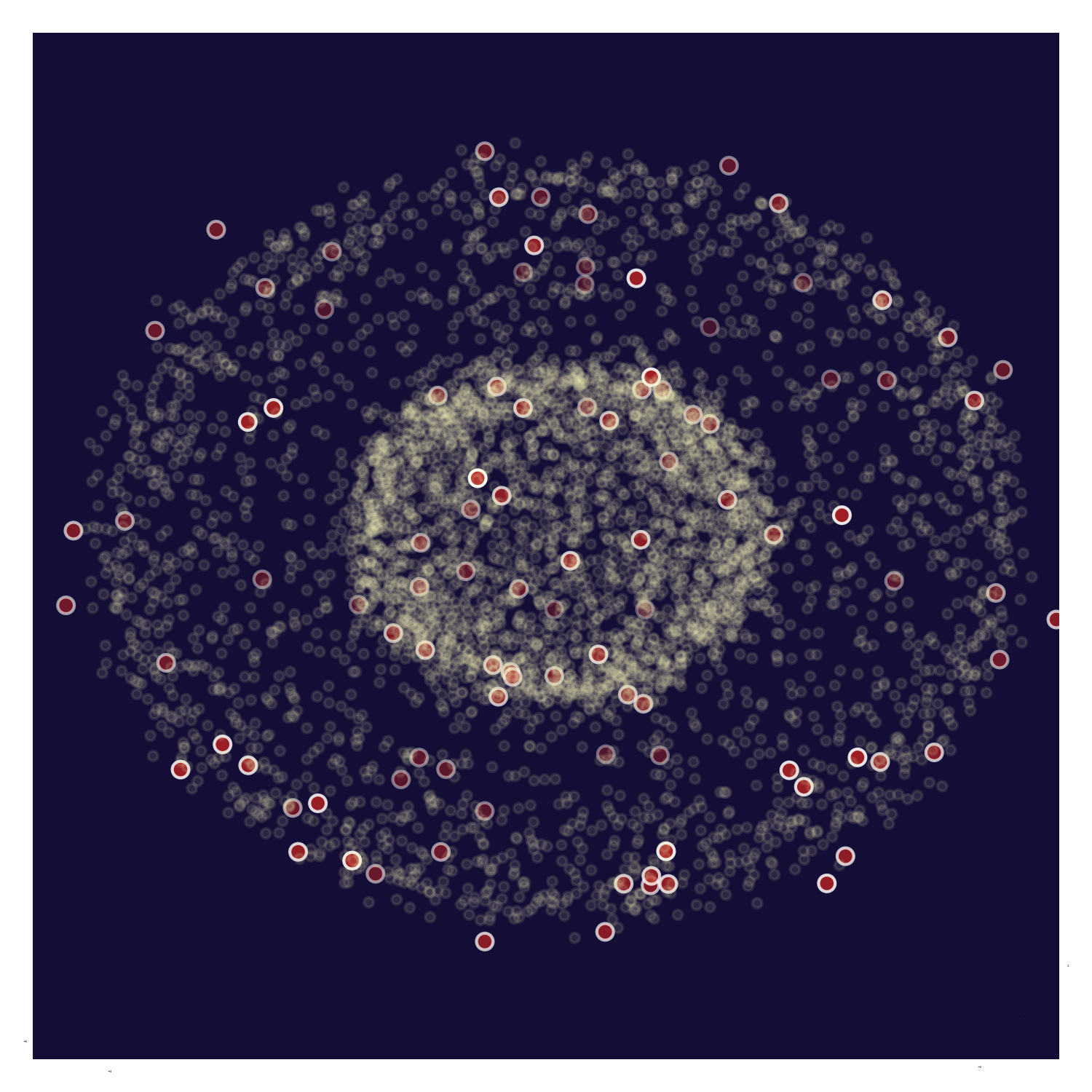}
  \includegraphics[width=.31\linewidth, keepaspectratio]{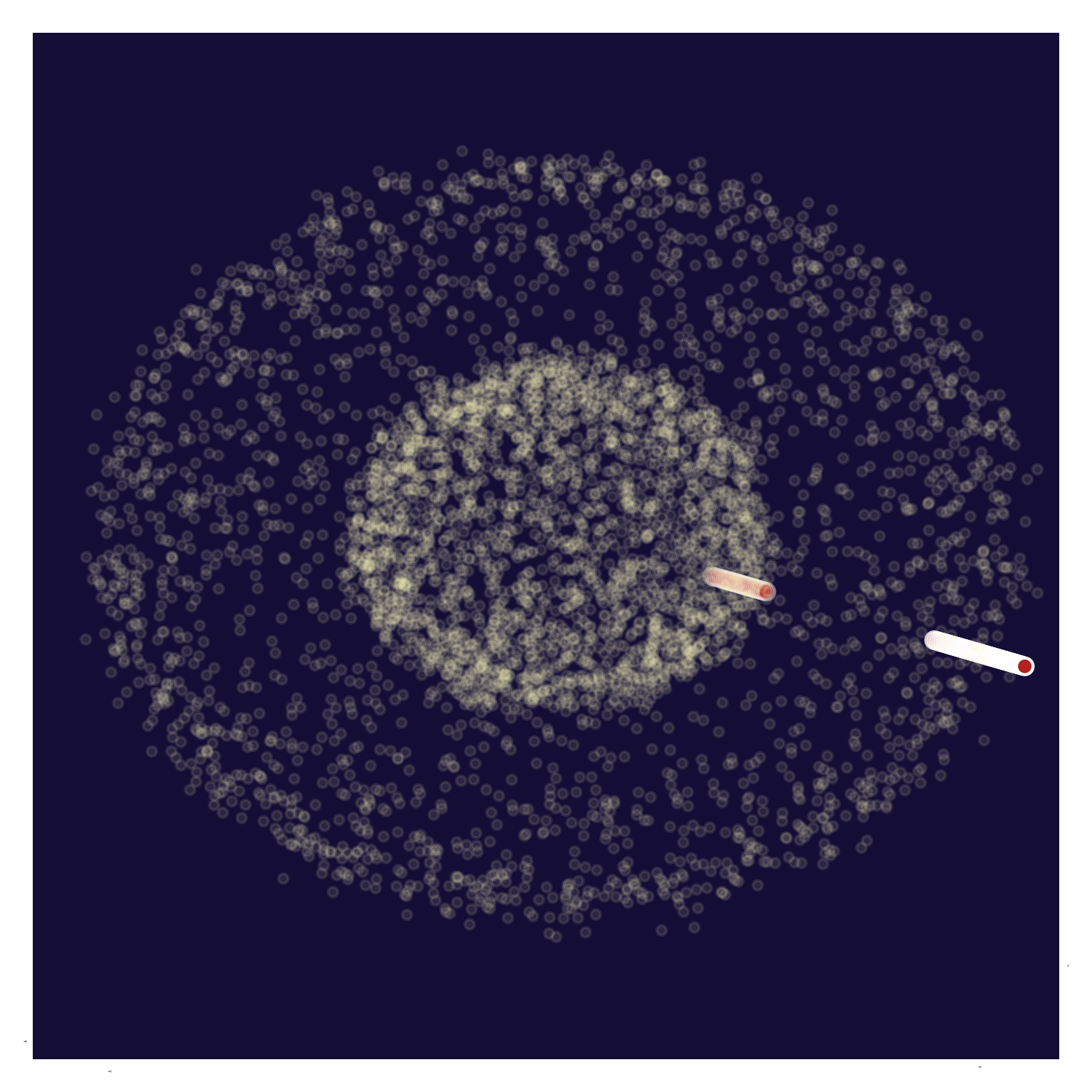}
  \caption{Invariant SVGD sampling}
  \label{fig:inv-sph}
\end{subfigure}
\caption{\emph{Recommended to view in color}. For both regular (\ref{fig:reg-sph}) and invariant (\ref{fig:inv-sph}) SVGD from left to right the original samples, and the samples projected on the group-factorized space are shown. Translucent yellow dots represent the distribution.}
\label{fig:sample-spheres}
\end{figure}
In \Cref{fig:sample-spheres}, we extend the experiments presented in \Cref{sec:esvgd} to a distribution invariant to the $\mathsf{SO}(3)$ symmetry group. The results further support the observations discussed in \Cref{sec:esvgd} that for continuous symmetry groups equivariant SVGD is able to capture the target density more faithfully due to its ability to model long range interactions whereas vanilla SVGD collapses particles to a single orbit representing the high density region in the probability landscape.

\subsection{Experimental setup}
\paragraph{$C_4$-Gaussians} The target $C_4$-Gaussians distribution for which we use equivariant SVGD to draw samples from is defined as a mixture of four Gaussians with uniform mixing coefficients. The mean of each Gaussian is located at a radius of 3 and they all have a covariance of $[1, \frac{1}{5}]$. The 50 starting samples are drawn from a two-dimensional Normal distribution with the mean at the origin and a covariance of $[2, 2]$. The samples are transformed over 25000 SVGD steps with a step size of 0.02. SVGD uses a scalar RBF kernel with a bandwidth of 0.2 for both regular and equivariant sampling.

\paragraph{Concentric circles} The inner and outer circle for the concentric circle example are located at a distance of 4 and 8 from the origin respectively. A normal distribution with variance 0.5 describes the width of the concentric circles. The starting distribution is given by the two-dimensional uniform distribution in the range $[-8, 8)$ for both dimensions. The RBF kernel used for SVGD has a bandwidth of 0.005. All other SVGD settings are kept consistent with the $C_4$-Gaussians example.

\paragraph{Concentric spheres} The concentric spheres toy example is setup in a manner very similar to concentric circle experiment. Specifically, the target distribution is parameterized by the radius of the two spheres and the variance of the Gaussian distribution for the width of the spheres. In this instance, the two spheres have a radius of 4 and 9 and the two Gaussian distributions have a variance of 0.3. Again, similarly the starting distribution is given by uniform distribution in the range $[-8, 8)$. However, this time it is a 3-dimensional distribution. The RBF bandwidth is set to 0.001 and a total of 100 samples are drawn.


\section{Equivariant Joint Energy Models}
Here, we present experimental details for the toy example presented in \Cref{sec:ebm}, additional plots for \Cref{fig:cond-ebm}, as well as an additional toy experiment for training equivariant joint energy model using equivariant Stein variational gradient descent. 
\subsection{Equivariant JEM trained with vanilla SVGD}
\label{app:jem_gaussian}
\begin{figure}
    \centering
    \begin{subfigure}{0.8\textwidth}
    \centering
    \includegraphics[width=0.24\textwidth]{ebm_conditional/true/boundary.png}
    \includegraphics[width=0.24\textwidth]{ebm_conditional/true/joint.png}
    \includegraphics[width=0.24\textwidth]{ebm_conditional/true/class_0.png}
    \includegraphics[width=0.24\textwidth]{ebm_conditional/true/class_1.png}
    \caption{Ground Truth}
    \end{subfigure}
    \begin{subfigure}{0.8\textwidth}
    \includegraphics[width=0.24\textwidth]{ebm_conditional/reg/boundary.png}
    \includegraphics[width=0.24\textwidth]{ebm_conditional/reg/joint.png}
    \includegraphics[width=0.24\textwidth]{ebm_conditional/reg/class_0.png}
    \includegraphics[width=0.24\textwidth]{ebm_conditional/reg/class_1.png}
    \caption{Regular EBM trained with Vanilla SVGD}
    \end{subfigure}
    \begin{subfigure}{0.8\textwidth}
    \includegraphics[width=0.24\textwidth]{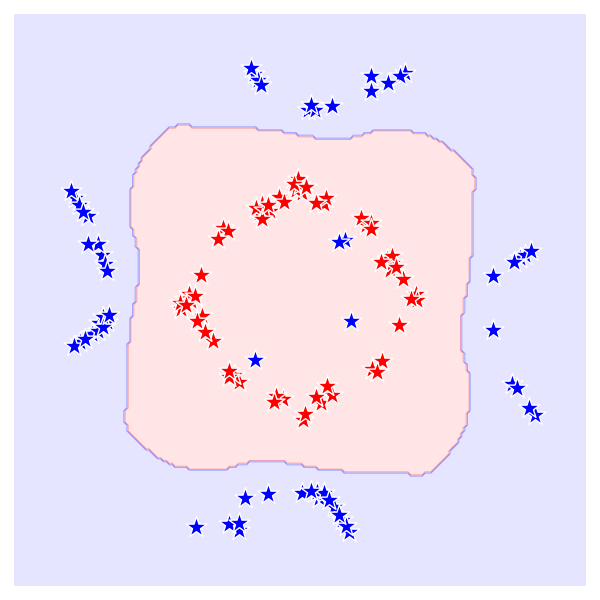}
    \includegraphics[width=0.24\textwidth]{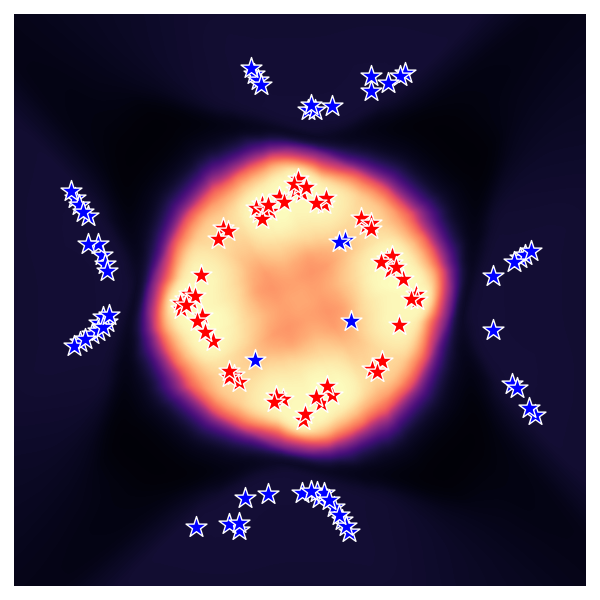}
    \includegraphics[width=0.24\textwidth]{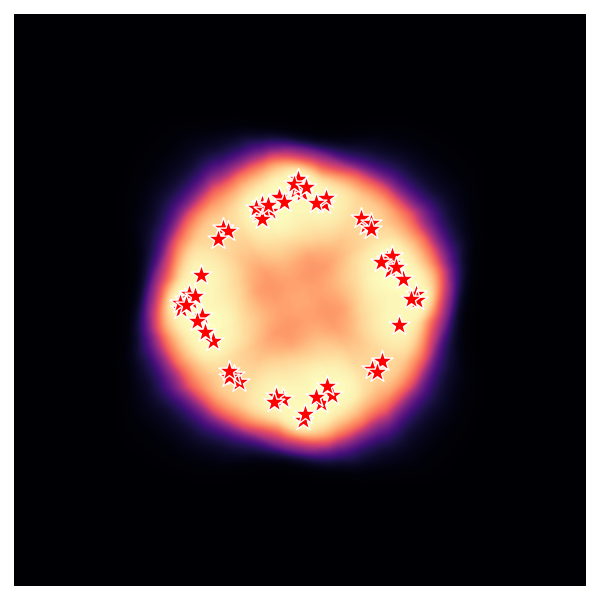}
    \includegraphics[width=0.24\textwidth]{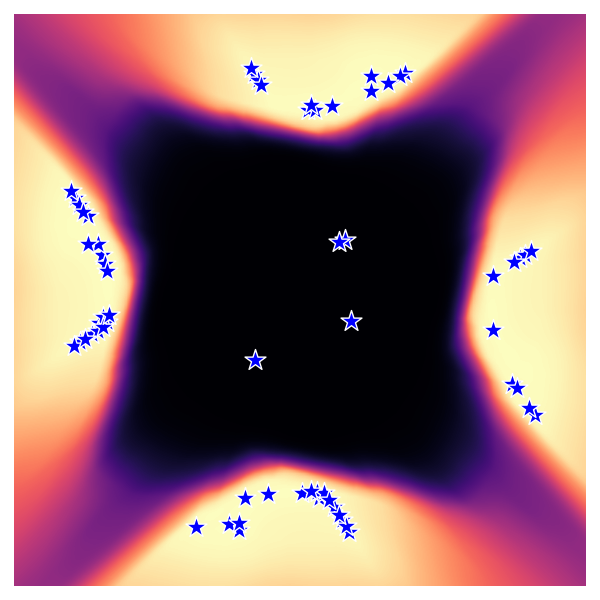}
    \caption{Equivariant EBM trained with Vanilla SVGD}
    \end{subfigure}
    \begin{subfigure}{0.8\textwidth}
    \includegraphics[width=0.24\textwidth]{ebm_conditional/inv/boundary.png}
    \includegraphics[width=0.24\textwidth]{ebm_conditional/inv/joint.png}
    \includegraphics[width=0.24\textwidth]{ebm_conditional/inv/class_0.png}
    \includegraphics[width=0.24\textwidth]{ebm_conditional/inv/class_1.png}
    \caption{Equivariant EBM trained with Equivariant SVGD}
    \end{subfigure}
    \caption{Extended version of \Cref{fig:cond-ebm} with equivariant model and regular SVGD. \textit{From left to right}: decision Boundary, samples and energy of joint distribution $\pi(\vx, y)$, samples and energy of conditional distribution $\pi(\vx|y=0)$, samples and energy of conditional distribution $\pi(\vx|y=1)$. \textit{Row 1:} Target distribution. \textit{Row 2:} Non-equivariant EBM trained with vanilla SVGD. \textit{Row 3:} E-EBM trained with E-SVGD. \textit{Row 4:} E-EBM trained with vanilla SVGD.}
    \label{fig:cond-ebm-cont}
\end{figure}
In \Cref{fig:cond-ebm-cont}, we continue the experiment presented in section \Cref{sec:ebm}. In this figure we provide the results for three models, namely: Regular energy based model trained with vanilla SVGD, equivariant energy based model trained with vanilla SVGD and, equivariant energy based model trained with equivariant SVGD. We find that in comparison with the non-equivariant EBM, an equivariant EBM (irrespective of the sampler used) better approximates the target distribution allowing the model to capture all four modes of the outer distribution while the non-equivariant EBM is unable to spread to those regions. However, compared to the equivariant-EBM trained with equivariant SVGD, we find that an equivariant-EBM trained with vanilla SVGD requires more training steps to reconstruct areas of low probability. This is due to the equivariant SVGD exploring a wider area of the landscape in the negative samples for the contrastive divergence algorithm since vanilla SVGD  generates multiple samples in the same orbit due to only being able to capture local interactions.

\subsection{Additional JEM concentric circles}
\begin{figure}[t]
    \centering
    \begin{subfigure}{0.2\textwidth}
        \includegraphics[width=0.8\textwidth]{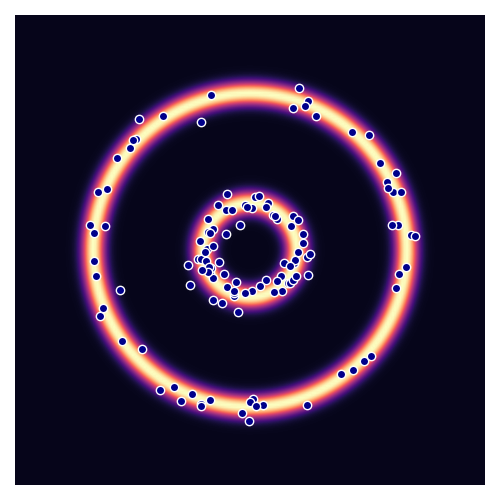}
        \includegraphics[width=0.8\textwidth]{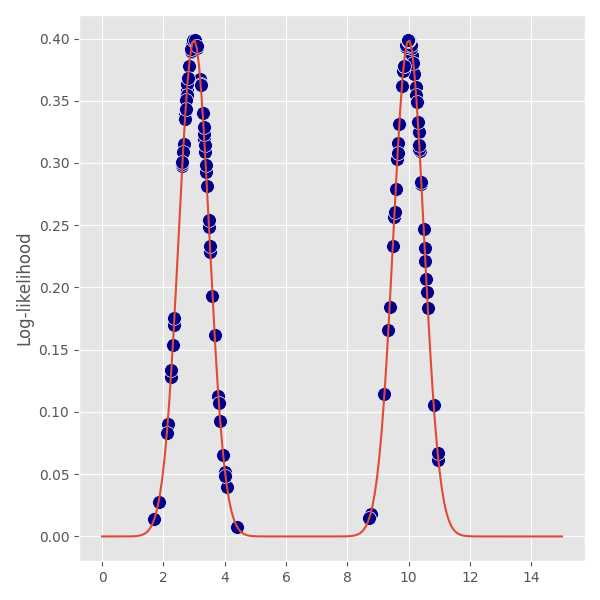}
        \label{fig:ebm_circ_target}
    \end{subfigure}
    \begin{subfigure}{0.2\textwidth}
        \includegraphics[width=0.8\textwidth]{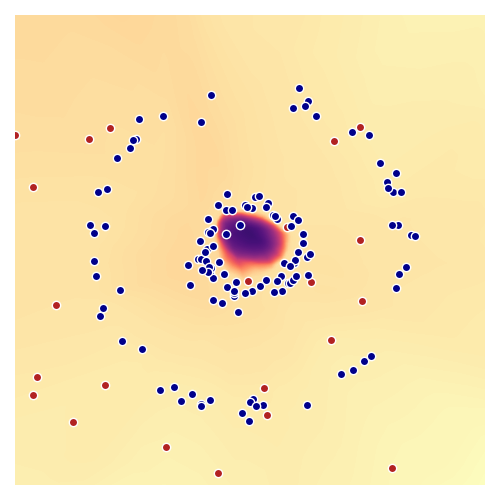}
        \includegraphics[width=0.8\textwidth]{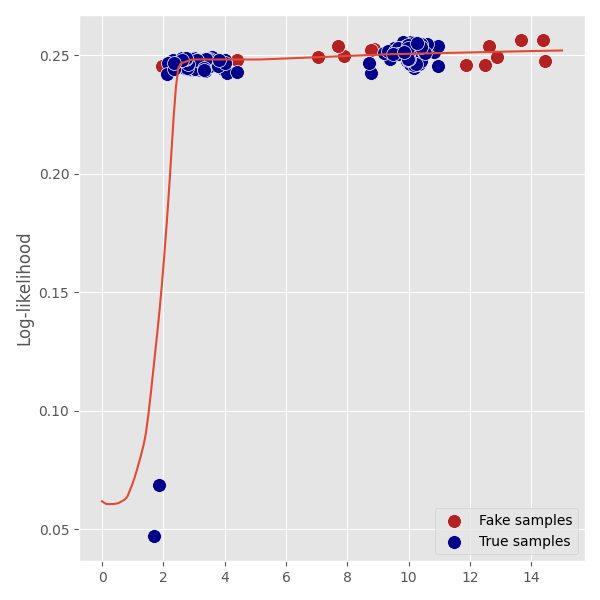}
        \label{fig:ebm_circ_reg}
    \end{subfigure}
    \begin{subfigure}{0.2\textwidth}
        \includegraphics[width=0.8\textwidth]{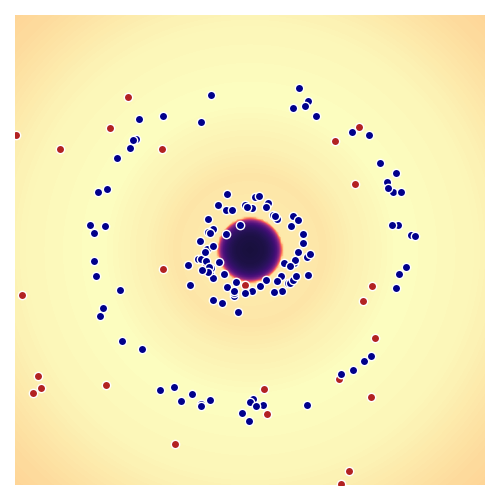}
        \includegraphics[width=0.8\textwidth]{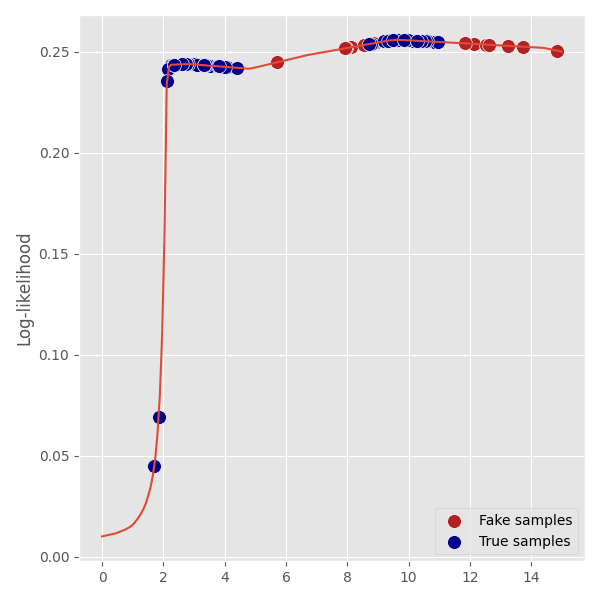}
        \label{fig:ebm_circ_reg_inv}
    \end{subfigure}
    \begin{subfigure}{0.2\textwidth}
        \includegraphics[width=0.8\textwidth]{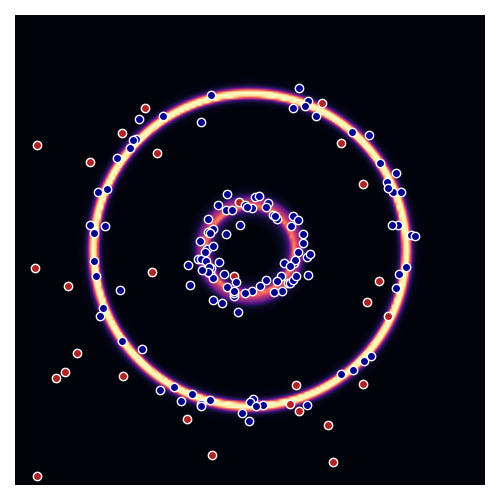}
        \includegraphics[width=0.8\textwidth]{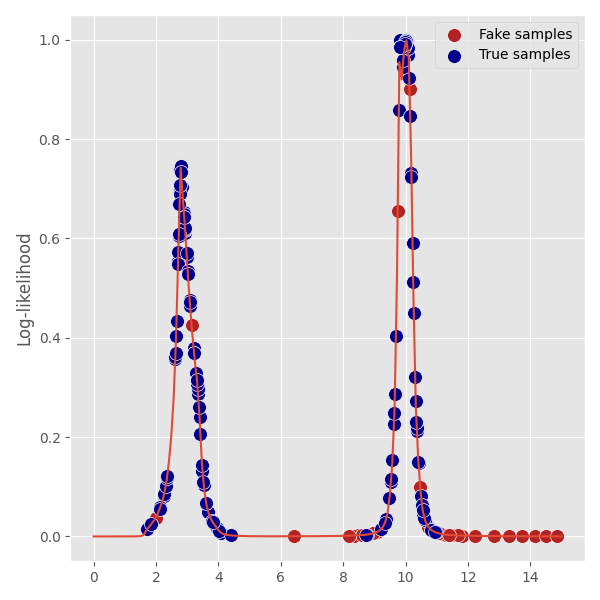}
        \label{fig:ebm_circ_inv}
    \end{subfigure}
    \caption{Visualization of (learned) distribution by EBMs trained concentric circles. Red dots represent the samples sampled during the last contrastive divergence step. Blue dots are the true samples used for training. \textit{Row 1:} Two-dimensional visualization. \textit{Row 2}: Samples projected on group-factorized spaces. \textit{From left to right}: Target distribution, non-Equivariant EBM trained with vanilla SVGD, equivariant EBM trained with vanilla SVGD, equivariant EBM trained with equivariant SVGD.}
    \label{fig:ebm_circ}
\end{figure}

In \Cref{fig:ebm_circ}, we present the results of training an EBM on samples drawn from the concentric circles toy distribution. This is done for the same three combinations of EBM and SVGD sampler as before: a non-equivariant EBM trained with regular SVGD, an equivariant EBM trained with regular SVGD and an equivariant EBM trained with equivariant SVGD. We find that the trained EBMs show the same results as observed in the JEM trained on the $C_4$-Gaussians. A non-equivariant EBM is by far the worst of the three combinations. It specifically has a hard time in reconstructing the outer-ring. Stepping up to a equivariant EBM does improve on this aspect as a slight uptick can be seen at the location of the outer-ring. However, to also fully capture the areas of low probability, the regular SVGD has to be replaced by our equivariant version. 

\subsection{Experimental setup}
The experimental setup for the experiments with regular EBMs trained using vanilla SVGD consists of three parts: 1) defining the target distribution and the construction of the training dataset, 2) defining the energy model and the training parameters and, 3) defining the SVGD kernel and the sampling parameters. Note that for both the $C_4$-Gaussians and the concentric rings experiment the setup is kept consistent between the three combinations of EBM and SVGD method. Constructing the equivariant representations for the equivariant EBM models does not add additional parameters to the models. 

\paragraph{$C_4$ Mixture of Gaussians} 
The target distribution is defined as two sets of $C_4$-Gaussians, one at a distance of 7 from the origin and the other at a distance of 15. The inner-set of four Gaussians represents the distribution of the first class (\ie $\pi(\vx|y=0)$) while the outer-set represents the second class (\ie $\pi(\vx|y=1)$). Using this definition, both the class-conditional probabilities as well as the joint distribution are invariant to the $C_4$ symmetry group. The dataset used for training the energy model contains of 128 samples equally divided amongst the two classes. 

The regular EBM is defined as a 6 layer MLP with ReLU activation functions. The layers have 32, 64, 64, 64, 32, and 2 output nodes respectively with an input dimension as 2. The energy model is trained over 500 epochs with a fixed learning rate of 0.001 using the Adam \citep{kingma2014adam} optimizer with a batch size of 32.

The SVGD kernel used for sampling the 32 negative samples for the contrastive divergence step uses an RBF kernel with a bandwidth of 0.1. Each sampling step does 10,000 steps of SVGD with a step size of 0.9. We consider the SVGD to have converged if the norm of the update of the samples between two consecutive SVGD steps is less then $10^{-4}$. Additionally, the sampling uses persistence \citep{tieleman_training_2008} with a 0.05 probability of resetting. When reset, new SVGD starting samples are drawn from the positive samples in the dataset. Furthermore, the positive samples in the dataset are used as additional repulsive forces by concatenating them to the batch of negative samples for calculating the update step in \Cref{eq:svgd}. 

\paragraph{Concentric circles}
The concentric circles of the target distribution are located at a distance of 3 and 10 from the origin with equal mixing coefficients. This target distribution is used to sample a dataset with 128 training samples. 

Both the energy function and most of the training setup are kept similar as for the $C_4$-Gaussians experiment. However, for this experiment a learning rate scheduler is used that reduces the learning rate at 150 and 400 epochs to 0.0005 and 0.0001 respectively. Additionally the Mean Square Error (MSE) loss is used as an additional supervision signal during training to demonstrate the use of loss given in \Cref{eq:loss-JEM}. The MSE loss is weighted by a factor of 0.5. 

Negative samples are drawn using 10,000 SVGD steps with a bandwidth of 0.05. Additionally, we use a scheduler for the step-size for the Stein variational gradient descent algorithm which reduces the step-size at epoch 250 and 400 to 0.5 and 0.1 respectively. The use of persistence, its reset, and additional supervised SVGD repulsive forces is consistent with the $C_4$-Gaussians experiment.


\section{Experiments}
\label{app:exp}

In this section, we present the details for our experiments presented in \Cref{sec:exp}. 

\subsection{Many-body Particle System (DW-4)}
\label{app:dw4}
\textbf{Experimental Setup:} The dataset for the Double-Well with 4 particles (DW-4) experiment is constructed by sampling a single example from each of the five meta-stable state configurations for the DW-4 potential. Each of these samples is then duplicated 200 times for a total of a 1000 training samples. A small amount of Gaussian noise is added to each sample to make them unique. 

The EBM used to reconstruct the potential is parameterized by a 3 layer MLP with 64, 64, and 1 output nodes respectively. The input of the MLP is 8-dimensional (one for each coordinate of the 4 particles). Except for the final layer the ReLU activation function is used after each layer. For the final layer we use the activation function $\log(1+x^2)$, where $x$ is the output of the final layer. The EBM is trained over 50 epochs using the Adam optimizer and a fixed learning rate of 0.01. The batch-size is 64.

We use scalar RBF kernels for all SVGD variant but depending on the type of SVGD used we use a different kernel bandwidth for the DW-4 experiment. Specifically, for regular SVGD we use a bandwidth of 0.1 and for equivariant SVGD we use 0.001. The influence of the RBF kernel bandwidth on the final results is discussed more in the next section. For each batch of negative samples, we evolved SVGD for 5,000 time-steps with a step size of 0.1 using the dataset samples as repulsive force. Persistence was used with a reset probability of 0.10. When reset, the starting coordinates for the DW-4 particles are independently sampled from the uniform distribution in the range $[-5, 5)$. 

\textbf{Differences with \cite{kohler2020equivariant}:} As mentioned in the main paper, the experiment performed using the DW-4 is a slight deviation from an earlier proposed experiment using the same DW-4 potential in \cite{kohler2020equivariant}. In this earlier work, the authors propose to investigate the capacity of their proposed density estimation method (equivariant flows) to recover unseen meta-stable states of the potential when given only access to a single meta-stable states. To clarify, we refer to all possible local minima of the potential that are equivalent under rotation, translation and permutation symmetry as being a single distinct meta-stable states. Using this definition there are a total of 5 distinct meta-stable states (see \Cref{fig:dw4}) for the DW-4 potential. 

While the results presented in the original paper shows great success, it is however our understanding that the presented results can not be due to the proposed equivariance constraint on the normalizing flow. Precisely, the equivariant flow density proposed by the authors is invariant with respect to permutation of particles, rotation of the system of particles around its center of mass, and translation of the entire system. Thus, given only access to one of the 5 distinct meta-stable states, the equivariant flow only learns to assign a high probability to particle configurations that belong to this same meta-stable states. The 4 other distinct meta-stable states can not be recovered from the single presented meta-stable state using the symmetry transformations that the flow is invariant to. As the density estimated by the EBM trained using our equivariant SVGD proposed in this work is also only invariant to these same symmetries it would face the same restrictions. 
Instead, we believe that the method recovers the other 4 meta-stable state primarily due to the additional spread we observed in the equivariant sampling process (see \Cref{sec:esvgd}) and the addition of the Gaussian noise to the training example. By adding this small amount of noise, the density estimator can not collapse its density into the single training example. As a result, every possible configuration of the four particles ultimately has a small non-zero probability. Given enough samples, there is herefore a non-trivial chance of sampling other meta-stable states as well. The equivariant constraint on the normalizing flow further amplifies this as the spread of non-zero probabilities not only occurs around the single training example, but also around all symmetry transformations of it. 

The hypothesis of the previous paragraph is substantiated by the experimental results presented in \Cref{fig:DW4-Kohler}. We find that if we train an equivariant EBM using equivariant SVGD on the dataset supplied in \cite{kohler2020equivariant} (see \Cref{fig:DW4-Kohler-data}) instead of the one we constructed ourselves, the sampling procedure can be forced to replicate the same results as presented by \cite{kohler2020equivariant}. If we use equivariant SVGD to sample from the trained EBM with the RBF kernel bandwidth set too high, the variance in the samples becomes too large. Thus, with a sufficiently large number of samples, searching through these samples reveals that all distinct meta-stable states have accidentally been recovered. When we significantly reduce the bandwidth, the same number of samples can be drawn but the variance will be low enough to only recover symmetry transformations of the original meta-stable state in the dataset. The later is the expected results given the set of symmetries the estimated densities are invariant too. 

\begin{figure}[t]
    \begin{subfigure}{\textwidth}
      \centering
      \includegraphics[width=.15\linewidth, keepaspectratio]{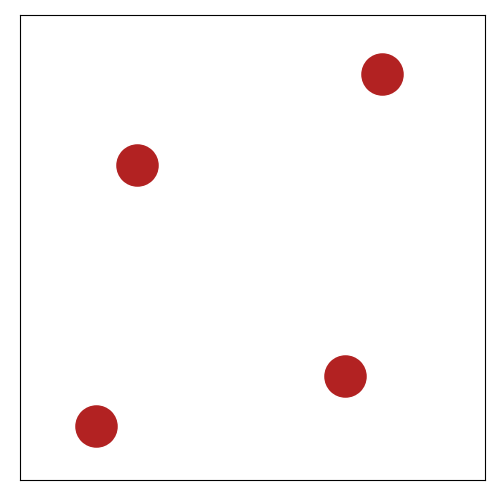}  
      \includegraphics[width=.15\linewidth, keepaspectratio]{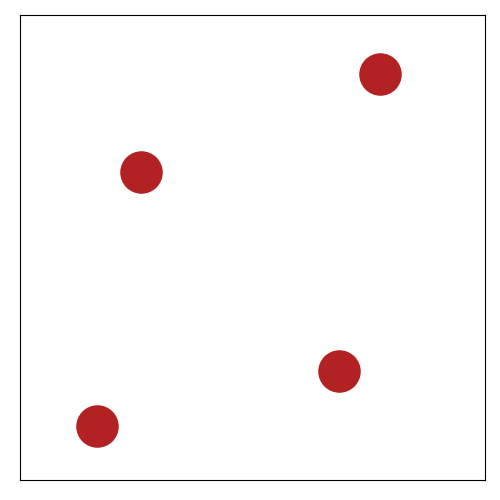}  
      \includegraphics[width=.15\linewidth, keepaspectratio]{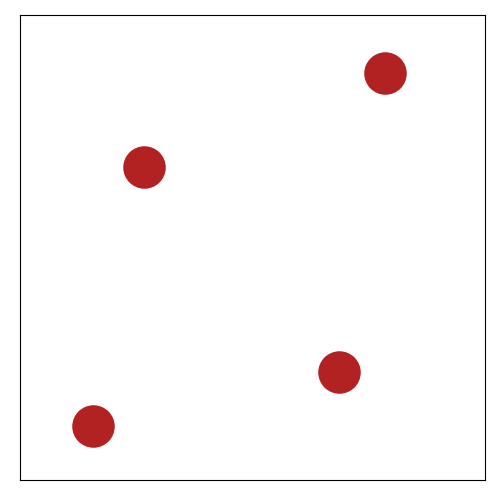}  
      \includegraphics[width=.15\linewidth, keepaspectratio]{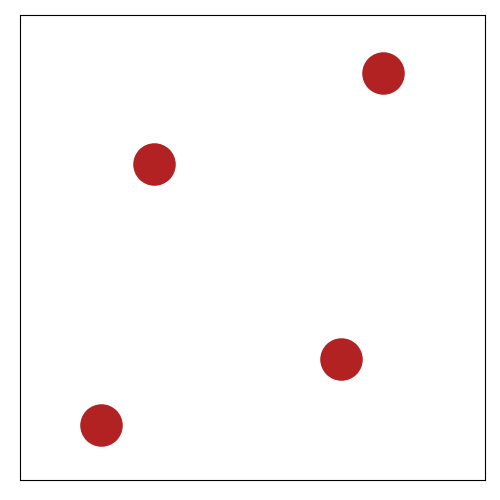}  
      \includegraphics[width=.15\linewidth, keepaspectratio]{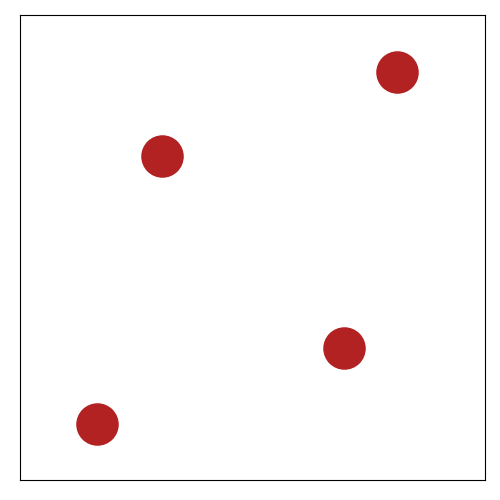}
    \end{subfigure}
  \caption{Training samples from the original dataset provided in \cite{kohler2020equivariant}. All samples represent the same meta-stable state and only differ by the addition of some Gaussian noise to the particle locations.}
    \label{fig:DW4-Kohler-data}
\end{figure}
  
\begin{figure}[t]
    \begin{subfigure}{\textwidth}
      \centering
      \includegraphics[width=.15\linewidth, keepaspectratio]{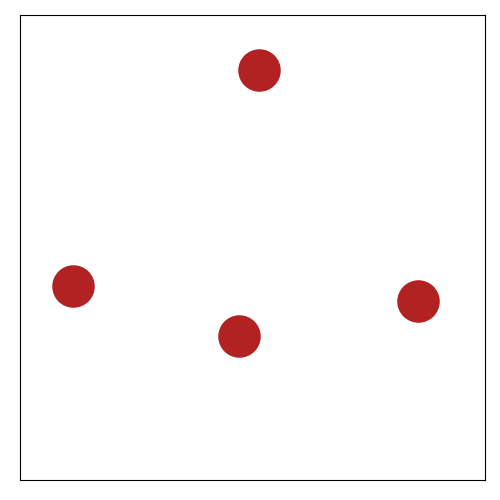}
      \includegraphics[width=.15\linewidth, keepaspectratio]{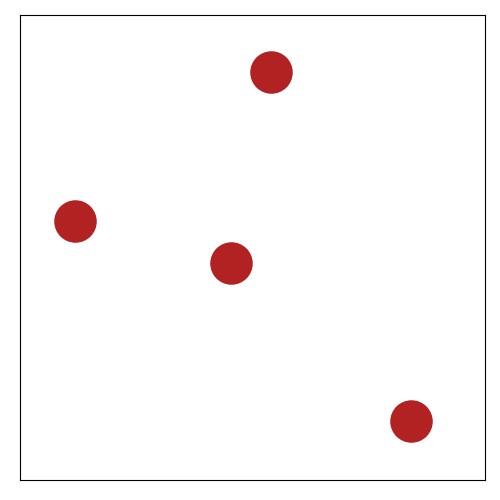}  
      \includegraphics[width=.15\linewidth, keepaspectratio]{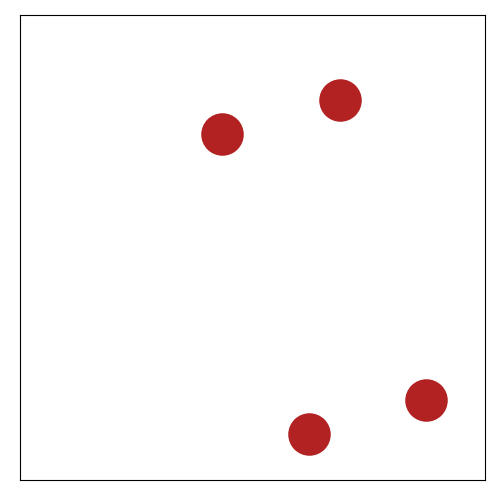}  
      \includegraphics[width=.15\linewidth, keepaspectratio]{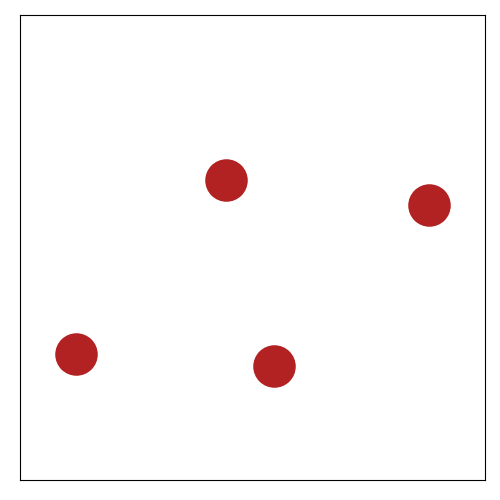}  
      \includegraphics[width=.15\linewidth, keepaspectratio]{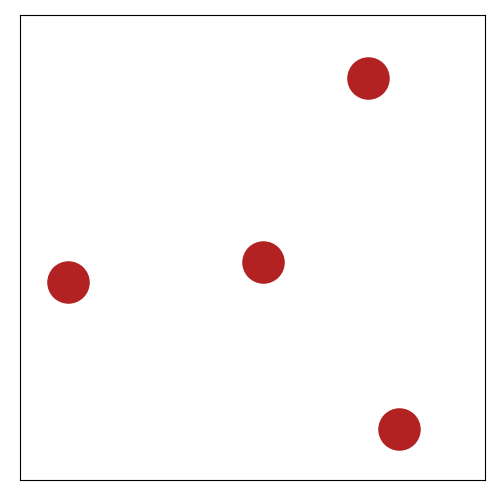}
      \caption{High bandwidth}
    \label{fig:DW4-Kohler-high}
    \end{subfigure}
    \begin{subfigure}{\textwidth}
      \centering
      \includegraphics[width=.15\linewidth, keepaspectratio]{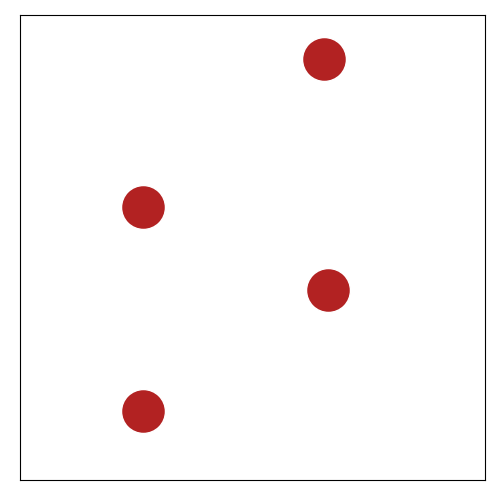}  
      \includegraphics[width=.15\linewidth, keepaspectratio]{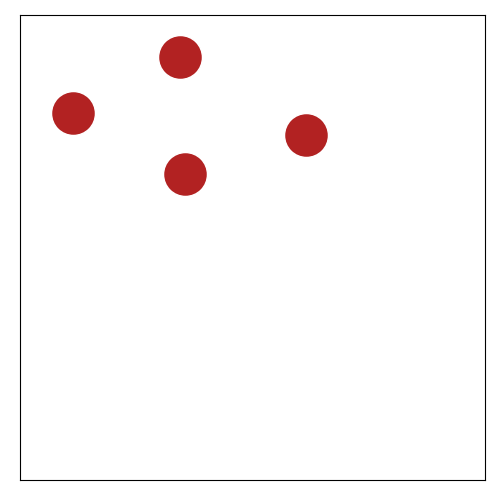}  
      \includegraphics[width=.15\linewidth, keepaspectratio]{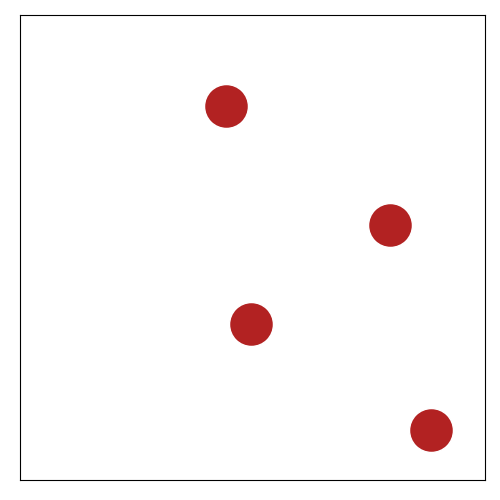}  
      \includegraphics[width=.15\linewidth, keepaspectratio]{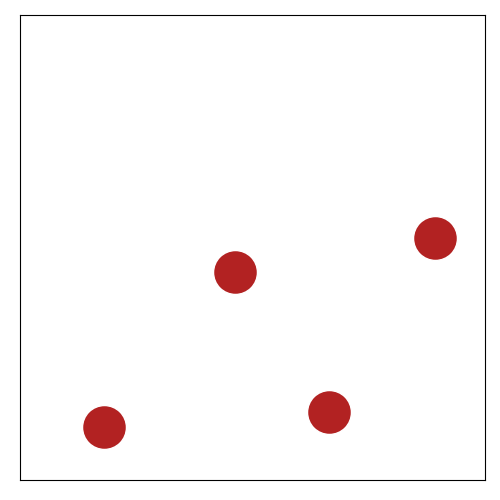}  
      \includegraphics[width=.15\linewidth, keepaspectratio]{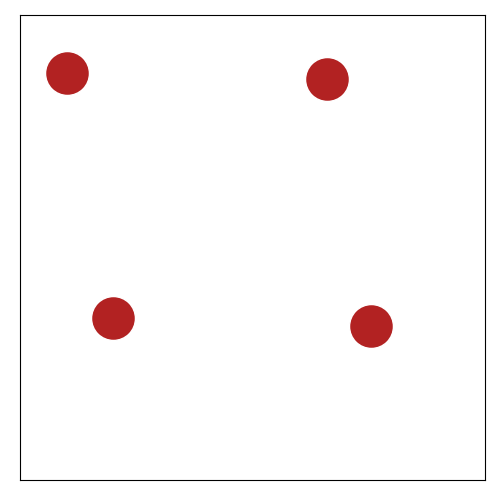}
      \caption{Low bandwidth}
    \label{fig:DW4-Kohler-low}
    \end{subfigure}
  \caption{Results of sampling using equivariant SVGD from an equivariant EBM trained with the dataset provided in \cite{kohler2020equivariant}. In \cref{fig:DW4-Kohler-low} a bandwidth of 0.0001 for the RBF-kernel is used while in \cref{fig:DW4-Kohler-high} a bandwidth of 2.0 is used. Note, sampling is done from the same EBM trained using equivariant SVGD with a bandwidth of 0.001.}
\label{fig:DW4-Kohler}
\end{figure}



\subsection{Molecular Generation using QM9}
\label{app:qm9}
For the molecular generation experiments we limit the large QM9 dataset to constitutional isomers of \ch{C5H8O1}. This molecule was selected due to the relatively high number of constitutional isomers (35) in the dataset in combination with its low atomic charge (46). This allows for sufficient variation within the dataset while keeping the molecules small enough to easily visualize and interpret. 

The equivariant EBM is created using an Equivariant Graph Neural Network E-GNN \citep{satorras2021n} with 4 Equivariant Graph Convolutional Layers. Each layer has 64 units. All other model configurations are kept consistent with those deployed in \cite{satorras2021n} for the same dataset. The model was trained over 2500 epochs with a step wise learning rate scheduler. We start with a learning rate of 0.01 and reduce it to 0.005 and 0.001 at epochs 250 and 1,000 respectively. We used an RBF kernel with a bandwidth of 1 for equivariant SVGD which was evolved for 2,500 time-steps with a step size of 0.5. The persistent samples were reset with a 0.2 probability. 
\paragraph{Bond estimation}
Before visualizing the generated structures, we first post-process them to infer the bonds between the atoms. We do this by using the distance between atoms as a proxy for their probability of being bonded. In the following, we will describe this process in detail. We will rely on graph terminology where atoms are represented by nodes and bonds as undirected edges. The three steps for this process can be roughly described as: 1) bond all heavy atoms together such that they form a connected graph, 2) bond each hydrogen atom to one of the heavy atoms and, 3) create new bonds between atoms or double-up bonds between already connected atoms until each heavy atom has the required number of bonds. 

As stated, the goal of the first step is to form a connected graph containing all the heavy atoms. We use the atom closest to the origin as the starting graph consisting of only this single node. From there on, we continuously add the atom that is closest to any of the atoms already in the connected graph. The atom newly added to the connected graph and the atom it was closest to are then connected by an edge/bond such that the graph maintains its connectivity. Note that when determining the distance to the connected graph for each atom, we only consider the distance to atoms that still have bonds available. 

In the second step, we connect all hydrogen atoms to the connected graph of heavy atoms. We order this process based on the distance of each hydrogen atom to its closest heavy atom in the connected graph. In other words, we first calculate the distance from each hydrogen atom to all heavy atoms in the connected graph. Given all these distances we then iteratively bond the hydrogen atom that is furthest away from its closest heavy atom in the graph that still has bonds available to this heavy atom. After each newly connected atom pair, we update the number of bonds available for each heavy atom before continuing. 

In the last step, we spend all remaining bonds available for the heavy atoms. To do this we repeat the following process until all bonds are spend. First, find the pair of heavy atoms that still have bonds left that are closest together. Second, find the closest pair of atoms that have bonds left and are not yet connected. If the distance between the second pair is not further than 1.1 times the distance between the first pair, then bond the second pair. Otherwise, connect the first pair. 

\begin{figure}[t]
\centering
      \includegraphics[width=.10\linewidth, keepaspectratio]{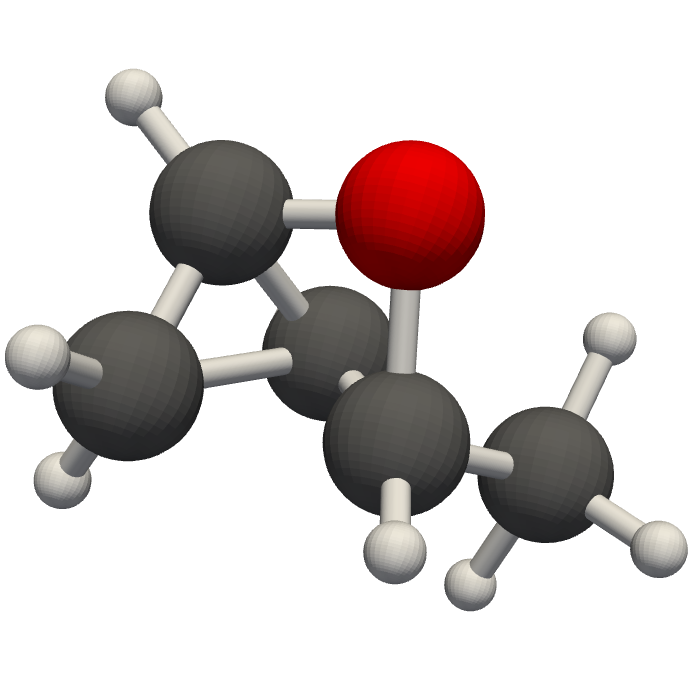}
      \includegraphics[width=.10\linewidth, keepaspectratio]{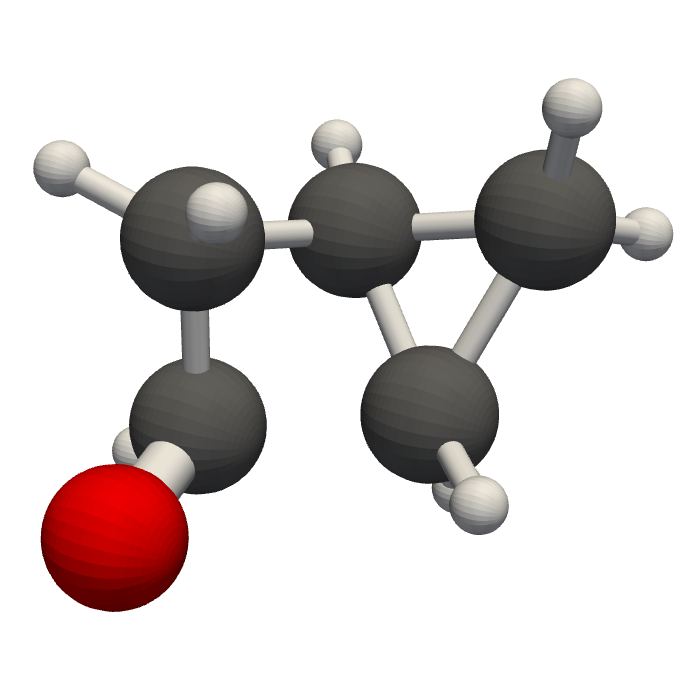}
      \includegraphics[width=.10\linewidth, keepaspectratio]{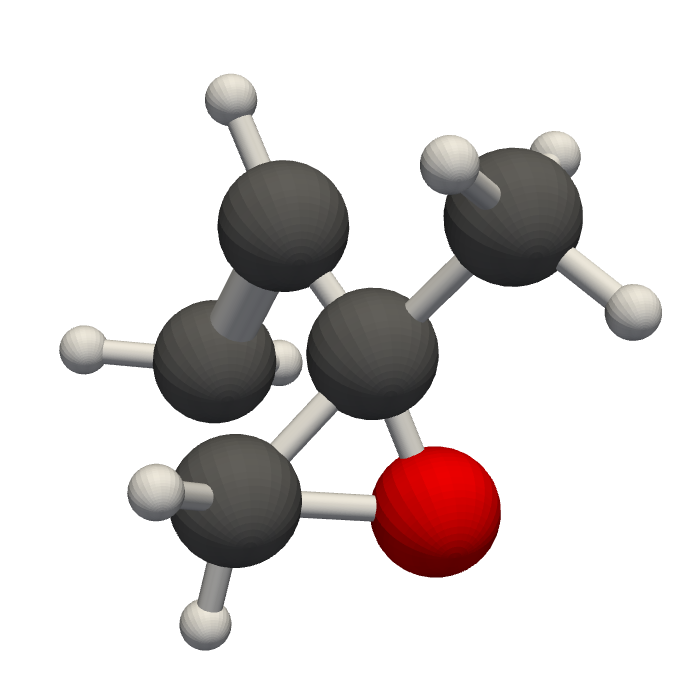}
      \includegraphics[width=.10\linewidth, keepaspectratio]{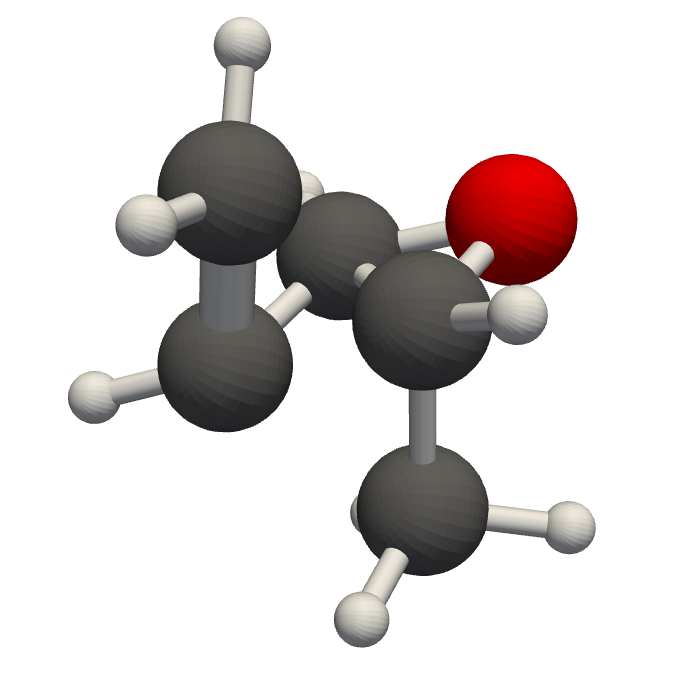}
      \includegraphics[width=.10\linewidth, keepaspectratio]{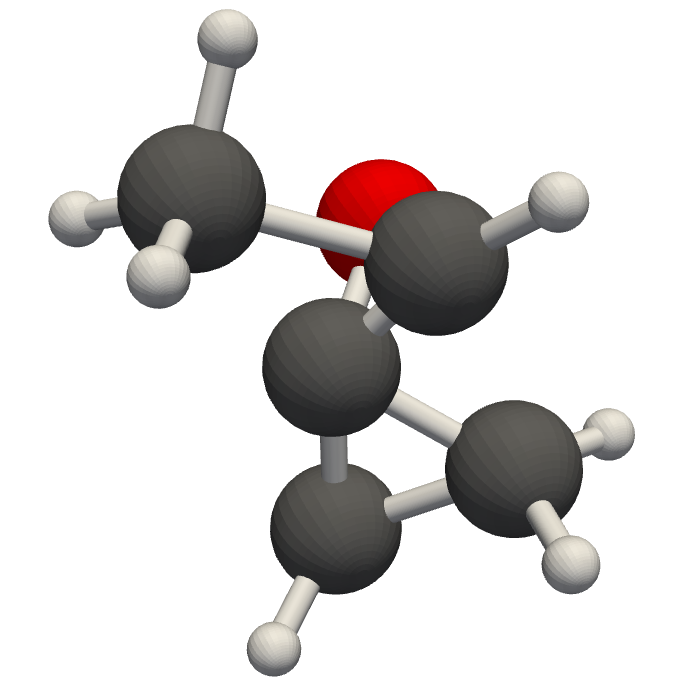}
      \includegraphics[width=.10\linewidth, keepaspectratio]{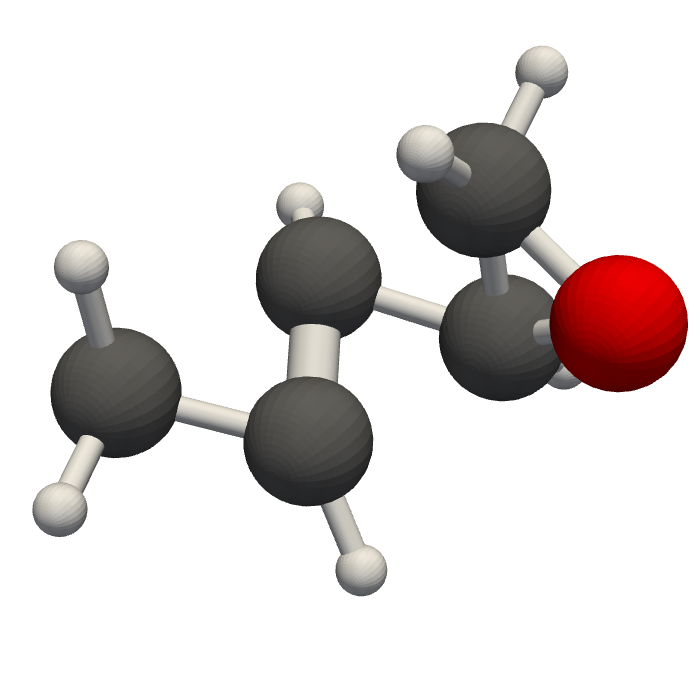}
      \includegraphics[width=.10\linewidth, keepaspectratio]{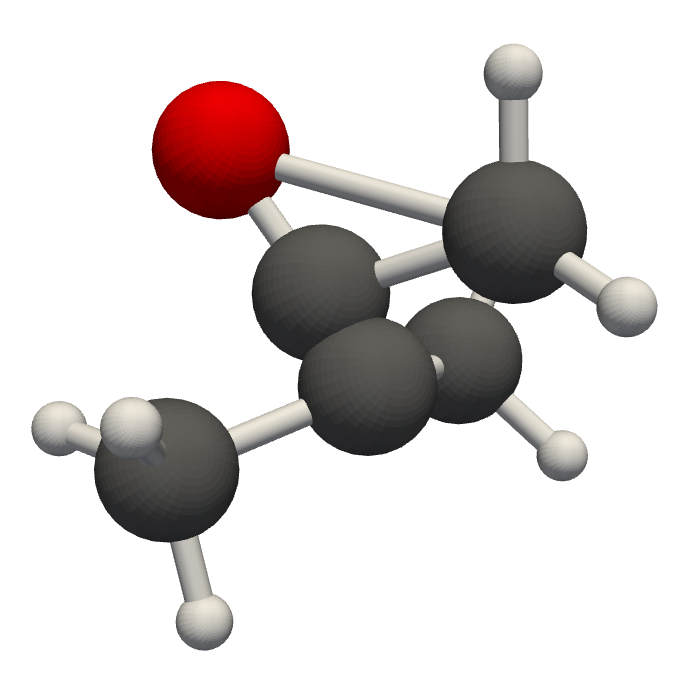}
      \includegraphics[width=.10\linewidth, keepaspectratio]{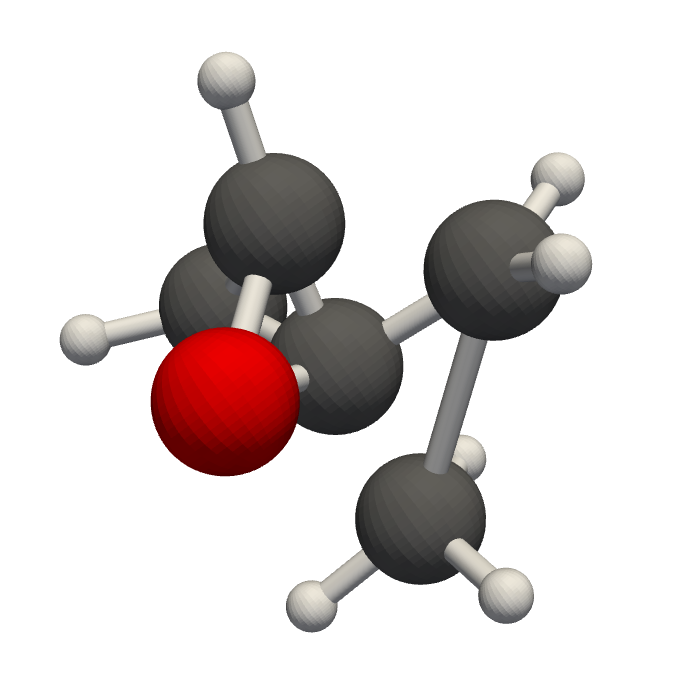}
      \newline
      \includegraphics[width=.10\linewidth, keepaspectratio]{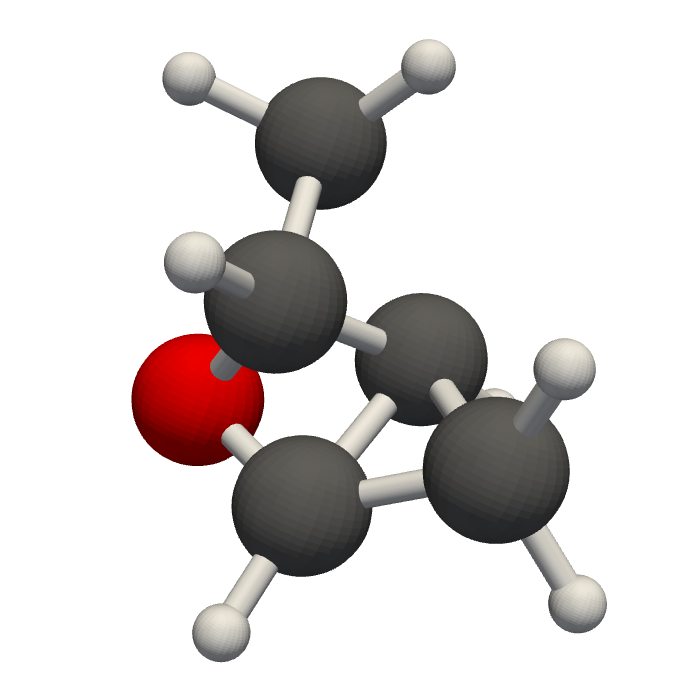}
      \includegraphics[width=.10\linewidth, keepaspectratio]{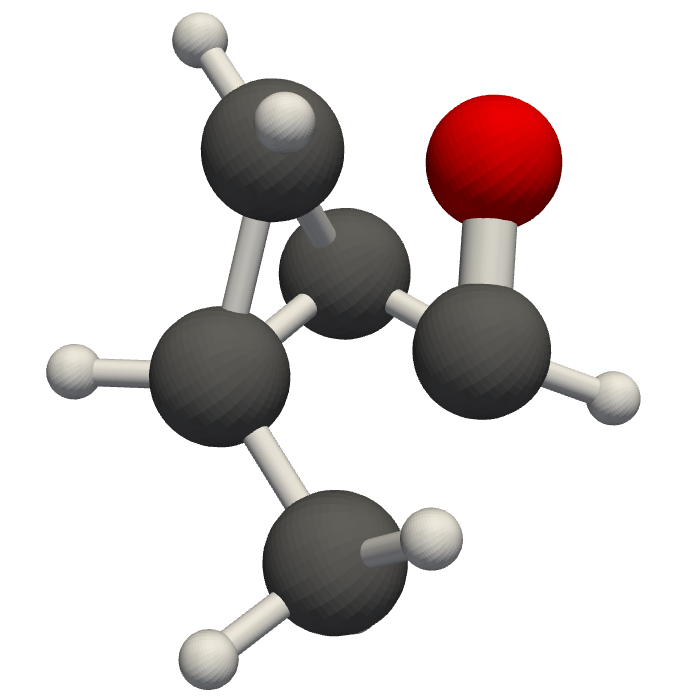}
      \includegraphics[width=.10\linewidth, keepaspectratio]{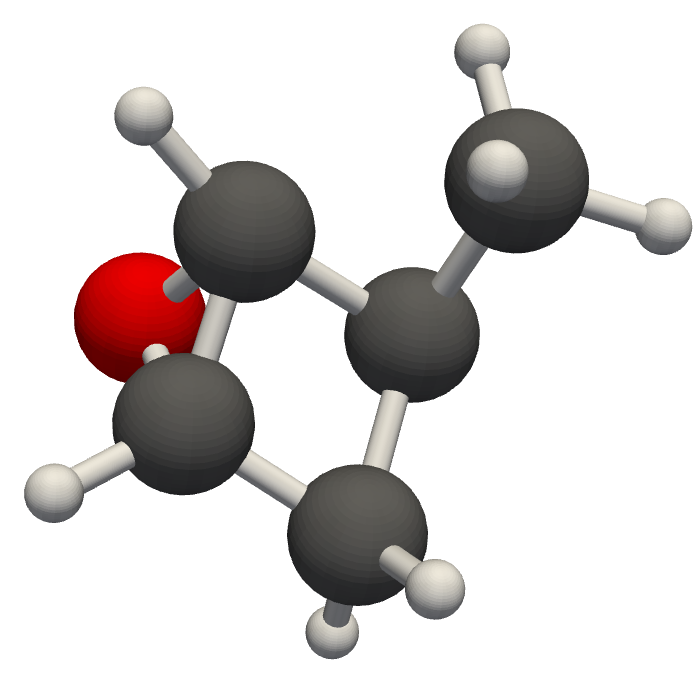}
      \includegraphics[width=.10\linewidth, keepaspectratio]{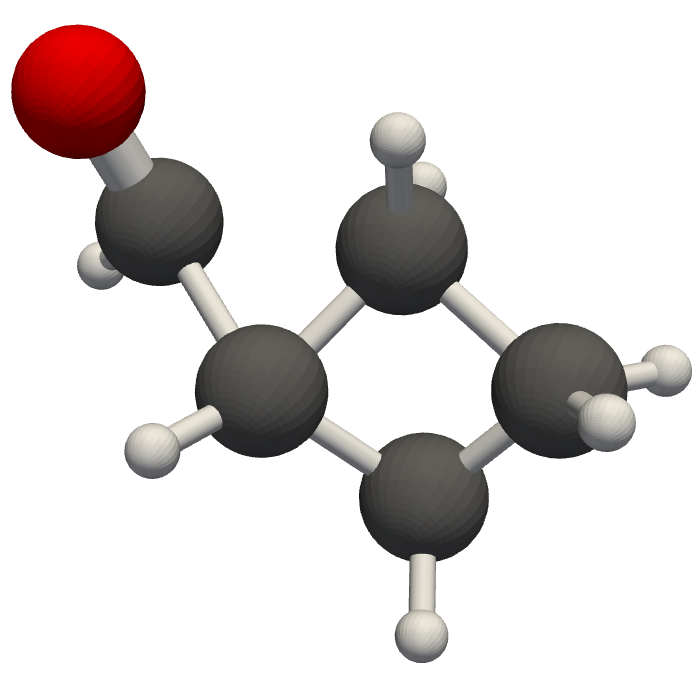}
      \includegraphics[width=.10\linewidth, keepaspectratio]{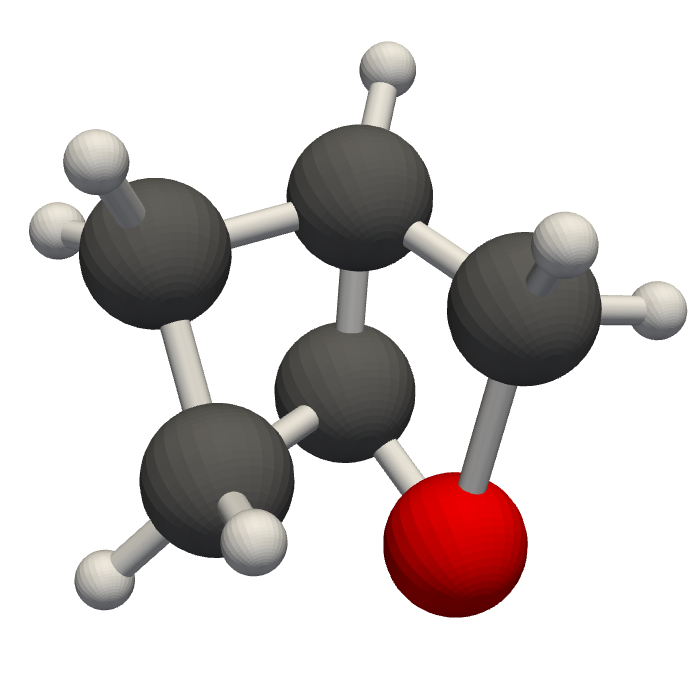}
      \includegraphics[width=.10\linewidth, keepaspectratio]{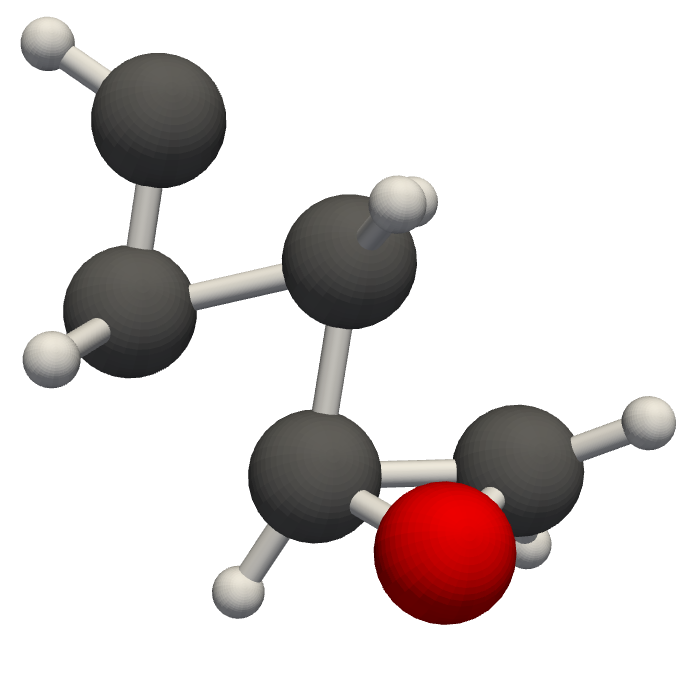}
      \includegraphics[width=.10\linewidth, keepaspectratio]{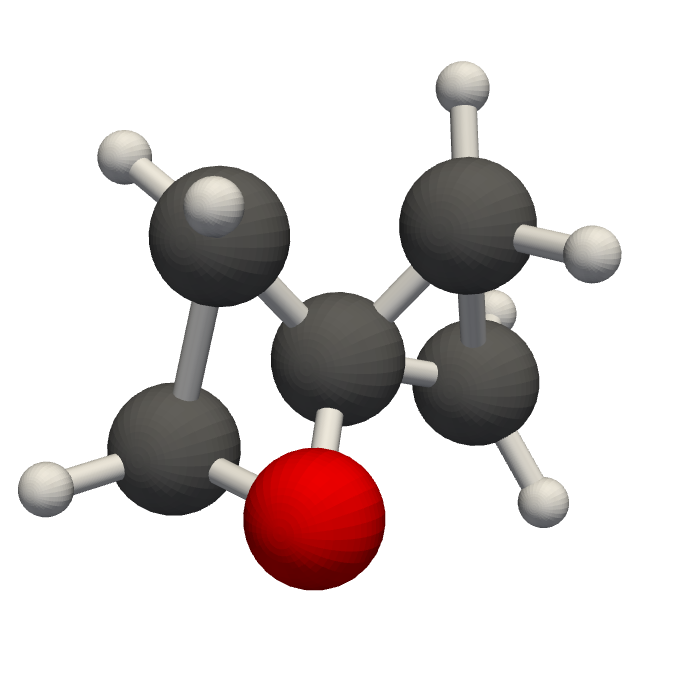}
      \includegraphics[width=.10\linewidth, keepaspectratio]{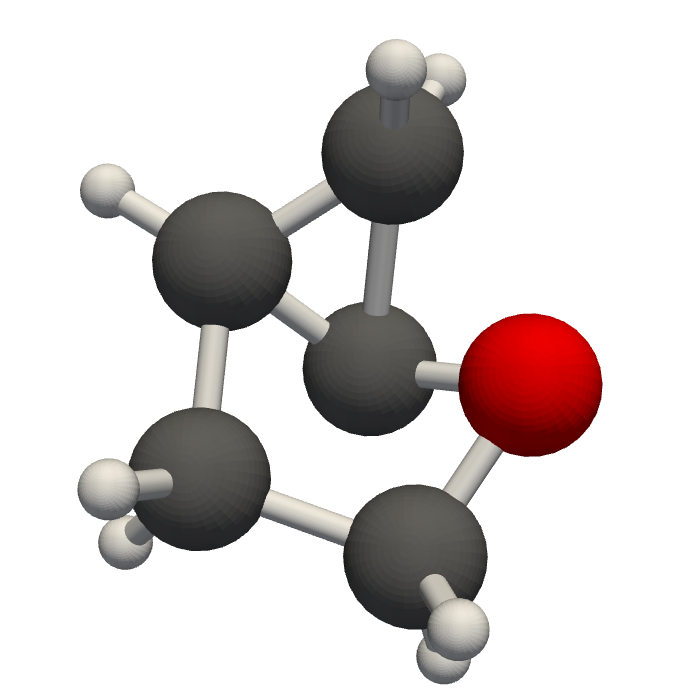}
      \newline
      \includegraphics[width=.10\linewidth, keepaspectratio]{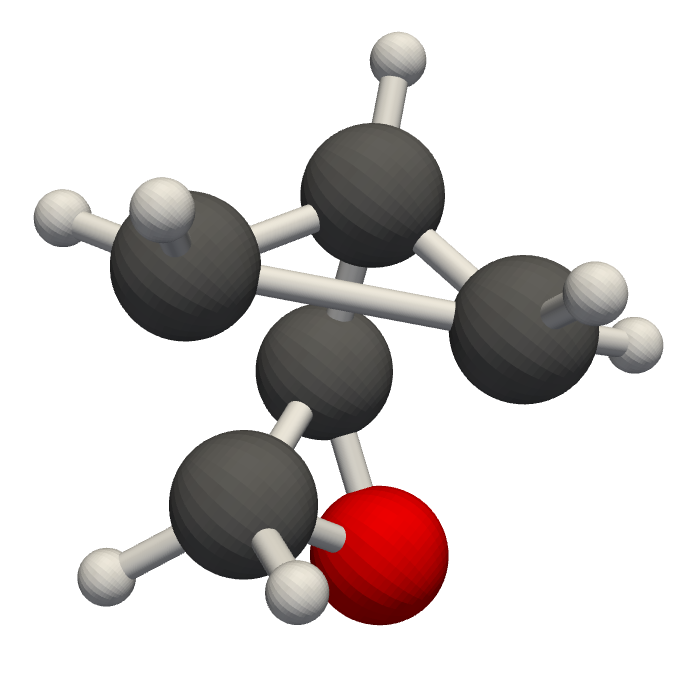}
      \includegraphics[width=.10\linewidth, keepaspectratio]{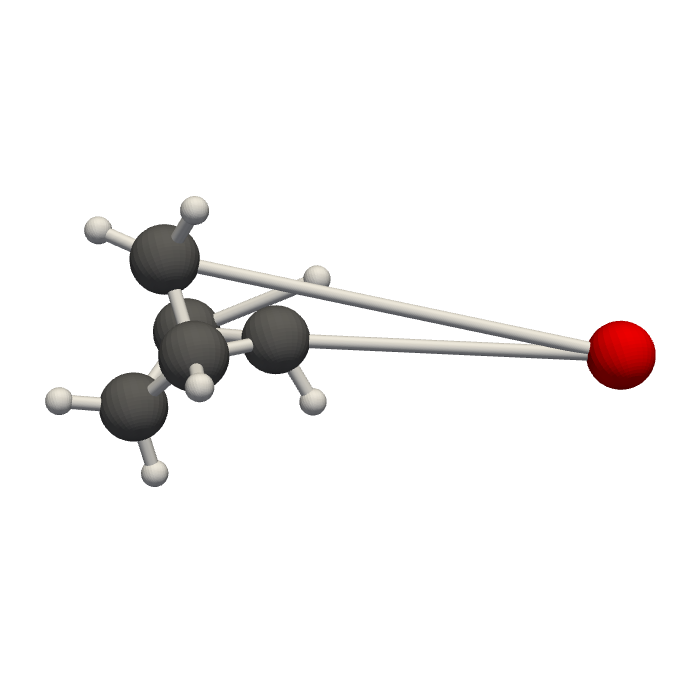}
      \includegraphics[width=.10\linewidth, keepaspectratio]{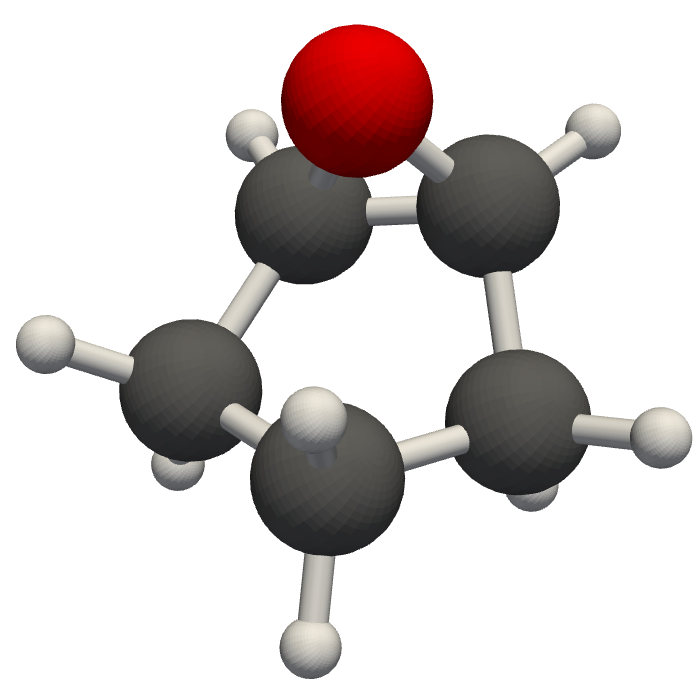}
      \includegraphics[width=.10\linewidth, keepaspectratio]{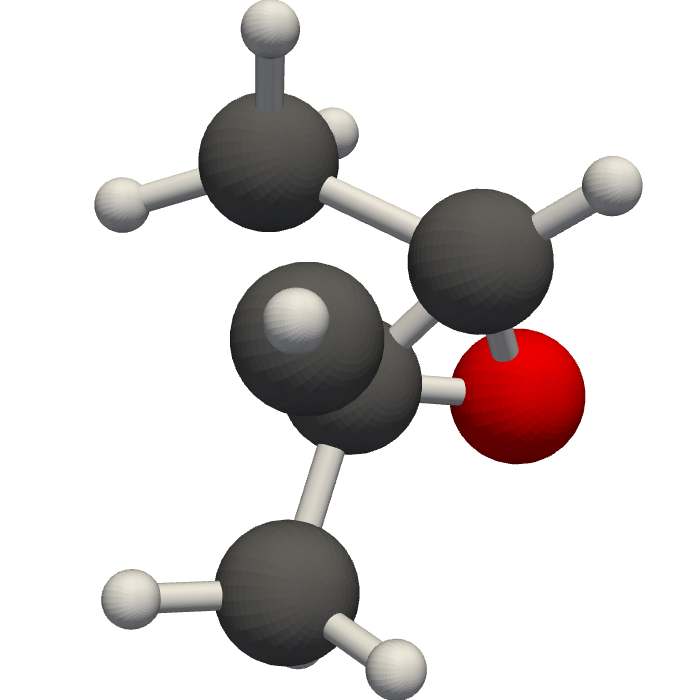}
      \includegraphics[width=.10\linewidth, keepaspectratio]{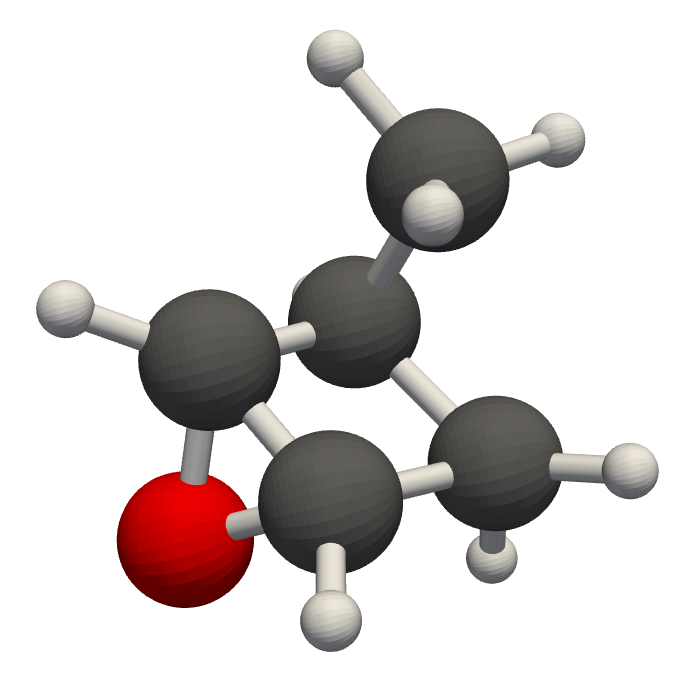}
      \includegraphics[width=.10\linewidth, keepaspectratio]{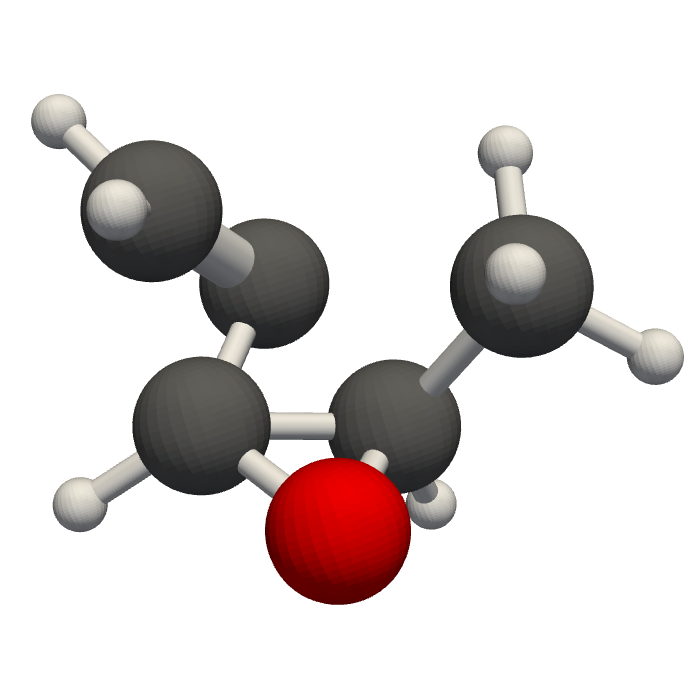}
      \includegraphics[width=.10\linewidth, keepaspectratio]{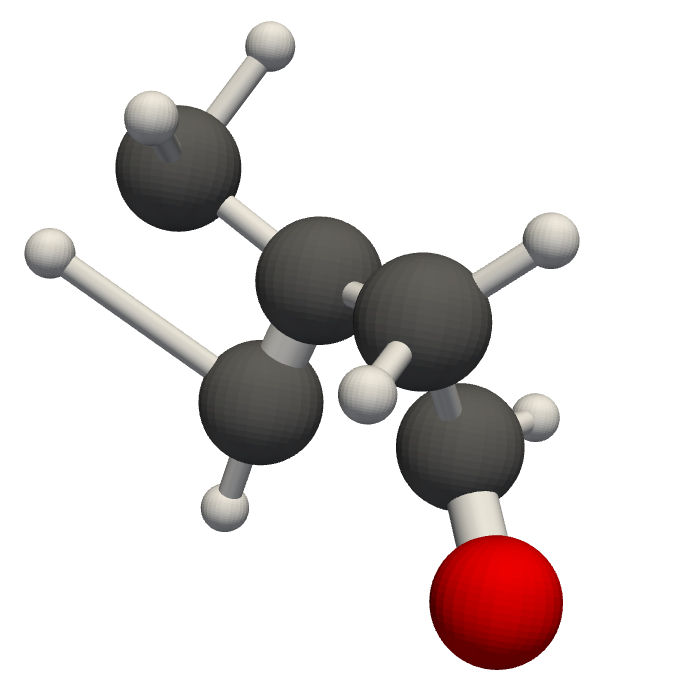}
      \includegraphics[width=.10\linewidth, keepaspectratio]{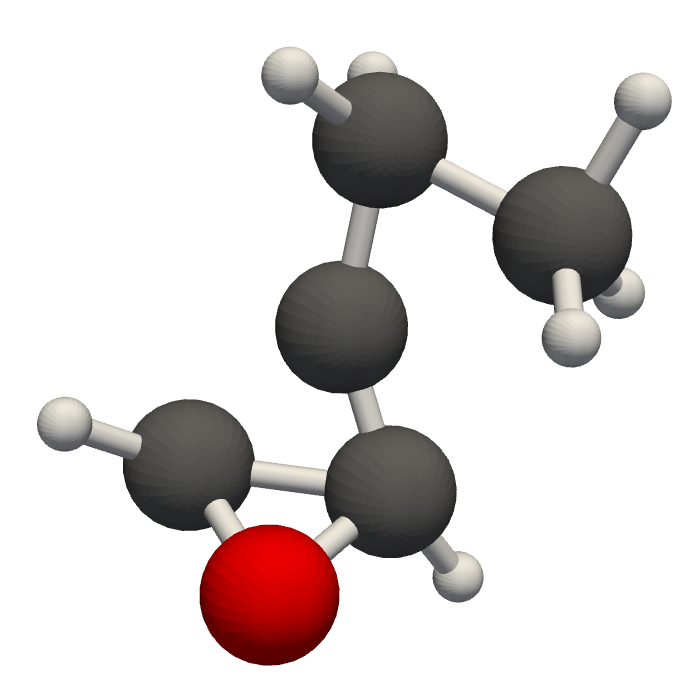}
      \newline
      \includegraphics[width=.10\linewidth, keepaspectratio]{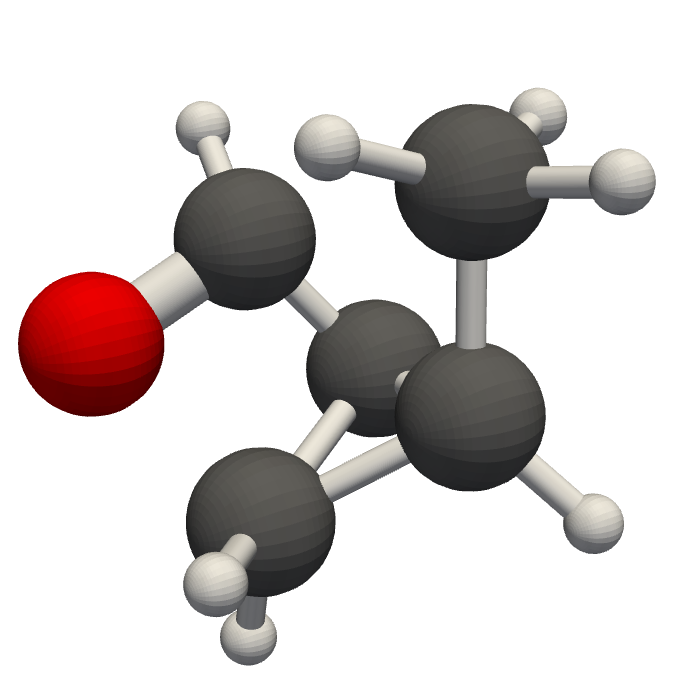}
      \includegraphics[width=.10\linewidth, keepaspectratio]{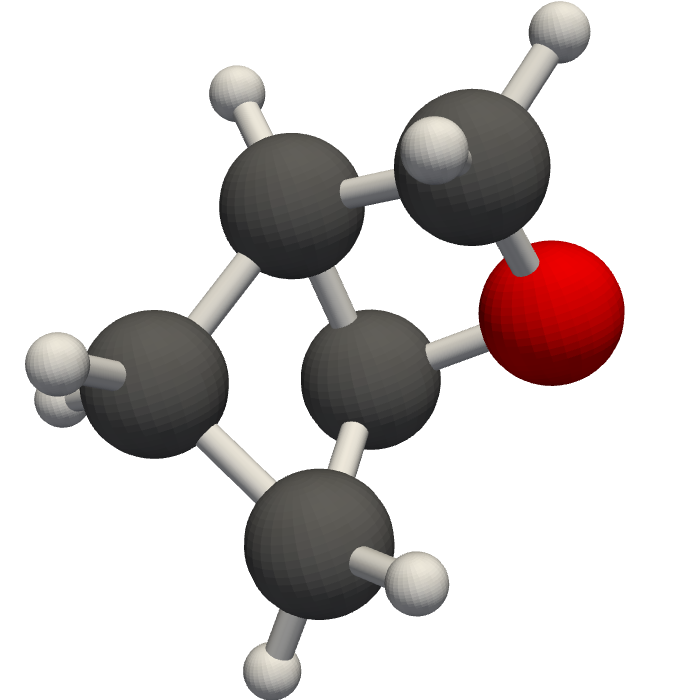}
      \includegraphics[width=.10\linewidth, keepaspectratio]{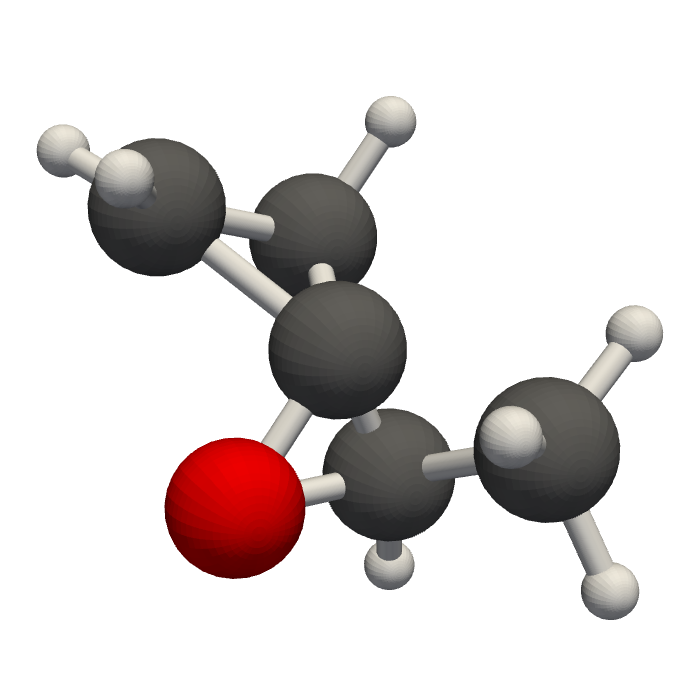}
      \includegraphics[width=.10\linewidth, keepaspectratio]{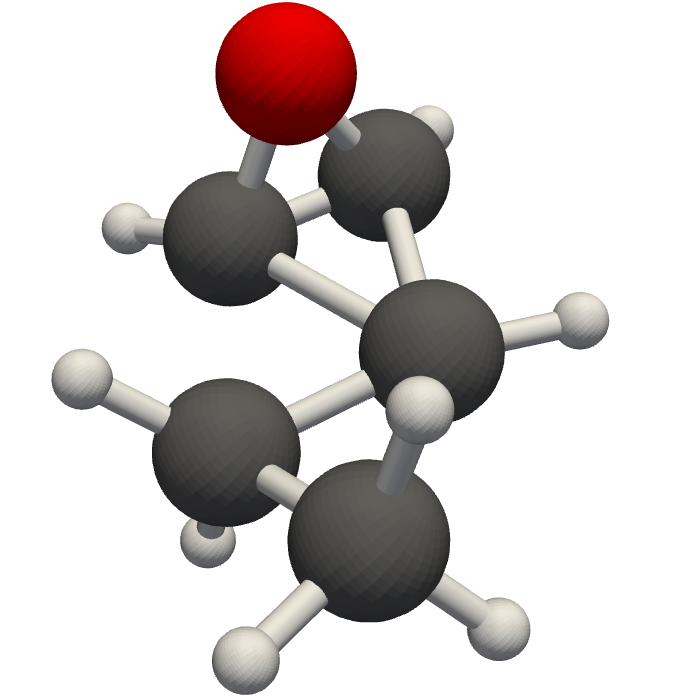}
      \includegraphics[width=.10\linewidth, keepaspectratio]{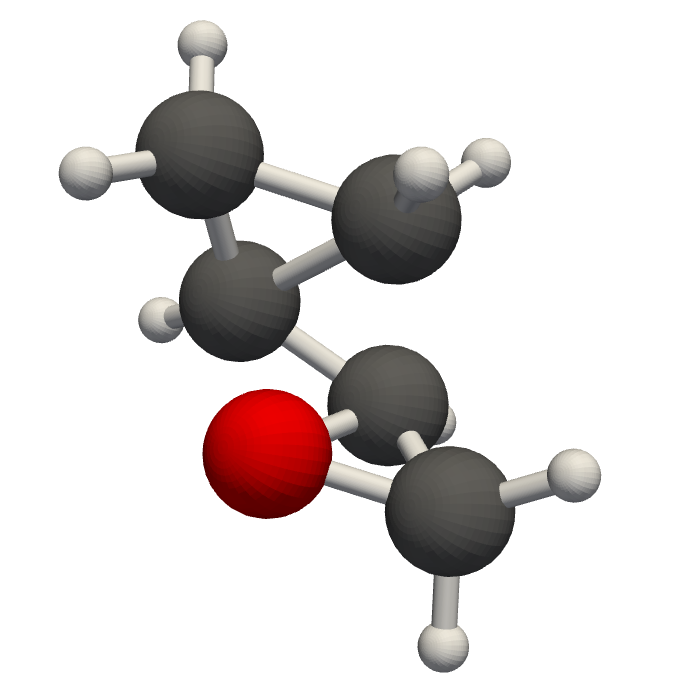}
      \includegraphics[width=.10\linewidth, keepaspectratio]{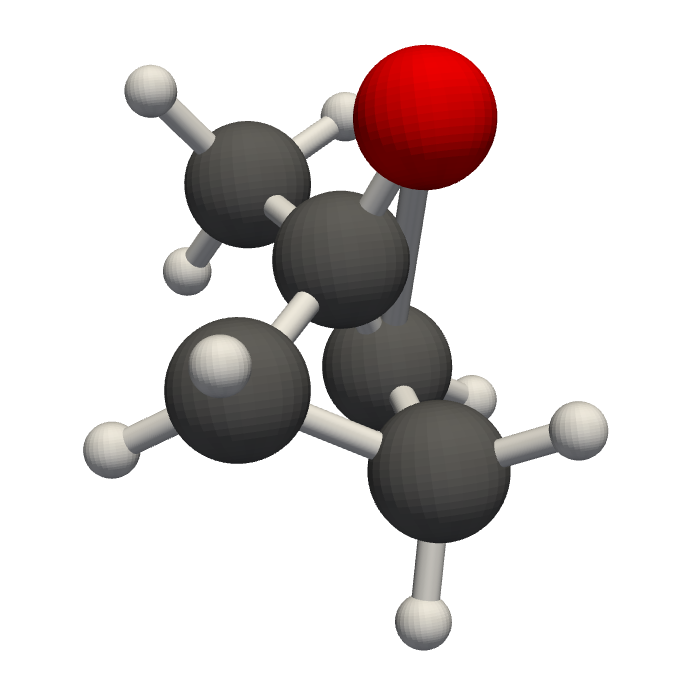}
      \includegraphics[width=.10\linewidth, keepaspectratio]{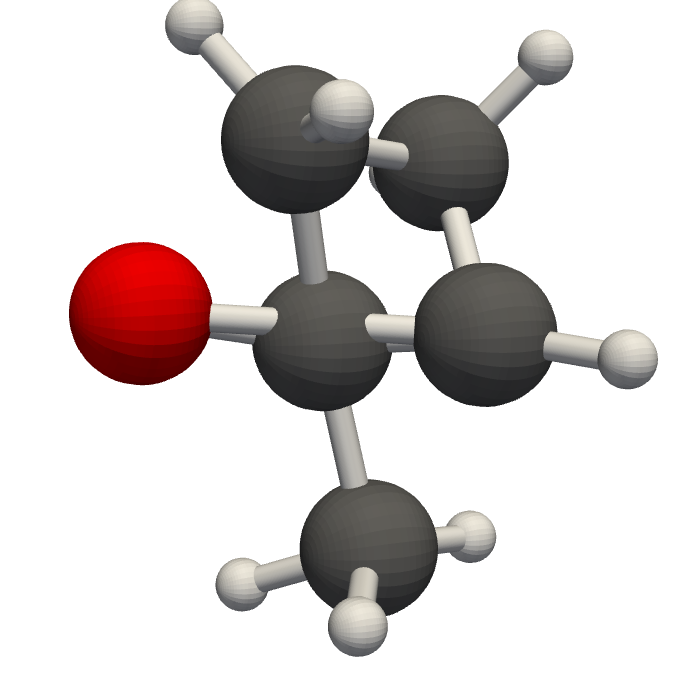}
      \includegraphics[width=.10\linewidth, keepaspectratio]{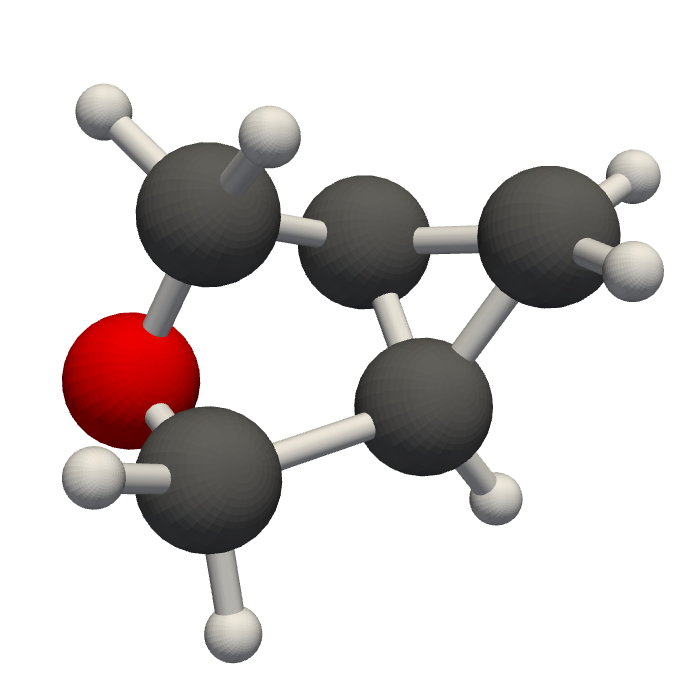}
      \newline
      \includegraphics[width=.10\linewidth, keepaspectratio]{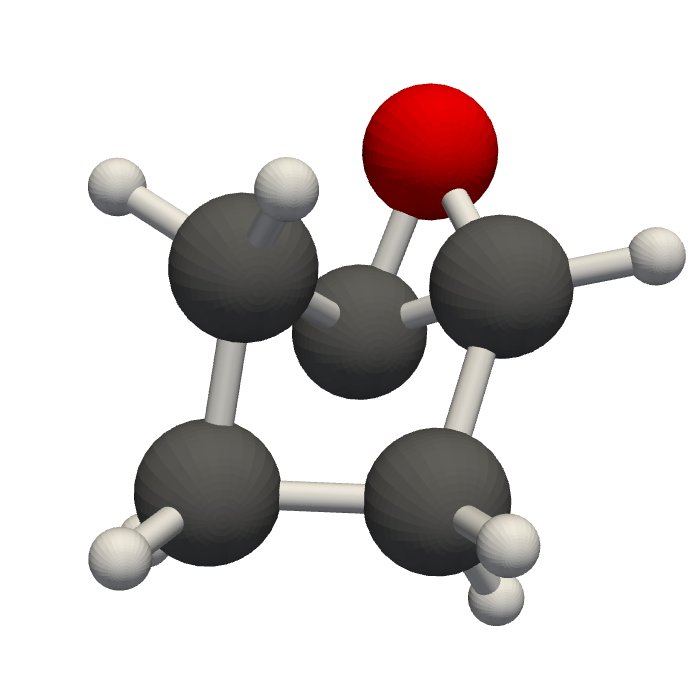}
      \includegraphics[width=.10\linewidth, keepaspectratio]{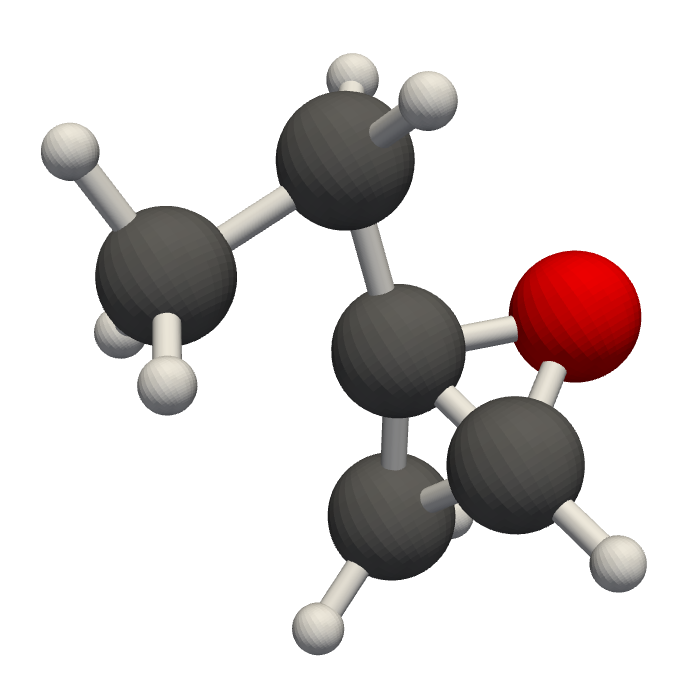}
      \includegraphics[width=.10\linewidth, keepaspectratio]{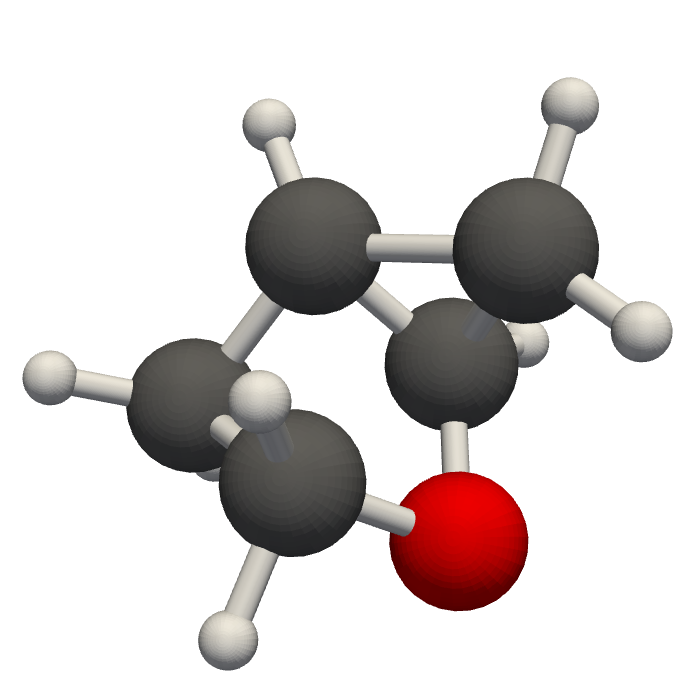}
      \includegraphics[width=.10\linewidth, keepaspectratio]{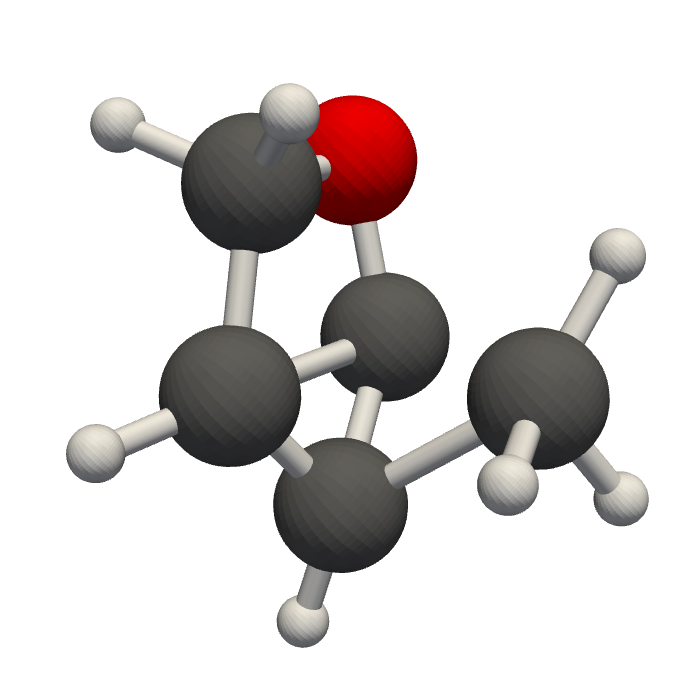}
      \includegraphics[width=.10\linewidth, keepaspectratio]{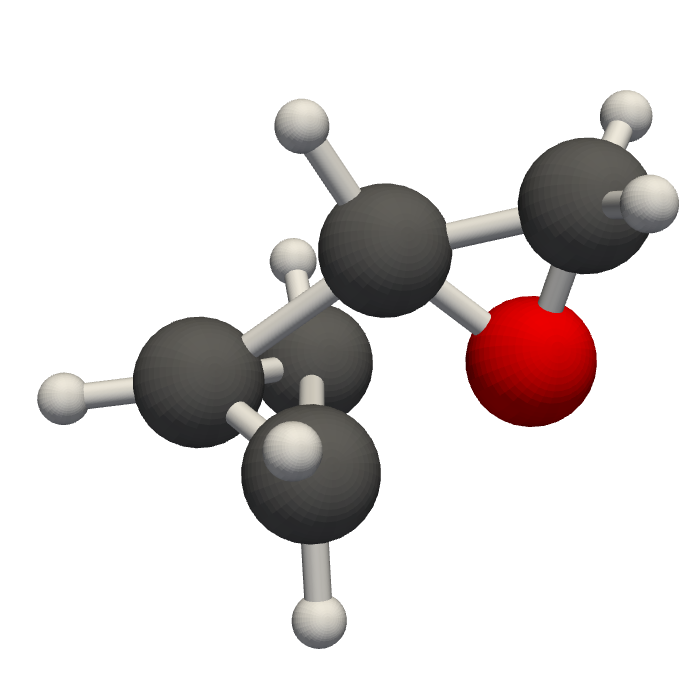}
      \includegraphics[width=.10\linewidth, keepaspectratio]{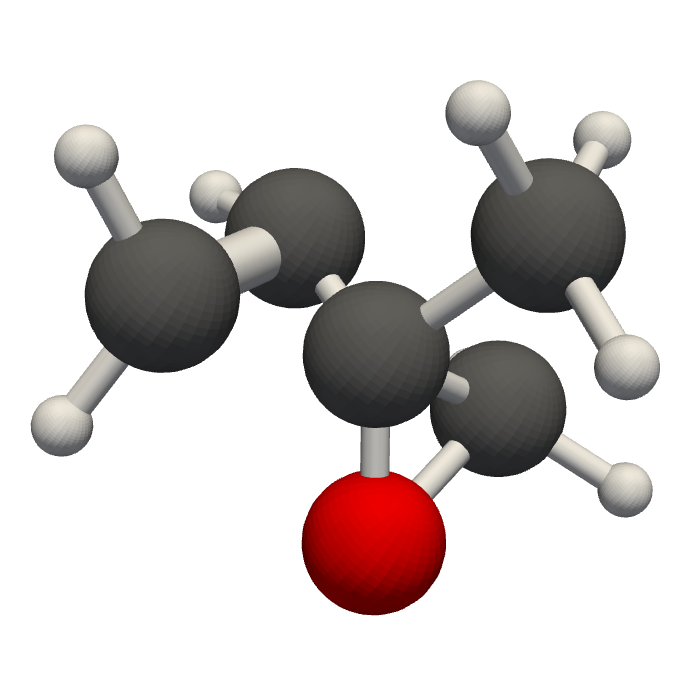}
      \includegraphics[width=.10\linewidth, keepaspectratio]{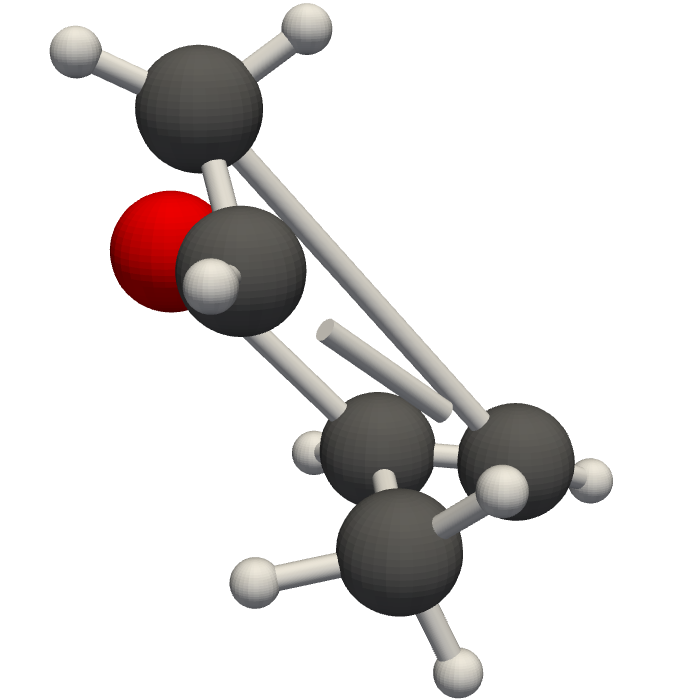}
      \includegraphics[width=.10\linewidth, keepaspectratio]{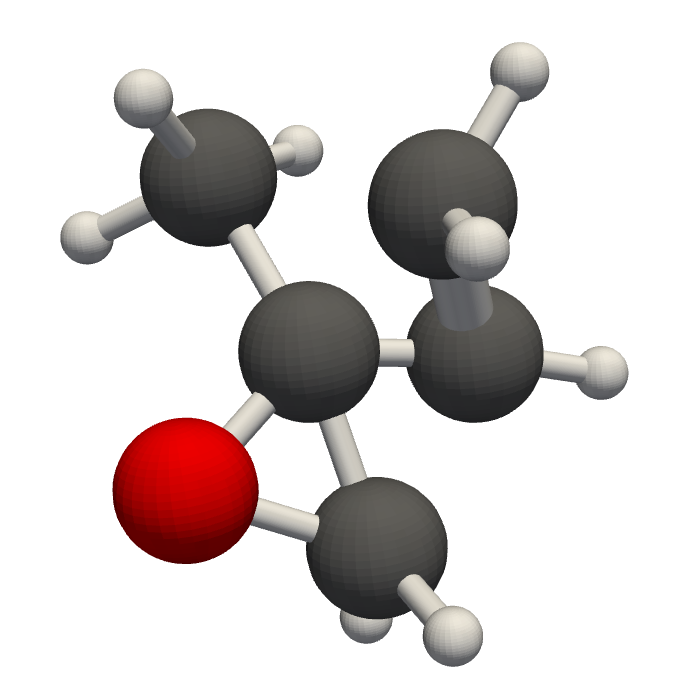}
      \newline
      \includegraphics[width=.10\linewidth, keepaspectratio]{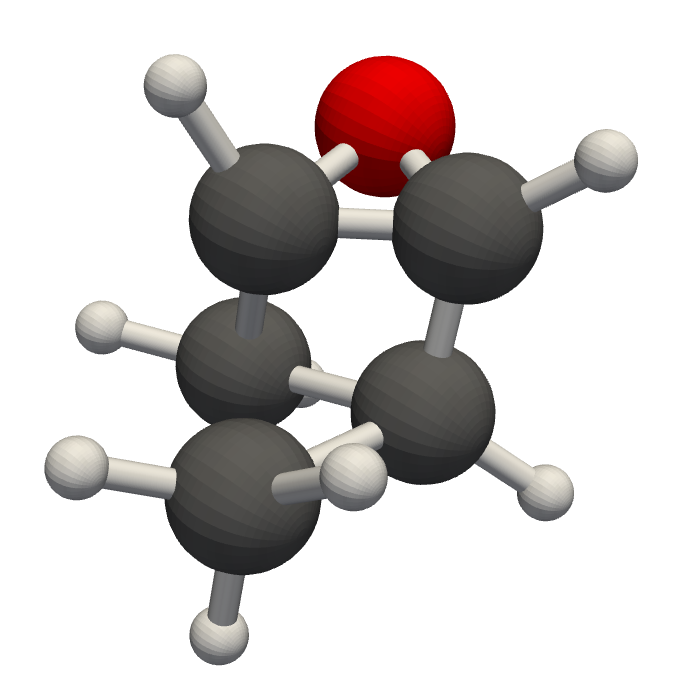}
      \includegraphics[width=.10\linewidth, keepaspectratio]{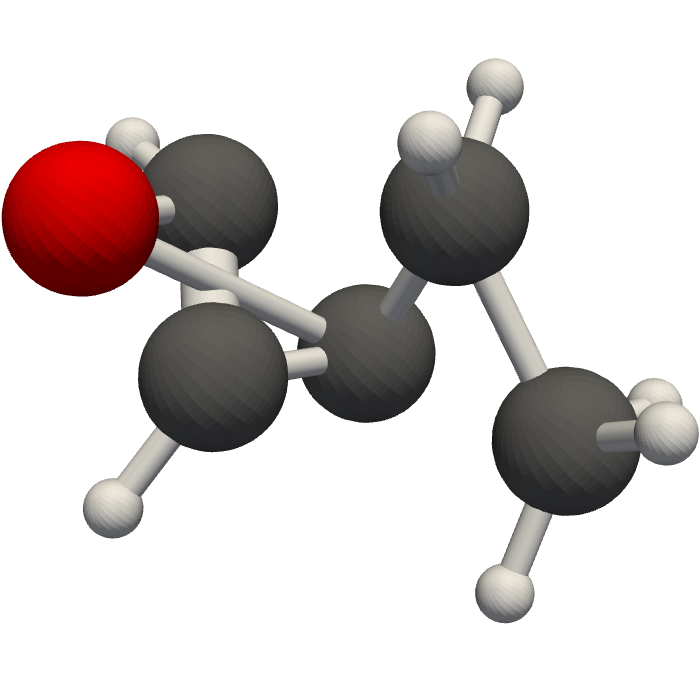}
      \includegraphics[width=.10\linewidth, keepaspectratio]{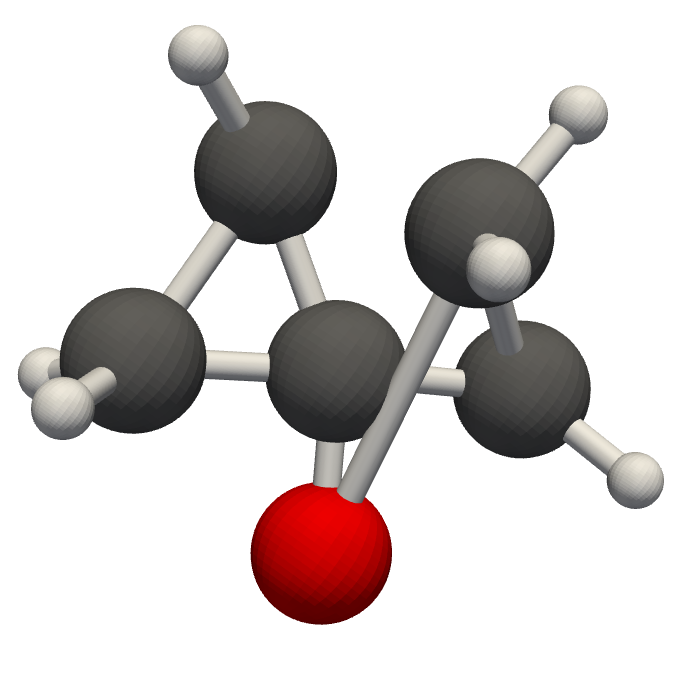}
      \includegraphics[width=.10\linewidth, keepaspectratio]{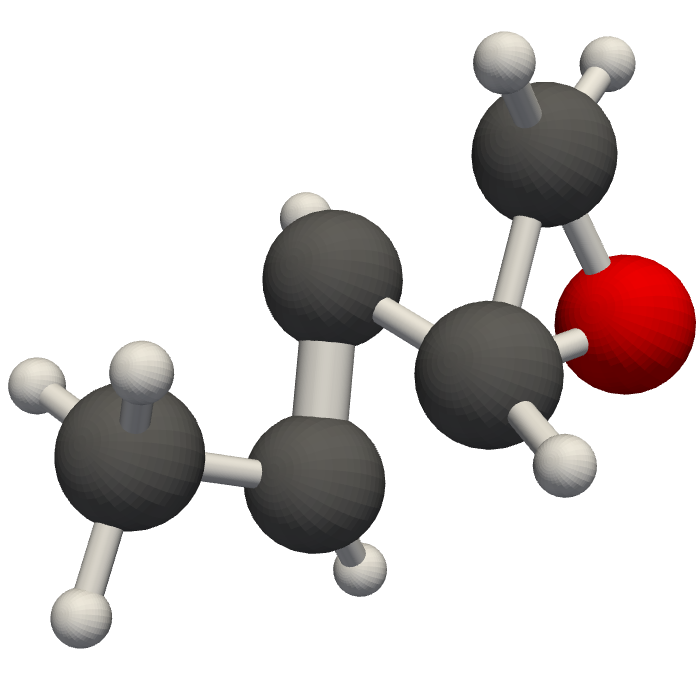}
      \includegraphics[width=.10\linewidth, keepaspectratio]{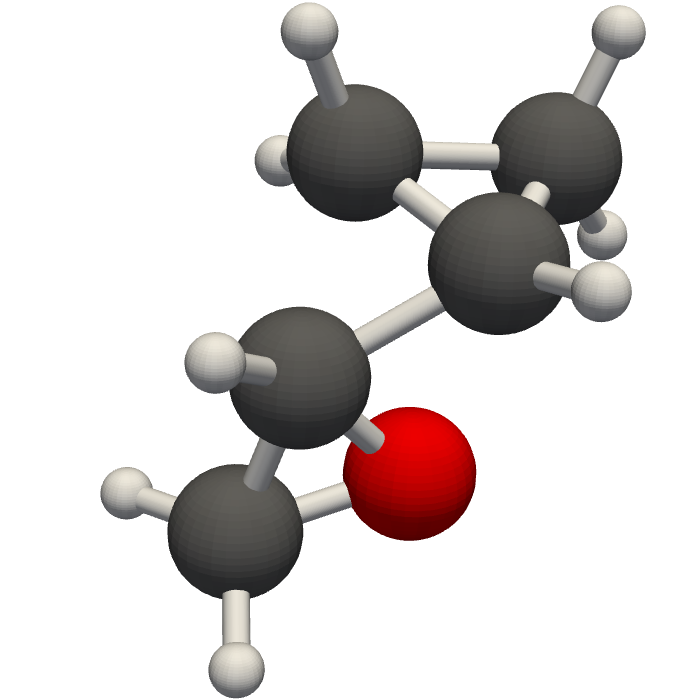}
      \includegraphics[width=.10\linewidth, keepaspectratio]{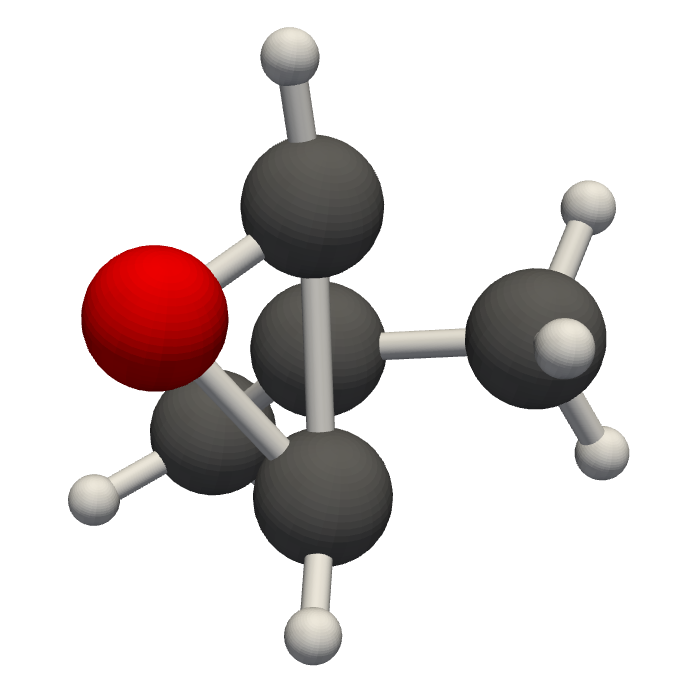}
      \includegraphics[width=.10\linewidth, keepaspectratio]{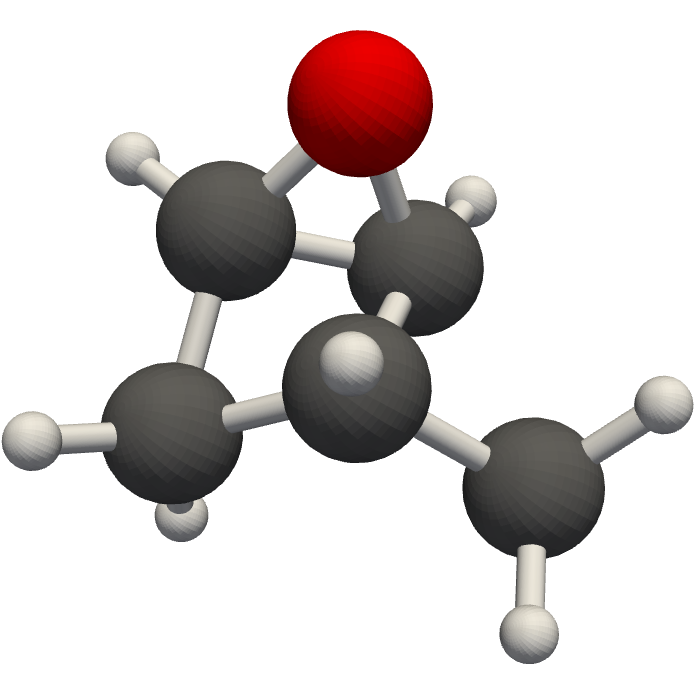}
      \includegraphics[width=.10\linewidth, keepaspectratio]{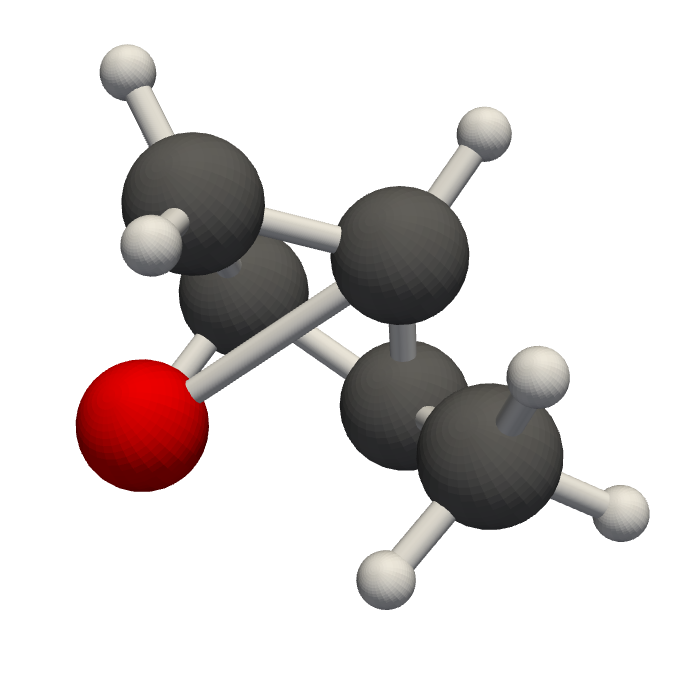}
      \newline
      \includegraphics[width=.10\linewidth, keepaspectratio]{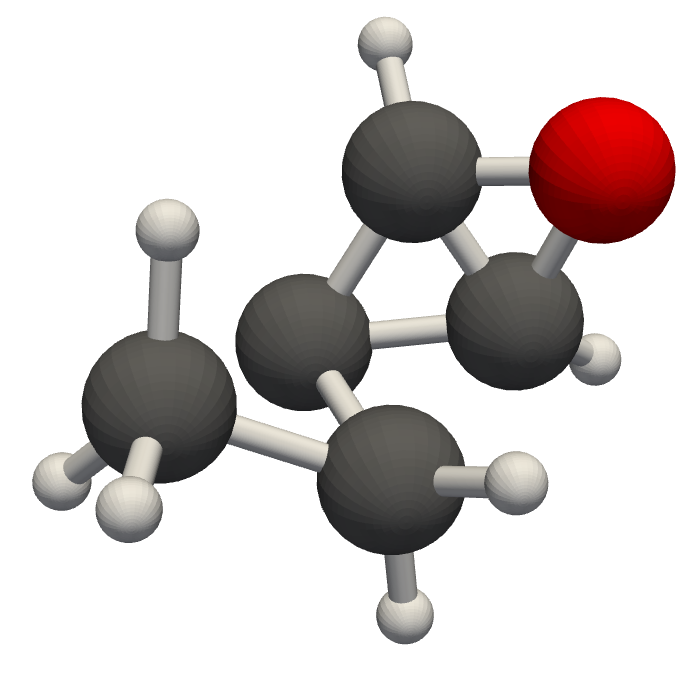}
      \includegraphics[width=.10\linewidth, keepaspectratio]{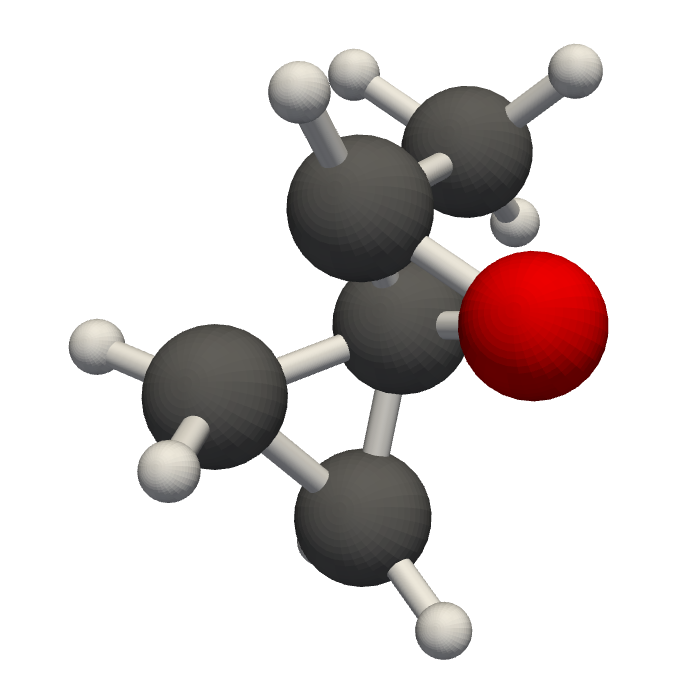}
      \includegraphics[width=.10\linewidth, keepaspectratio]{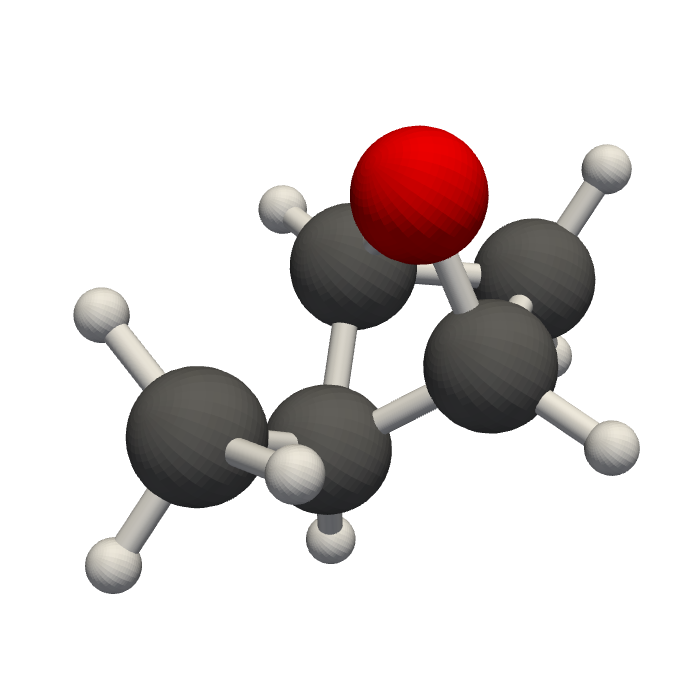}
      \includegraphics[width=.10\linewidth, keepaspectratio]{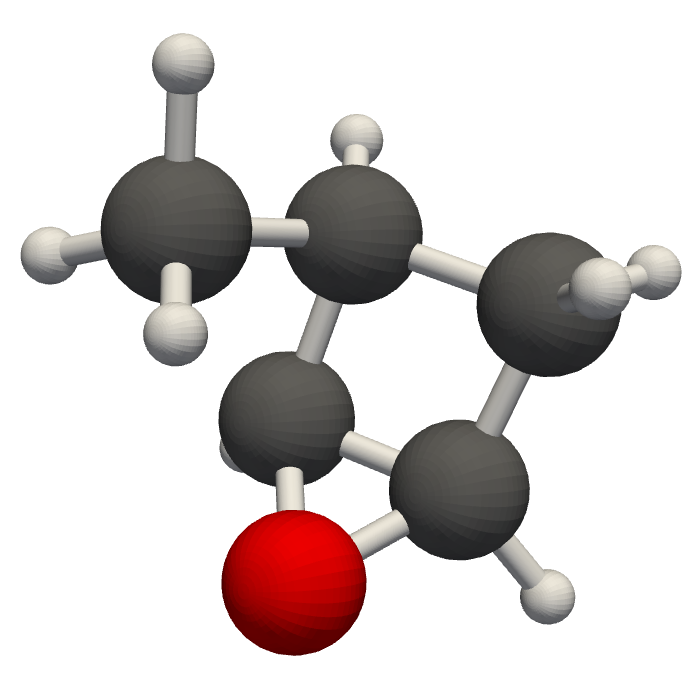}
      \includegraphics[width=.10\linewidth, keepaspectratio]{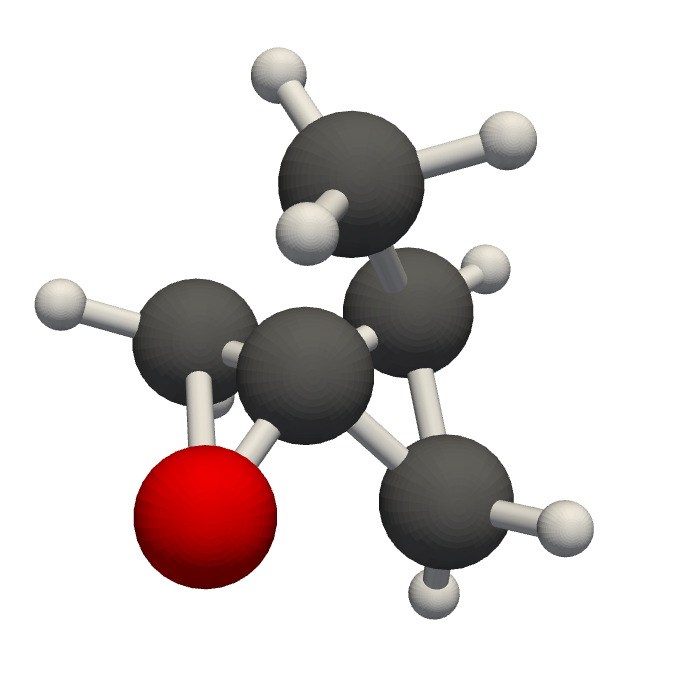}
      \includegraphics[width=.10\linewidth, keepaspectratio]{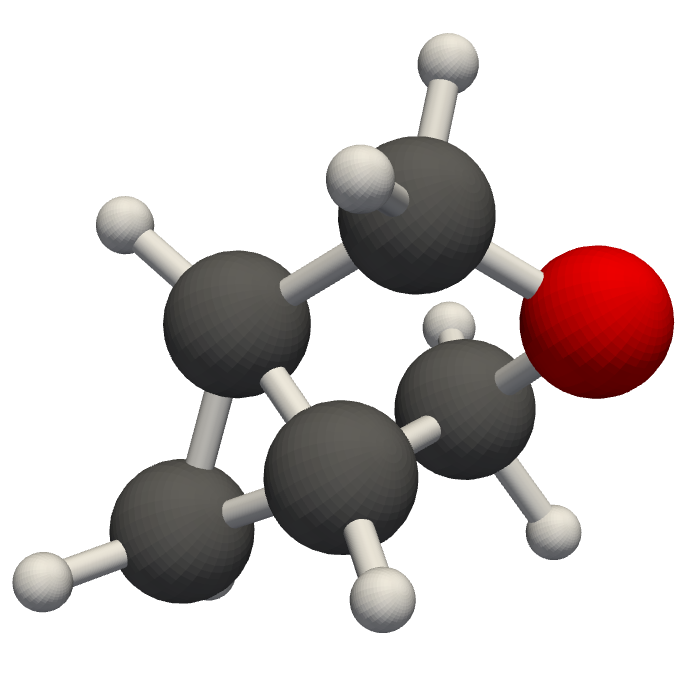}
      \includegraphics[width=.10\linewidth, keepaspectratio]{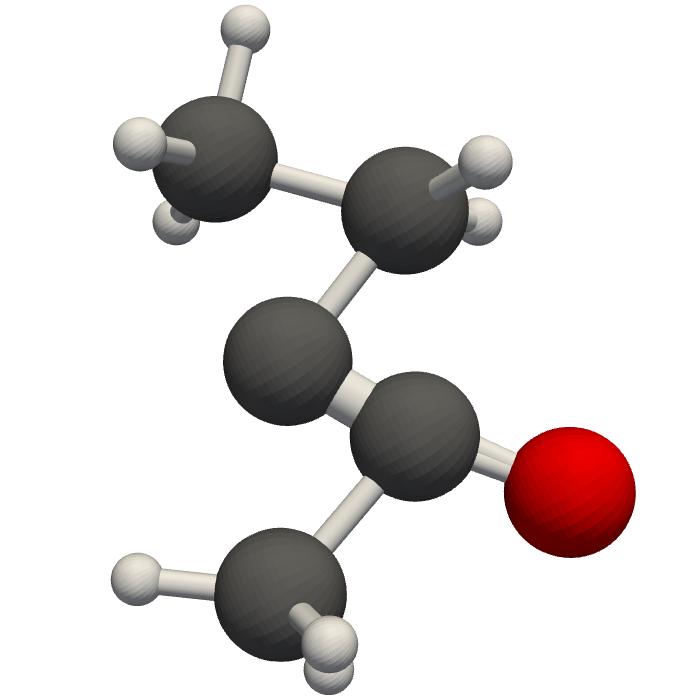}
      \includegraphics[width=.10\linewidth, keepaspectratio]{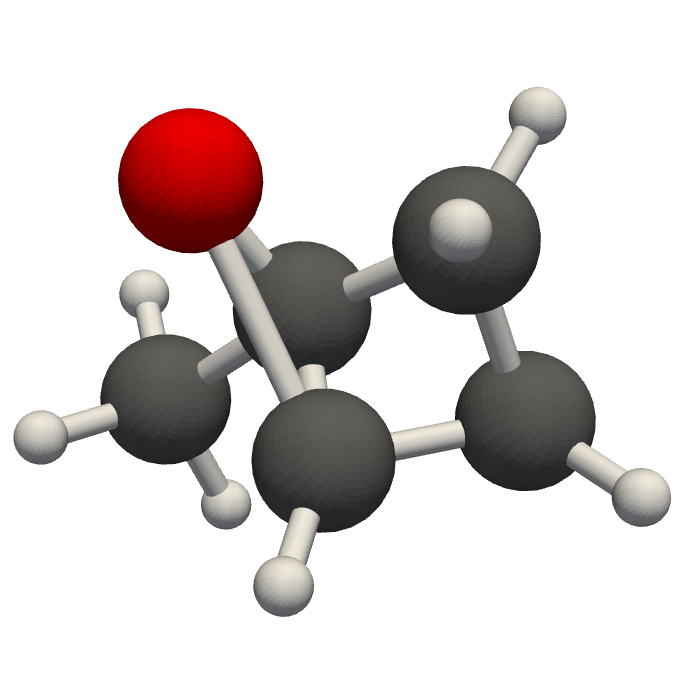}
      \newline
      \includegraphics[width=.10\linewidth, keepaspectratio]{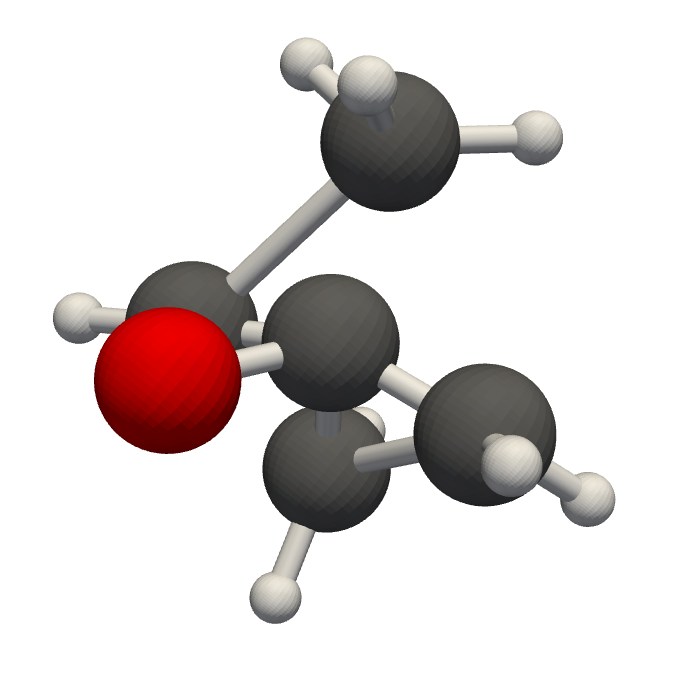}
      \includegraphics[width=.10\linewidth, keepaspectratio]{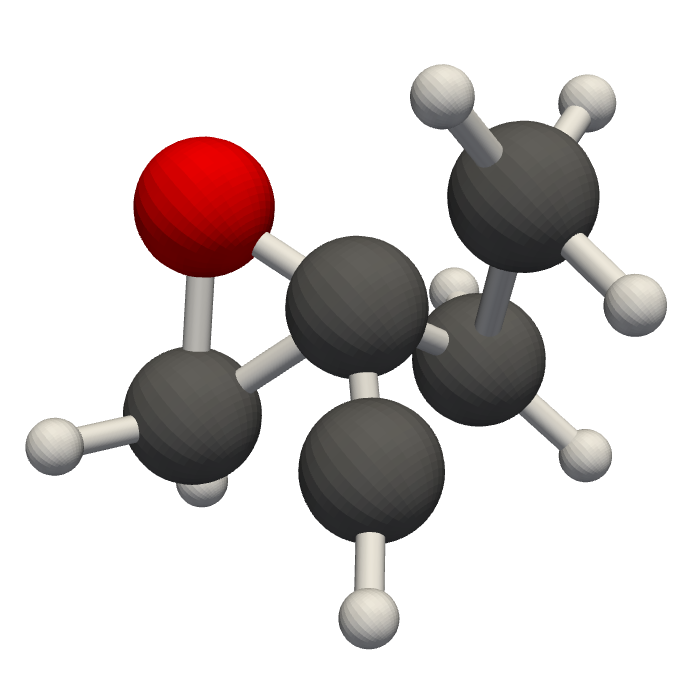}
      \includegraphics[width=.10\linewidth, keepaspectratio]{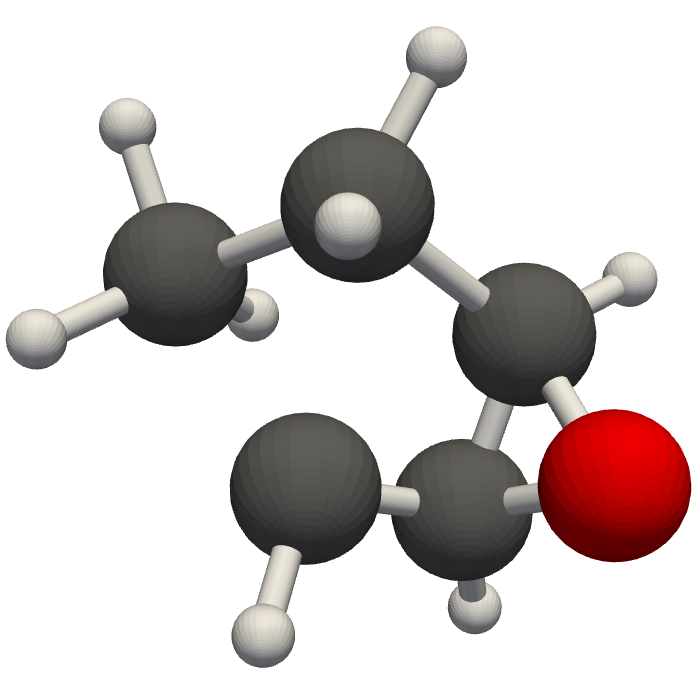}
      \includegraphics[width=.10\linewidth, keepaspectratio]{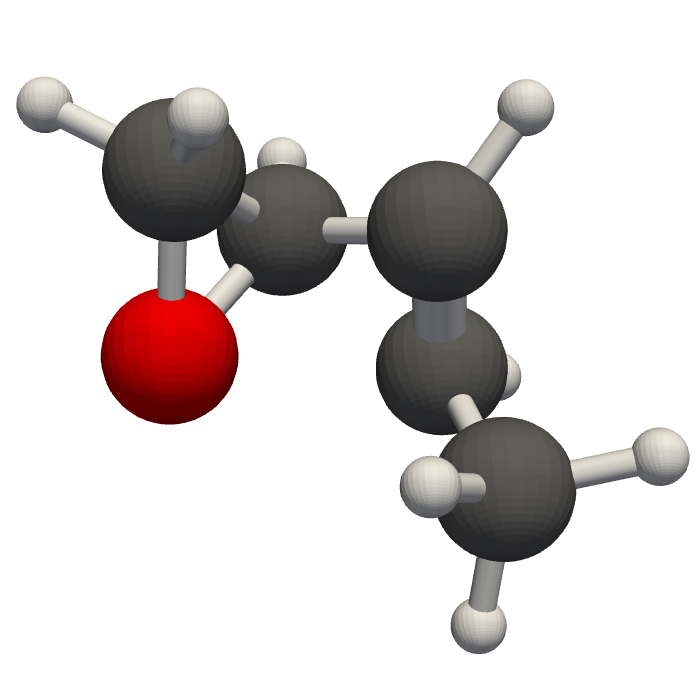}
      \includegraphics[width=.10\linewidth, keepaspectratio]{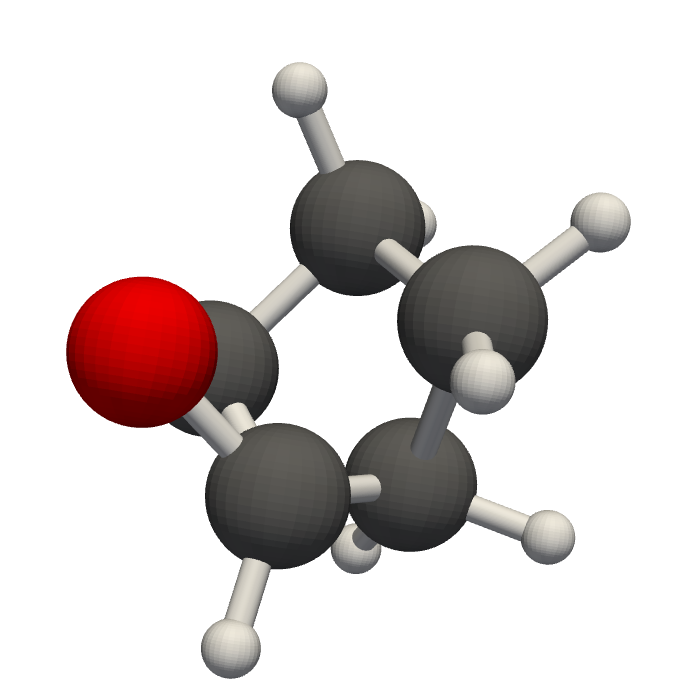}
      \includegraphics[width=.10\linewidth, keepaspectratio]{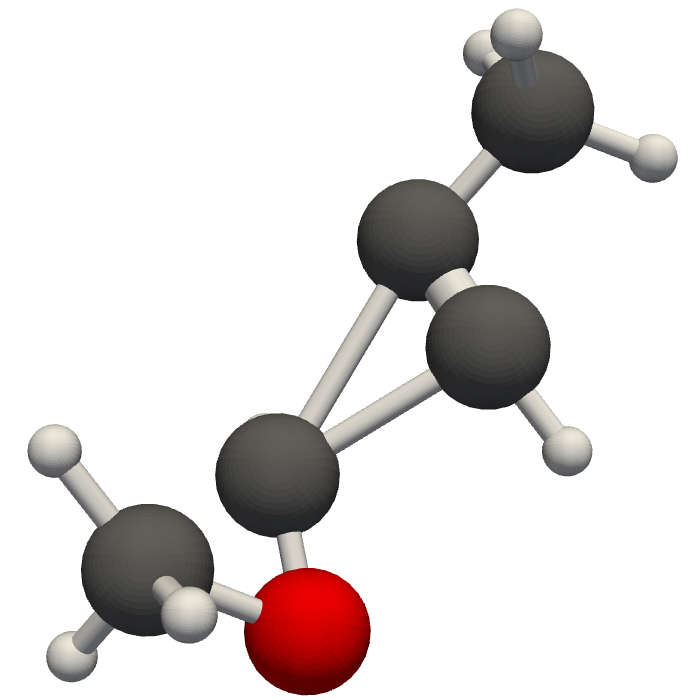}
      \includegraphics[width=.10\linewidth, keepaspectratio]{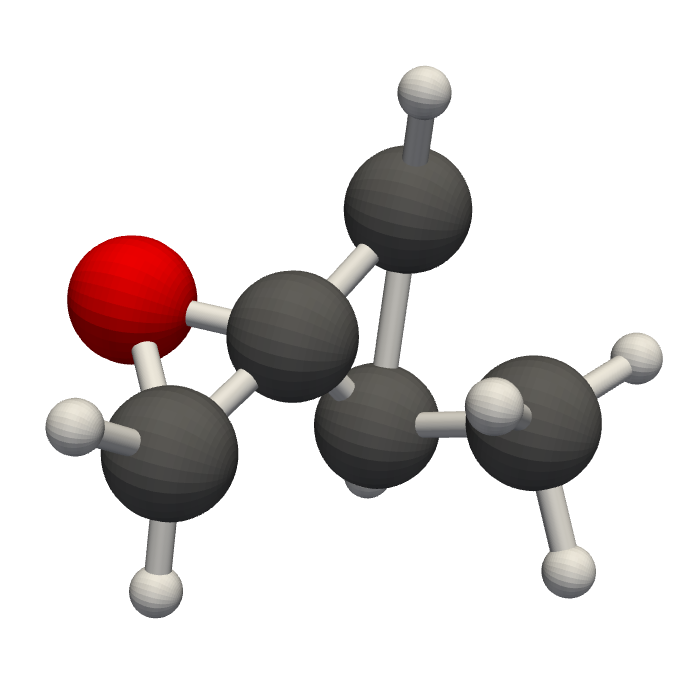}
      \includegraphics[width=.10\linewidth, keepaspectratio]{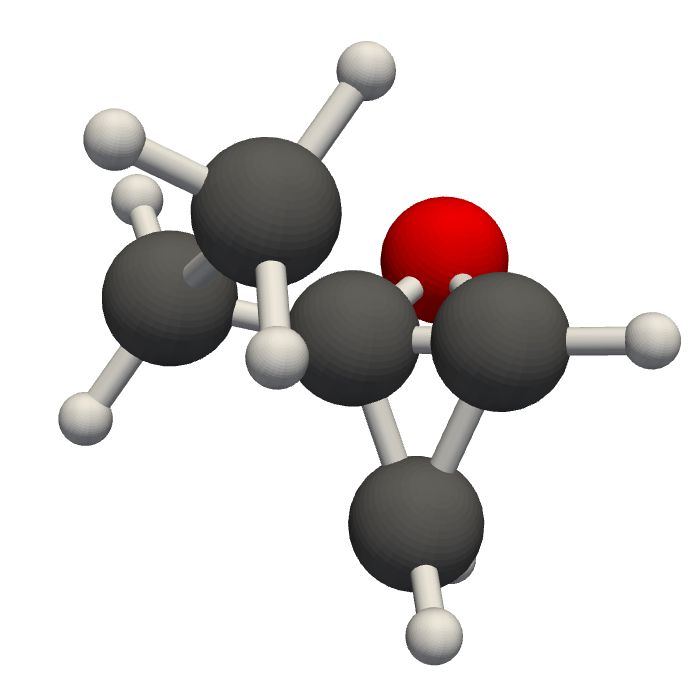}
      \newline
  \caption{Full batch of 64 molecules sampled from the trained EBM. Note that this number is larger then the total number of molecules trained on. }
  \label{fig:QM9-64}
\end{figure}
In \Cref{fig:QM9-64} we show a full batch of sampled molecules. We find that in addition to the ones presented in the main body of the paper, most other molecules are also anecdotally correct. However, we do also find some weird structures. These can be roughly categorized in two classes. The first class contains faulty molecules that result from the sampling procedure. If we label the rows by the letter A till H and the columns by the numbers 1 till 8, the molecule in C2 is one such example. The second class contains all molecules that show weird bonds. Molecule D7 and E3 are clear examples of this.

\subsection{Joint and Conditional Generation for FashionMNIST}
\label{app:fashion}

We use the FashionMNIST dataset \citep{xiao2017/online} for this experiment with no data augmentation \ie all the models are trained with images in their natural orientation. The test set is however preprocessed to contain images that have been randomly rotated using the $C_4$ symmetry group \ie rotation angles from the set $\{0^{\circ}, 90^{\circ}, 180^{\circ}, 270^{\circ}\}$. The equivariant energy based model is created using a $C_4$-steerable CNNs \citep{e2cnn} consisting for eight $C_4$-steerable convolutional layers followed by  a group pooling layer and a fully connected layer. The regular EBM consists of the same architecture wherein the individual layers are not equivariant \ie the steerable CNNs are replaced by normal CNN layers. We train all the models using the joint-energy model loss described in \Cref{eq:loss-JEM}. We train the models using both equivariant SVGD and vanilla SVGD resulting in three combinations of models, namely: Equivariant EBM trained with equivariant SVGD, equivariant EBM trained with vanilla SVGD and, regular EBM trained with vanilla SVGD. 

We trained each model for 300 epochs using the Adam optimizer \citep{kingma2014adam} using a batch size of 64. For equivariant SVGD, we evolve the dynamics for 1,000 time-steps per mini-batch but due to slow convergence we had to increase this to 3,000 time-steps for vanilla SVGD. We used an RBF kernel for SVGD with a bandwidth of 0.1 for vanilla SVGD and 0.005 for equivariant SVGD with a step size of 0.08. We also used persistence with a reset probability of 0.1.

\end{document}